\documentclass[10pt,journal,compsoc]{IEEEtran}
\usepackage{array}
\usepackage{textcomp}
\usepackage{stfloats}
\usepackage{verbatim}
\usepackage{cite}
\usepackage{amsmath,amssymb,amsfonts,amsthm}
\usepackage{algorithm}
\usepackage{algorithmicx}
\usepackage[noend]{algpseudocode}
\usepackage{graphicx}
\usepackage{textcomp}
\usepackage{xcolor}
\usepackage{url}
\usepackage{enumitem}
\usepackage{hyperref}
\usepackage{multicol}
\usepackage{multirow}
\usepackage{booktabs}
\usepackage{subfigure}
\usepackage{todonotes}
\usepackage{xcolor}
\usepackage{xspace}
\usepackage[normalem]{ulem}

\theoremstyle{plain}
\newtheorem{theorem}{Theorem}[section]
\newtheorem{proposition}[theorem]{Proposition}

\theoremstyle{definition}

\theoremstyle{remark}

\newcommand{\propositionname}{Prop.}

\newcommand{\rulesep}{\unskip\ \vrule\ }

\newcommand{\method}{\textsc{Diversify}\xspace}
\newcommand{\methodmcp}{{\method-MCP}\xspace}
\newcommand{\methodmal}{{\method-MAH}\xspace}
\hyphenation{op-tical net-works semi-conduc-tor IEEE-Xplore}
\usepackage{todonotes}
\usepackage{xcolor}

\begin{document}

\title{\method: A General Framework for Time Series Out-of-distribution Detection and Generalization}

\author{Wang~Lu,
Jindong~Wang,
Xinwei Sun,
Yiqiang Chen,~\IEEEmembership{Senior Member,~IEEE},
Xiangyang Ji,~\IEEEmembership{Member,~IEEE},
Qiang Yang,~\IEEEmembership{Fellow,~IEEE},
and~Xing~Xie,~\IEEEmembership{Fellow,~IEEE}

\IEEEcompsocitemizethanks{\IEEEcompsocthanksitem
Wang Lu and Xiangyang Ji are with Tsinghua University. E-mail: luw12@tsinghua.org.cn,  xyji@tsinghua.edu.cn.

\IEEEcompsocthanksitem Jindong Wang and Xing Xie are with Microsoft Research Asia. E-mail:
\{jindong.wang, xingx\}@microsoft.com.

\IEEEcompsocthanksitem Xinwei Sun is with Fudan University, Shanghai 200437, China. E-mail: sunxinwei@fudan.edu.cn.

\IEEEcompsocthanksitem Yiqiang Chen is with Beijing Key Laboratory of Mobile Computing and Pervasive Devices, CAS. E-mail: yqchen@ict.ac.cn.

\IEEEcompsocthanksitem Qiang Yang is with Hong Kong University of Science and Technology, Hong Kong. E-mail: qyang@cse.ust.hk.

\IEEEcompsocthanksitem Correspondence to: Jindong Wang (jindong.wang@microsoft.com).
}

\thanks{Manuscript received April 19, 2021; revised August 16, 2021.}}

\markboth{Journal of \LaTeX\ Class Files,~Vol.~14, No.~8, August~2021}%
{Shell \MakeLowercase{\textit{et al.}}: A Sample Article Using IEEEtran.cls for IEEE Journals}

\IEEEpubid{0000--0000/00\$00.00~\copyright~2021 IEEE}

\IEEEtitleabstractindextext{%
\begin{abstract}
Time series remains one of the most challenging modalities in machine learning research.
The out-of-distribution (OOD) detection and generalization on time series tend to suffer due to its non-stationary property, i.e., the distribution changes over time.
The \emph{dynamic} distributions inside time series pose great challenges to existing algorithms to identify invariant distributions since they mainly focus on the scenario where the domain information is given as prior knowledge.
In this paper, we attempt to exploit subdomains within a whole dataset to counteract issues induced by non-stationary for generalized representation learning.
We propose \textbf{\method}, a general framework, for OOD detection and generalization on dynamic distributions of time series.
\method takes an iterative process: it first obtains the \emph{`worst-case'} latent distribution scenario via adversarial training, then reduces the gap between these latent distributions.
We implement \method via combining existing OOD detection methods according to either extracted features or outputs of models for detection while we also directly utilize outputs for classification.
In addition, theoretical insights illustrate that \method is theoretically supported.
Extensive experiments are conducted on seven datasets with different OOD settings across gesture recognition, speech commands recognition, wearable stress and affect detection, and sensor-based human activity recognition.
Qualitative and quantitative results demonstrate that \method learns more generalized features and significantly outperforms other baselines.
\end{abstract}

\begin{IEEEkeywords}
Domain Generalization, Out-of-distribution, OOD Detection, Time Series, Representation Learning
\end{IEEEkeywords}}

\maketitle

\section{Introduction}
\IEEEPARstart{T}{ime} series analysis is one of the most challenging problems in machine learning~\cite{xiao2022dynamic,tang2021omni}. 
For years, there have been tremendous efforts for time series classification, such as hidden Markov models~\cite{fulcher2014highly}, RNN-based methods~\cite{husken2003recurrent}, and Transformer-based approaches~\cite{li2019enhancing,DBLP:conf/icml/DrouinMC22}.
Time series has wide applications in a wide spectrum of applications, e.g., industrial process~\cite{wang2023attention}, stock predicting~\cite{du2021adarnn}, and clinical and remote health~\cite{di2023explainable}.
There are several active research areas related to time series, including classification, forecasting, clustering, multivariate analysis, and high-frequency time series analysis.

The primary focus of this paper is to learn general representations for time series for better out-of-distribution (OOD) detection~\cite{yang2021generalized} and generalization~\cite{wang2021generalizing}.
The main difference between OOD detection and generalization is the different types of distribution shift that cause the problem: label shift and feature shift, according to variables that change in distributions.
On the one hand, label shift normally occurs in classes, which means unseen targets can contain classes not present in the training data, which is studied extensively under the name of anomaly detection and OOD detection\footnote{We noticed that there are some methods performing OOD detection under feature shift~\cite{lehner20223d,vidit2023clip}, but this paper mainly deals with OOD detection caused by label shift, following existing work~\cite{yang2021generalized,yang2022openood}.}; 
On the other hand, feature shift typically happens in inputs and the corresponding research field is OOD generalization that has been extensively studied~\cite{wang2021generalizing}.

OOD detection attempts to solve label shift and it has also attracted much attention recently~\cite{wang2022out,wang2022out2,ren2022out,fang2022out}.
OOD detection can be viewed as a special classification task that distinguishes between in-distribution (ID) classes and OOD classes~\cite{yang2021generalized,yang2022openood}.
For example, Hendrycks et al.~\cite{hendrycks2018deep} trained OOD detectors against an auxiliary dataset of outliers to improve deep OOD detection while BATS~\cite{zhu2022boosting} rectified the feature into its typical set and calculated the OOD score with the typical features to achieve reliable uncertainty estimation.
As for OOD generalization, existing approaches assume the existence of several predefined domains and then endeavor to bridge gaps among domains to learn domain-invariant representations that can be seamlessly transferred to the unseen target distribution.
The key in existing algorithms is to exploit the given domain information (i.e., domain index) to guide the domain-invariant representation learning.
For instance, GILE~\cite{qian2021latent} learned to automatically disentangle domain-agnostic and domain-specific features for generalizable sensor-based cross-person activity recognition while SDMix~\cite{lu2022semantic} provided a semantic data augmentation method to solve a similar problem.
These methods still heavily rely on domain information.

Can we directly adopt existing OOD detection and generalization algorithms for time series?
Unfortunately, the answer is no.
Non-stationary property~\cite{kuznetsov2015learning}, i.e. statistical features changing over time, bring new challenges to time series OOD detection and generalization.
Besides common spatial shifts in feature space, non-stationary property leads to another features shifts named temporal shifts that can occur in one class for the same subject at different times.
Temporal shifts are often \emph{latent}, \emph{dynamic}, and \emph{variable}, which makes it difficult to pre-split data and leads manual splits according to priors inaccurate.
Moreover, different from computer vision, few time-series datasets are well pre-partitioned.
To the best of our knowledge, no work studies OOD representation by considering temporal shifts for time series OOD detection and generalization simultaneously.

\begin{figure*}[t!]
    \centering
    \includegraphics[width=\textwidth]{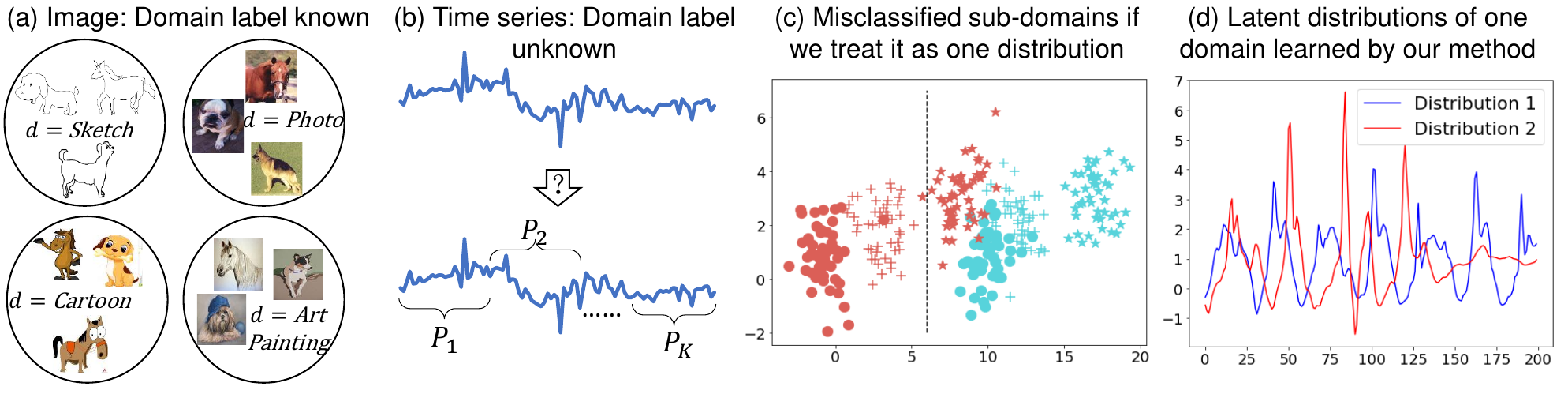}
    \vspace{-.2in}
    \caption{Illustration of \method: (a) Domain generalization for image data requires known domain labels. (b) Domain labels are unknown for time series. (c) If we treat the time series data as one single domain, the sub-domains are misclassified. Different colors and shapes correspond to different classes and domains. Axes represent data values. (d) Finally, our \method can effectively learn the latent distributions. X-axis represents data numbers while Y-axis represents values.}
    \label{fig1-intro}
\end{figure*}


\figurename~\ref{fig1-intro} shows an illustrative example.
OOD generalization in image classification often involves several domains whose domain labels are static and known (subfigure (a)), which can be employed to build OOD models.
However, \figurename~\ref{fig1-intro}(b) shows that in EMG time series data~\cite{lobov2018latent}, the distribution is changing dynamically over time and its domain information is \emph{unavailable}.
If no attention is paid to exploring its \emph{latent} distributions (i.e., sub-domains), predictions may fail in face of diverse sub-domain distributions (subfigure (c)).
This will dramatically impede existing OOD algorithms due to their reliance on domain information.




In this work, we propose \method, a general representation learning framework for time series OOD detection and generalization by characterizing the latent distributions inside the data.
The ultimate idea of \method is to characterize the latent distributions inside time series without domain labels, which can then be leveraged to perform OOD detection and generalization.
Concretely speaking, \method consists of a min-max adversarial game: on one hand, it learns to segment the time series data into several latent sub-domains by maximizing the segment-wise distribution gap to preserve diversities, i.e., the `worst-case' distribution scenario; on the other hand, it learns domain-invariant representations by reducing the distribution divergence between the obtained latent domains.
Such latent distributions naturally exist in time series, e.g., the activity data from multiple people follow different distributions.
Additionally, our experiments show that even the data of one person still has such diversity: it can also be split into several latent distributions.
After obtaining the characterized latent distributions, we can establish different implementations for downstream purposes such as OOD detection and generalization.
Specifically for OOD detection, we provide two implementations, where \methodmal utilizes the Mahalanobis distance with learned generalized representations while \methodmcp makes use of logit outputs of models.
With a simple softmax activation, \method can be easily used for generalization.
Since \method can provide better representations and better predictions, all proposed implementations can significantly outperform other methods.

This paper extends our previous paper published at ICLR 2023~\cite{wang2022out}, which focuses only on OOD generalization.
Compared to the previous version, this version makes substantial extensions by formulating \method as a general framework for both OOD detection and generalization, and then develops novel algorithms for OOD detection with more experiments and analysis.

To summarize, our contributions are four-fold: 
\begin{itemize}[leftmargin=*]
\setlength\itemsep{0em}
    \item \textbf{General framework:} We propose a general framework, \method, to solve OOD detection and generalization simultaneously. \method can identify the latent distributions and learn generalized representations. We provide the theoretical insights behind \method to analyze its design philosophy. 
    \item \textbf{Specific implementations:} 
    For detection, we provide two implementations, \methodmal and \methodmcp. 
    For classification, we directly utilize outputs of \method with softmax activation.
    \item \textbf{Superior performance and insightful results:} Qualitative and quantitative results demonstrate the superiority of \method in several challenging scenarios: difficult tasks, significantly diverse datasets, and limited data. More importantly, \method can successfully characterize the latent distributions within a time series dataset.
    \item \textbf{Extensibility:} Besides implementations proposed in the paper, \method is an extensible framework, which means it can implement with more methods, e.g. ODIN~\cite{liang2018enhancing}. Thereby, \method can be applied to more applications and be further improved with more latest methods.
\end{itemize}

The remainder of this paper is organized as follows. 
We will introduce related work in \sectionautorefname~\ref{sec-rela} and elaborate on the proposed method and offer a clear summary in \sectionautorefname~\ref{sec-method}.  
Then, the experimental implementations and results are presented to demonstrate the superiority of \method for detection and generalization in \sectionautorefname~\ref{sec-exp-detect} and \sectionautorefname~\ref{sec-exp-cls} respectively while experimental analyses are provided in \sectionautorefname~\ref{sec-analay-exp}.
\sectionautorefname~\ref{sec-limit} provides some limitations and discussions.
Finally, conclusions and some possible future directions can be found in \sectionautorefname~\ref{sec-conl}. 

\section{Related Work}
\label{sec-rela}
\subsection{Time series analysis}
Time series classification is a challenging problem.
Existing researches mainly focus on the modeling of temporal relations using either RNN-based methods~\cite{dennis2019shallow} or the recently-proposed Transformer architecture~\cite{DBLP:conf/icml/DrouinMC22}.
MiniRocket~\cite{dempster2021minirocket} transformed input time series using
convolutional kernels and used the transformed features to train a linear classifier. 
MI-ShaRNN~\cite{dennis2019shallow} was proposed to induce long-term dependencies, and yet
admit parallelization. 
In this architecture, the first layer split inputs and ran several independent RNNs while the second layer consumed the outputs using a second RNN.
PatchTST~\cite{nie2022time} segmented time series into subseries-level patches which served as input tokens to Transformer and utilized channel-independence where each channel contained a single univariate time series that shared the same embedding and Transformer weights across all the series.
More related work and details can be found in the following surveys~\cite{ismail2019deep,benidis2022deep,wen2022transformers}.
However, few studies pay attention to latent subdomains in time series to learn generalized features for OOD detection and generalization.

\subsection{Domain/OOD generalization}
Transfer learning~\cite{pan2009survey,silver2013lifelong} trains a model on a source task and aims to enhance the performance of the model on a different but related target task.
Domain generalization (DG)/out-of-distribution generalization can be viewed as a branch of transfer learning but aims at learning models that can be generalized to unseen targets whose distribution can be different from training~\cite{wang2021generalizing,zhou2022domain}.
Existing methods can be categorized into three groups, namely: data manipulation~\cite{shankar2018generalizing,xu2021fourier}, representation learning~\cite{li2018deep,zhang2021deep}, and learning strategy~\cite{rame2022fishr,kim2021selfreg}.
Data manipulation mainly focuses on manipulating the inputs and generating more diversified data or representations to enhance models' generalizability.
CROSSGRAD\cite{shankar2018generalizing} utilized a Bayesian network to model dependence between label, domain, and input instance, and it parallelly a label and a domain classifier on examples perturbed by loss gradients of each other's objectives.
FACT~\cite{xu2021fourier} assumed that the Fourier phase information contained high-level semantics and was not easily by domain shifts.
And thereby, it utilized an amplitude mix that linearly interpolated between the amplitude spectrum of two images to force the model to capture phase information.
Representation learning is the most popular category in domain generalization and it can be further split into two sub-groups, domain-invariant representation learning and feature disentanglement. 
CIAN~\cite{li2018deep}, a conditional invariant adversarial network, learned class-wise adversarial network for DG.
StableNet~\cite{zhang2021deep} utilized a novel nonlinear feature decorrelation approach based on Random Fourier features with linear computational complexity and it could effectively partial out the irrelevant features and leverage truly relevant features for classification.
Learning strategy focuses on exploiting special learning strategy to promote generalization capability.
Fishr~\cite{rame2022fishr} introduced a new regularization and enforced domain invariance in the space of the gradients via the gradient covariance similar to CORAL~\cite{sun2016deep}.
SelfReg~\cite{kim2021selfreg} proposed a new regularization method for domain generalization based on contrastive learning and it only utilized positive data pairs.
More details can be found in the survey~\cite{wang2021generalizing}.
We will introduce more related work and illustrate their difference from ours.

Most domain/OOD generalization methods typically assume the availability of domain labels for training~\cite{peng2019domain,zhang2022towards}.
Specifically, \cite{Matsuura2020DomainGU} also studied DG without domain labels by clustering with the style features for images, which is not applied to time series and is not end-to-end trainable.
Single domain generalization is similar to our problem setting which also involves one training domain~\cite{fan2021adversarially,li2021progressive,wang2021learning,zhu2021crossmatch}.
However, they treated the single domain as one distribution and did not explore latent distributions.
Multi-domain learning is similar to DG which also trains on multiple domains but also tests on training distributions.
\cite{deecke2022visual} proposed sparse latent adapters to learn from unknown domain labels, but their work does not consider the min-max worst-case distribution scenario and optimization.
In domain adaptation, \cite{wang2020continuously} proposed the notion of domain index and further used variational models to learn them~\cite{xu2023domain}, but took a different modeling methodology since they did not consider min-max optimization.
Mixture models~\cite{rasmussen1999infinite} are models representing the presence of subpopulations within an overall population, e.g., Gaussian mixture models.
Our approach has a similar formulation but does not use generative models.
Subpopulation shift is a new setting~\cite{koh2021wilds} that refers to the case where the training and test domains overlap, but their relative proportions differ.
Our problem does not belong to this setting since we assume that these distributions do not overlap.
Distributionally robust optimization~\cite{delage2010distributionally} shares a similar paradigm with our work, whose paradigm is also to seek a distribution that has the worst performance within a range of the raw distribution.
GroupDRO~\cite{sagawa2019distributionally} studied DRO at a group level.
However, we study the internal distribution shift instead of seeking a global distribution close to the original one.
To our best knowledge, there is only one recent work \cite{du2021adarnn} that studied time series from the distribution level.
However, AdaRNN is a two-stage non-differential method that is tailored for RNN and it is mainly designed for prediction.

\subsection{OOD detection}
OOD detection has been extensively studied with a plethora of methods developed in the past few years.
In simple terms, OOD detection aims to find OOD samples that belong to the classes not present in training data.
Existing methods can be categorized into several groups, post-hoc methods, training-time regularization, training with Outlier Exposure, and some other methods.
Comprehensive surveys can be found in \cite{yang2021generalized,yang2022openood}.

Post-hoc~\cite{liang2018enhancing,song2022rankfeat,djurisic2022extremely} is one of the most popular directions for its simplicity and extensibility.
Odin~\cite{liang2018enhancing} utilized temperature scaling and added small perturbations to the input to separate the softmax score distributions between ID and OOD images for more effective detection while RankFeat~\cite{song2022rankfeat} removed the rank-1 matrix composed of the largest singular value and the associated singular vectors from the high-level feature.
Another post-hoc method, ASH~\cite{djurisic2022extremely} removed  a large portion of a sample’s activation at a late layer and simply adjusted the rest at inference time.
Compared to post-hoc methods, training-time regularization methods require training with custom-designed goals~\cite{ming2022exploit,tao2022non}.
CIDER~\cite{ming2022exploit} jointly optimized a dispersion loss and a compactness loss to promote strong ID-OOD separability for exploiting better hyperspherical embeddings while NPOS~\cite{tao2022non} generated artificial OOD training data and facilitated learning a reliable decision boundary between ID and OOD data.
Training with Outlier Exposure~\cite{yu2019unsupervised,yang2021semantically} makes use of a set of collected OOD samples during training to learn the discrepancy between ID and OOD.
\cite{yu2019unsupervised} proposed a two-head deep convolutional neural network (CNN)
and maximized the discrepancy between the two classifiers to detect OOD inputs while \cite{yang2021semantically} proposed unsupervised dual grouping (UDG) to leverage an external unlabeled set for the joint modeling of ID and OOD data.
Few studies consider feature shifts in OOD detection.
\cite{yang2023full} introduced full-spectrum OOD detection and took into account both covariate shift and semantic shift but it mainly focused on computer vision.
For time series OOD detection, there still lacks effective techniques to address both two types of distribution shifts simultaneously.

\section{Methodology}
\label{sec-method}
\subsection{Problem Formulation}
A time-series training dataset $\mathcal{D}^{tr}$ can be often pre-processed using sliding window\footnote{Sliding window is a common technique to segment one time series data into fixed-size windows. Each window is a minimum instance. We focus on fixed-size inputs for its popularity in time series~\cite{das1998rule}.} to $N$ inputs: $\mathcal{D}^{tr}=\{(\mathbf{x}_i, y_i)\}_{i=1}^N$, where $\mathbf{x}_i \in \mathcal{X} \subset \mathbb{R}^p$ is the $p$-dimensional instance  and $y_i \in \mathcal{Y} = \{1, \ldots, C\}$ is its label.
We use $\mathbb{P}^{tr}(\mathbf{x},y)$ on $\mathcal{X} \times \mathcal{Y}$ to denote the joint distribution of the training dataset.
Our goal is to learn a generalized model from $\mathcal{D}^{tr}$ to predict well on an \emph{unseen} target dataset, $\mathcal{D}^{te}$, which is inaccessible in training.
In our problem, the training and test datasets have the same input but different distributions, i.e., $\mathcal{X}^{tr} = \mathcal{X}^{te}$, but $\mathbb{P}^{tr}(\mathbf{x},y) \neq \mathbb{P}^{te}(\mathbf{x},y)$.

\textbf{OOD detection:}
The testing datasets contain more classes than training datasets, i.e. $\mathcal{Y}^{tr} \subset \mathcal{Y}^{te}$.
We denote the classes present in the training datasets as ID classes, $C_{ID}=\{1, 2, \cdots, C_n\}$, while the rest classes that only exist in the testing datasets are the OOD class, $C_{OOD}=\{C_n+1\}$.
We aim to train a model $h$ from $\mathcal{D}^{tr}$ to detect OOD classes and achieve minimum error on $\mathcal{D}^{te}$for ID classes.

\textbf{OOD generalization:}
the training and test datasets share the same output space, i.e. $\mathcal{Y}^{tr} = \mathcal{Y}^{te}$.
We aim to train a model $h$ from $\mathcal{D}^{tr}$ to achieve minimum error on $\mathcal{D}^{te}$.

\subsection{Motivation}

\textbf{What are domain and feature distribution shifts in time series?}
Time series may consist of several unknown latent distributions (domains).
For instance, data collected by sensors of three persons may belong to two different distributions due to their dissimilarities.
This can be termed a spatial distribution shift.
Surprisingly, we even find temporal distribution shifts that distributions of one person can also change at different times.
Those shifts widely exist in time series, as suggested by \cite{zhang2021robust, ragab2022conditional}.
\figurename~\ref{fig1-categ} gives an example where \figurename~\ref{fig1-categ}(a) indicates that the distribution in EMG time series data~\cite{lobov2018latent} is changing dynamically over time and its domain information is \emph{unavailable} while \figurename~\ref{fig1-categ}(b) shows that the sensor data collected during walking follow different distributions across different persons. 
In addition, \figurename~\ref{fig1-categ}(c) provides an example of label shifts, where standing, running, and cycling can be in-distribution (ID) classes present in the training data while falling down is an OOD class only present in the targets.

\textbf{Latent domain characterization is indispensable for OOD detection and generalization.}
Due to the non-stationary property, naive approaches that treat time series as one distribution fail to capture domain-invariant (OOD) features since they ignore the diversities inside the dataset.
In \figurename~\ref{fig1-intro}(d), we assume the training domain contains two sub-domains (circle and plus points).
Directly treating it as one distribution via existing OOD approaches may generate the black margin.
Red star points are misclassified to the green class when predicting on the OOD domain (star points) with the learned model.
Thus, multiple diverse latent distributions in time series should be characterized to learn better OOD features which are essential factors affecting performance of both OOD detection and generalization when encountering non-stationary.
We name distribution shifts in \figurename~\ref{fig1-intro}(b) spatial distribution shifts, for which we can group data into different domains according to some specific characteristics, e.g., persons, positions, and some other factors.
However, in real scenarios, the information can be missing or not suitable for grouping and we only have access to a whole dataset without splits.


\begin{figure}[t!]
    \centering
    \includegraphics[width=0.48\textwidth]{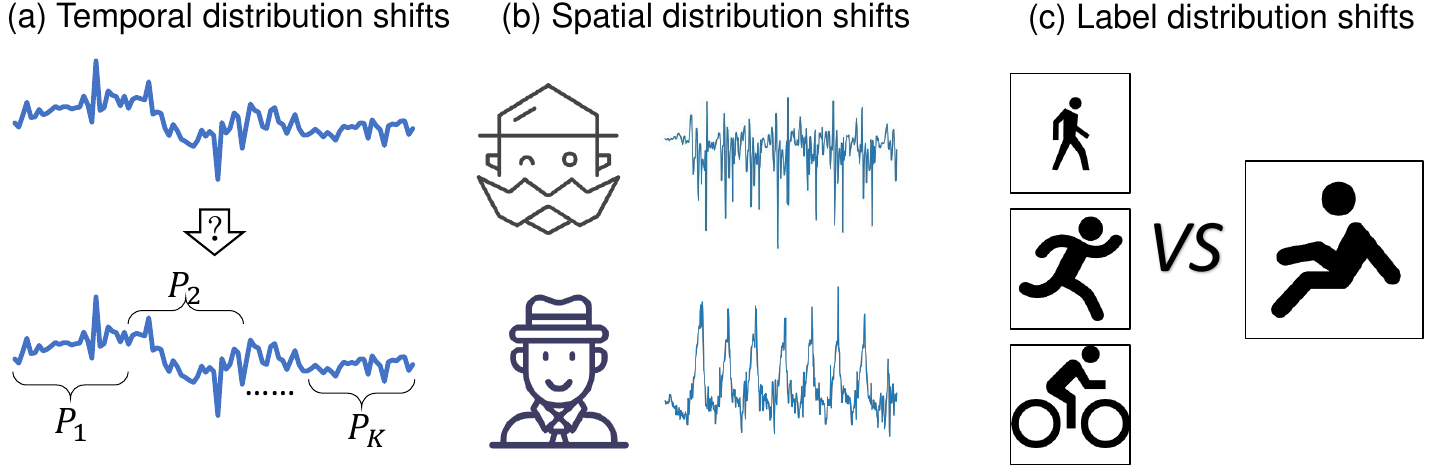}
    \vspace{-.2in}
    \caption{Categories of distribution shifts.}
    \label{fig1-categ}
    \vspace{-.2in}
\end{figure}

\textbf{A brief formulation of latent domain characterization.}
Following the above discussions, according to feature shifts, a time series may consist of $K$ unknown latent domains\footnote{We assume $K \in [N]$ in this work, as a smaller $K$ is too coarse to show the diversities in distributions while a larger $K$ brings difficulties to optimization. $K$ is tuned here but we hope to learn it in the future.}\footnote{A domain is a set of data samples following a certain distribution and we use them interchangeably hereafter.} rather than a fixed one, i.e., $\mathbb{P}^{tr}(\mathbf{x},y) = \sum_{i=1}^{K} \pi_i \mathbb{P}^{i}(\mathbf{x},y)$,
where $\mathbb{P}^{i}(\mathbf{x},y)$ is the distribution of the $i$-th latent one with weight $\pi_i$, $\sum_{i=1}^K \pi_i = 1$.\footnote{We use the notations $\pi_i$ and $\mathbb{P}^i$ to only describe the problem, but do not formalize it.}
There could be infinite ways to obtain $\mathbb{P}^i$s and our goal is to learn the `worst-case' distribution scenario where the distribution divergence between each $\mathbb{P}^i$ and $\mathbb{P}^j$ is maximized.
Why the `worst-case' scenario?
It will maximally preserve the diverse information of each latent distribution, thus benefiting generalization.

\subsection{\method}
\label{sec-div-fra}
\begin{figure*}[t!]
    \centering
    \includegraphics[width=0.8\textwidth]{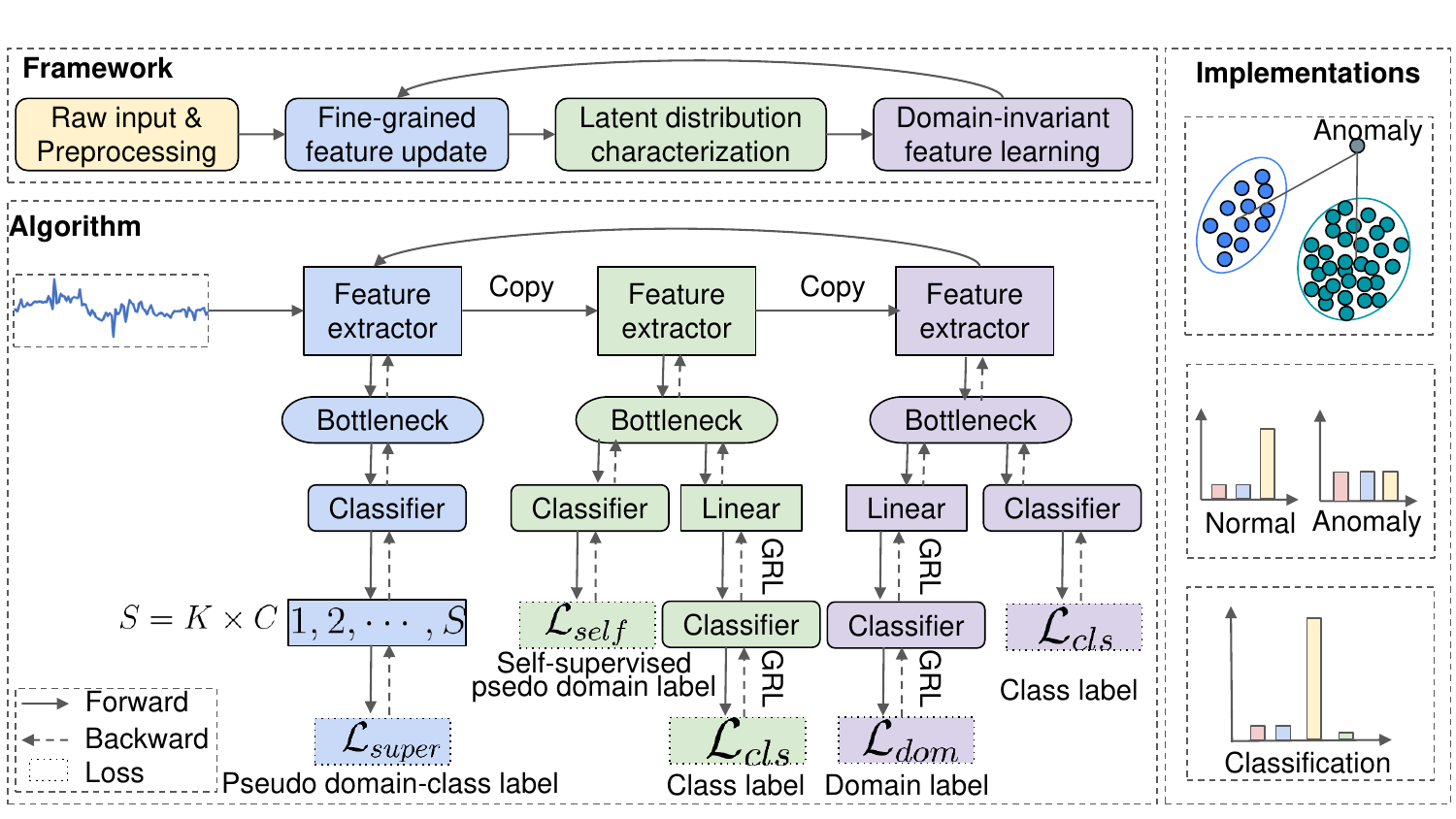}
    \caption{The framework of \method.}
    \label{fig:frame}
\end{figure*}

In this paper, we propose \method to learn OOD representations for time series OOD detection and generalization.
The core of \method is to characterize the latent distributions and then minimize the distribution divergence between each two.
Concretely speaking, an iterative process is utilized: it first obtains the 'worst-case' distribution scenario from a given dataset, then bridges the distribution gaps between each pair of latent distributions.
It mainly contains four steps, where step $2 \sim 4$ are iterative:
\begin{enumerate}[leftmargin=*]

    \item Pre-processing: this step adopts the sliding window to split the entire training dataset into fixed-size windows. We argue that the data from one window is the smallest domain unit.
    \item Fine-grained feature update: this step updates the feature extractor using the proposed pseudo domain-class labels as the supervision.
    \item Latent distribution characterization: it aims to identify the domain label for each instance to obtain the latent distribution information.
    It maximizes the different distribution gaps to enlarge diversity.
    \item Domain-invariant representation learning: this step utilizes pseudo domain labels from the last step to learn domain-invariant representations and train a generalizable model. 
\end{enumerate}

\textbf{Fine-grained Feature Update.}
Before characterizing latent distributions, we perform fine-grained feature updates to obtain fine-grained representation.
As shown in \figurename~\ref{fig:frame} (blue), we propose a new concept: pseudo domain-class label to fully utilize the knowledge contained in domains and classes, which serves as the supervision for the feature extractor.
Features are more fine-grained w.r.t. domains and labels, instead of only attached to domains or labels.

At the first iteration, there is no domain label $d'$ and we simply initialize $d'=0$ for all samples.
We treat per category per domain as a new class with label $s\in \{1,2,\cdots, S\}$.
We have $S = K \times C$\footnote{For detection, replace $C$ with $C_n$.} where $K$ is the pre-defined number of latent distributions that can be tuned in experiments.
We perform pseudo domain-class label assignment to get discrete values for supervision: $s = d'\times C + y.$ 

Let $h_f^{(2)}, h_b^{(2)}, h_c^{(2)}$ be feature extractor, bottleneck, and classifier, respectively (we use superscripts to denote step number).
Then, the supervised loss is computed using the cross-entropy loss $\ell$:
\begin{equation}
    \mathcal{L}_{super} = 
    \mathbb{E}_{(\mathbf{x},y)\sim \mathbb{P}^{tr}} \ell \left(h_{c}^{(2)}(h_{b}^{(2)}(h_f^{(2)}(\mathbf{x}))),s \right).
    \label{eqa:step1}
\end{equation}

\textbf{Latent Distribution Characterization.}
This step characterizes the latent distributions contained in one dataset.
As shown in \figurename~\ref{fig:frame} (green), we propose an adapted version of adversarial training to disentangle the domain labels from the class labels.
However, there are no actual domain labels provided, which hinders such disentanglement.
Inspired by \cite{caron2018deep}, we employ a self-supervised pseudo-labeling strategy to obtain domain labels.

First, we attain the centroid for each domain with class-invariant features:
\begin{equation}
    \tilde{\mu}_k = \frac{\sum_{\mathbf{x}_i \in \mathcal{X}^{tr}} 
    \delta_k(h_{c}^{(3)}(h_{b}^{(3)}(h_f^{(3)}(\mathbf{x}_i)))) h_{b}^{(3)}(h_f^{(3)}(\mathbf{x}_i))}{\sum_{\mathbf{x}_i\in \mathcal{X}^{tr}} 
    \delta_k(h_{c}^{(3)}(h_{b}^{(3)}(h_f^{(3)}(\mathbf{x}_i))))},
    \label{eqa:initialc}
\end{equation}
where $h_f^{(3)}, h_b^{(3)}, h_c^{(3)}$ are feature extractor, bottleneck, and classifier, respectively.
$\tilde{\mu}_k$ is the initial centroid of the $k^{th}$ latent domain while $\delta_{k}$ is the $k^{th}$ element of the logit soft-max output.
Then, we obtain the pseudo domain labels via the nearest centroid classifier using a distance function $D$:
\begin{equation}
    \tilde{d}'_i = \arg\min_k D(h^{(3)}_{b}(h^{(3)}_f(\mathbf{x}_i)), \tilde{\mu}_k).
    \label{eqa:initiald}
\end{equation}

Then, we compute the centroids and obtain the updated pseudo domain labels:
\begin{equation}
\begin{aligned}
    \mu_k = \frac{\sum_{\mathbf{x}_i\in \mathcal{X}^{tr}} 
    \mathbb{I}(\tilde{d}'_i=k) h^{(3)}_{b}(h^{(3)}_f(\mathbf{x}))}{\sum_{\mathbf{x}_i\in \mathcal{X}^{tr}} 
    \mathbb{I}(\tilde{d}'_i=k)},\\
    d'_i = \arg\min_k D(h^{(3)}_b(h^{(3)}_f(\mathbf{x}_i)), \mu_k),
\end{aligned}
    \label{eqa:psd}
\end{equation}
where $\mathbb{I}(a)=1$ when $a$ is true, otherwise $0$.
After obtaining $d'$, we can compute the loss of step 2:
\begin{equation}
\begin{aligned}
    \mathcal{L}_{self} + \mathcal{L}_{cls} =&
 \mathbb{E}_{(\mathbf{x},y)\sim \mathbb{P}^{tr}} \ell(h^{(3)}_c(h^{(3)}_b(h^{(3)}_f(\mathbf{x}))),d')\\
&    + \ell(h^{(3)}_{adv}(R_{\lambda_1}(h^{(3)}_b(h^{(3)}_f(\mathbf{x})))),y),
\end{aligned}
    \label{eqa:step2}
\end{equation}
where $h^{(3)}_{adv}$ is the discriminator for step 3 that contains several linear layers and one classification layer.
$R_{\lambda_1}$ is the gradient reverse layer with hyperparameter $\lambda_1$~\cite{ganin2016domain}.
After this step, we can obtain the pseudo domain label $d'$ for $\mathbf{x}$.

\textbf{Domain-invariant Representation Learning.}
After obtaining the latent distributions, we learn domain-invariant representations for generalization.
In fact, this step (purple in \figurename~\ref{fig:frame}) is simple: we borrow the idea from DANN~\cite{ganin2016domain} and directly use adversarial training to update the classification loss $\mathcal{L}_{cls}$ and domain classifier loss $\mathcal{L}_{dom}$ using gradient reversal layer (GRL) (a common technique that facilitates adversarial training via reversing gradients)~\cite{ganin2016domain}:
\begin{equation}
\begin{aligned}
    \mathcal{L}_{cls} + \mathcal{L}_{dom} =& \mathbb{E}_{(\mathbf{x},y)\sim \mathbb{P}^{tr}} \ell(h_{c}^{(4)}(h_{b}^{(4)}(h_f^{(4)}(\mathbf{x}))),y)\\
&    + \ell(h_{adv}^{(4)}(R_{\lambda_2}(h_{b}^{(4)}(h_f^{(4)}(\mathbf{x})))),d'),
\end{aligned}
    \label{eqa:step3}
\end{equation}
where $\ell$ is the cross-entropy loss and $R_{\lambda_2}$ is the gradient reverse layer with hyperparameter $\lambda_2$~\cite{ganin2016domain}.
We will omit the details of GRL and adversarial training here since they are common techniques in deep learning.

\textbf{Training and Complexity.}
We repeat these steps until convergence or max epochs.
Different from existing methods, the last two steps only optimize the last few independent layers.
Most of the trainable parameters are shared between modules, indicating that \method has the same model size as existing methods.
The modules from the last step are utilized for inference.

\subsection{\method for OOD detection}

\method attempts to exploit subdomains and learn generalized representations.
In this section, we provide two implementations of \method, \methodmal and \methodmcp, for time series OOD detection.
In these two implementations, we first train the model with steps in \sectionautorefname~\ref{sec-div-fra} and then combine post-hoc methods using representations and logits respectively when testing.

\textbf{\methodmal.}
Mahalanobis distance-based confidence score is a popular metric to detect OOD samples~\cite{lee2018simple} and it is mainly influenced by the features/representations given by the model.
We can obtain the class conditional Gaussian distributions with respect to features of the deep models under Gaussian discriminant analysis (GDA) which result in a confidence score based on the Mahalanobis distance.
\methodmal obtains the features of the corresponding samples using the outputs of the bottleneck,
\begin{equation}
\mathbf{z}=h_{b}^{(4)}(h_f^{(4)}(\mathbf{x})).
\end{equation}

\methodmal assumes that the class-conditional distributions of $\mathbf{z}$ follow multivariate Gaussian distributions, 
\begin{equation}
\mathbb{P}(\mathbf{z}| y=c) \sim \mathcal{N} (\mathbf{\mu}_c, \Sigma),
\end{equation}
where $\mu_c$ is the mean of multivariate Gaussian distribution of ID class $c$ while $\Sigma$ is a tied covariance matrix,
\begin{equation}
    \begin{aligned}
        \mathbf{\mu}_c &= \frac{1}{N_c}\sum_{i:y_i=c} \mathbf{z}_i,\\
        \Sigma&=\frac{1}{N}\sum_c \sum_{i:y_i=c}(\mathbf{z}_i -\mathbf{\mu}_c)(\mathbf{z}_i - \mathbf{\mu}_c)^T,
    \end{aligned}
\end{equation}
where $N_c$ denotes the number of samples in class $c$.
According to a simple theoretical connection between GDA and the softmax classifier \cite{lasserre2006principled,murphy2012machine,lee2018simple}, the posterior distribution defined by the generative classifier under GDA with tied covariance assumption is equivalent to the softmax classifier.
Now, we can utilize the Mahalanobis distance between a test sample $\mathbf{x}$ and the
closest class-conditional distribution as a confidence score,
\begin{equation}
    M(\mathbf{x})=\max_c - (\mathbf{z}-\mathbf{\mu}_c)^T\Sigma^{-1}(\mathbf{z}-\mathbf{\mu}_c).
\end{equation}

The larger $M(\mathbf{x})$ is, the more likely $\mathbf{x}$ belongs to class $c$, and thereby $\mathbf{x}$ is more likely to be an ID sample.
Conversely, a small $M(\mathbf{x})$ illustrates that the sample can be an OOD sample.
Since \methodmal does not rely on the predictions of the model, it can avoid the impact of high-confidence outputs from the deep learning models.

\textbf{\methodmcp.}
Maximum class probability (MCP)~\cite{hendrycks2018deep} is a popular baseline for OOD detection.
It is extremely influenced by the predictions of the model since it utilizes the logit outputs of the model directly, which means that better predictions can bring better performance with MCP.
Combining with MCP, we implement \methodmcp.

\methodmcp obtains an estimation vector of data as:
\begin{equation}
\mathbf{y}'=h_{c}^{(4)}(h_{b}^{(4)}(h_f^{(4)}(\mathbf{x}))).
\end{equation}

Via a softmax activation, $\mathbf{y}'$ can be converted into a vector ranging from $0$ to $1$, $\tilde{\mathbf{y}}' = \mathrm{softmax}(\mathbf{y}')$, which can reflect the confidence of the model's confidence to some extent.
Commonly, we can view $\tilde{\mathbf{y}}'$ as a probability estimation of classes.
The probability $v = \max_y \mathbb{P}(y|\mathbf{x})$ of the most likely class is the final prediction and it can also serve as an ID score.
The larger $v$ is, the more confident the model is, and thereby the sample is more likely to be an ID sample.
Conversely, a small $v$ indicates possible OOD inputs.
Since modern NNs have been shown to often make over-confident softmax outputs, we can also utilize a temperature hyperparameter in the softmax activation to generate smoother predictions~\cite{hinton2015distilling},\footnote{In this paper, we simply set $T=1$.}

\begin{equation}
        \tilde{\mathbf{y}'} = \mathrm{softmax}(\mathbf{y}',T), \text{ and }
        \mathbf{\tilde{y}}'_i=\frac{\exp( \mathbf{y}'_i/T)}{\sum_j \exp( \mathbf{y}'_j/T)}.
\end{equation}

\subsection{\method for OOD generalization}
\method can also be utilized for classification.
We can directly utilize the outputs of step 4 with a softmax activation to obtain predictions,
\begin{equation}
\mathbf{y}'=\mathrm{softmax}(h_{c}^{(4)}(h_{b}^{(4)}(h_f^{(4)}(\mathbf{x})))).
\end{equation}

\subsection{Theoretical Insights}

\textbf{Preliminary}
For a distribution $\mathbb{P}$ with an ideal binary labeling function $h^*$ and a hypothesis $h$, we define the error $\varepsilon_{\mathbb{P}}(h)$ in accordance with~\cite{ben2010theory} as: 
\begin{equation}
    \label{eqa:def-eps}
    \varepsilon_{\mathbb{P}}(h) = \mathbb{E}_{\mathbf{x}\sim \mathbb{P}} |h(\mathbf{x}) - h^*(\mathbf{x})|.
\end{equation}

We also give the definition of $\mathcal{H}$-divergence according with~\cite{ben2010theory}.
Given two distributions $\mathbb{P}, \mathbb{Q}$ over a space $\mathcal{X}$ and a hypothesis class $\mathcal{H}$, 
\begin{equation}
    \label{eqa:def-hdive}
    d_{\mathcal{H}}(\mathbb{P},\mathbb{Q}) = 2\sup_{h\in \mathcal{H}} |Pr_{\mathbb{P}}(I_h)- Pr_{\mathbb{Q}}(I_h)|,
\end{equation}
where $I_h = \{\mathbf{x}\in\mathcal{X}|h(\mathbf{x})=1\}$.
We often consider the $\mathcal{H}\Delta \mathcal{H}$-divergence in~\cite{ben2010theory} where the symmetric difference
hypothesis class $\mathcal{H}\Delta \mathcal{H}$ is the set of functions characteristic to disagreements between hypotheses.

\begin{theorem}
\label{thm:da,ben}
(Theorem 2.1 in~\cite{sicilia2021domain}, modified from Theorem 2 in~\cite{ben2010theory}). Let $\mathcal{X}$ be a space and $\mathcal{H}$ be a class of hypotheses corresponding to this space. Suppose $\mathbb{P}$ and $\mathbb{Q}$ are distributions over $\mathcal{X}$. Then for any $h\in \mathcal{H}$, the following holds
\begin{equation}
    \label{eqa:da}
    \varepsilon_{\mathbb{Q}}(h) \leq \lambda'' + \varepsilon_{\mathbb{P}}(h)+ \frac{1}{2} d_{\mathcal{H}\Delta \mathcal{H}}(\mathbb{Q}, \mathbb{P})
\end{equation}
with $\lambda''$ the error of an ideal joint hypothesis for $\mathbb{Q}, \mathbb{P}$.
\end{theorem}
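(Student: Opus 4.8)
The plan is to follow the classical chain-of-triangle-inequalities argument of Ben-David et al., pivoting on the ideal joint hypothesis. First I would fix $h^*$ to be a minimizer of the combined error $\varepsilon_{\mathbb{P}}(g) + \varepsilon_{\mathbb{Q}}(g)$ over $g \in \mathcal{H}$, so that by construction $\lambda'' = \varepsilon_{\mathbb{P}}(h^*) + \varepsilon_{\mathbb{Q}}(h^*)$. The central preliminary observation is that the error functional in \equationname~\eqref{eqa:def-eps}, and more generally the pairwise disagreement $\mathbb{E}_{\mathbf{x}\sim\mathbb{P}}|f(\mathbf{x}) - g(\mathbf{x})|$, satisfies a triangle inequality in its hypothesis arguments; this is immediate from the pointwise triangle inequality for $|\cdot|$ taken under the expectation. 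I will write $\varepsilon_{\mathbb{Q}}(h, h^*)$ for the disagreement between $h$ and $h^*$ measured under $\mathbb{Q}$, and similarly under $\mathbb{P}$.

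The main derivation then proceeds in three moves. I would first insert $h^*$ into the target error, obtaining $\varepsilon_{\mathbb{Q}}(h) \le \varepsilon_{\mathbb{Q}}(h^*) + \varepsilon_{\mathbb{Q}}(h, h^*)$. Next I would transport the disagreement from $\mathbb{Q}$ to $\mathbb{P}$ via $\varepsilon_{\mathbb{Q}}(h, h^*) \le \varepsilon_{\mathbb{P}}(h, h^*) + |\varepsilon_{\mathbb{Q}}(h, h^*) - \varepsilon_{\mathbb{P}}(h, h^*)|$, and finally split $\varepsilon_{\mathbb{P}}(h, h^*) \le \varepsilon_{\mathbb{P}}(h) + \varepsilon_{\mathbb{P}}(h^*)$ by the same triangle inequality. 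Collecting the two $h^*$ terms $\varepsilon_{\mathbb{Q}}(h^*) + \varepsilon_{\mathbb{P}}(h^*)$ reconstructs $\lambda''$, leaving only the transport-gap term to be controlled.

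The crux, and the one step I would treat with care, is bounding the transport gap $|\varepsilon_{\mathbb{Q}}(h, h^*) - \varepsilon_{\mathbb{P}}(h, h^*)|$ by $\tfrac{1}{2} d_{\mathcal{H}\Delta \mathcal{H}}(\mathbb{Q}, \mathbb{P})$. Here I would use that the disagreement region $\{\mathbf{x} : h(\mathbf{x}) \ne h^*(\mathbf{x})\}$ is exactly $I_g$ for some $g$ in the symmetric-difference class $\mathcal{H}\Delta \mathcal{H}$, since both $h$ and $h^*$ lie in $\mathcal{H}$. For binary-valued hypotheses we have $\mathbb{E}_{\mathbb{P}}|h - h^*| = Pr_{\mathbb{P}}(I_g)$, so the difference of disagreements equals $|Pr_{\mathbb{Q}}(I_g) - Pr_{\mathbb{P}}(I_g)|$, which is dominated by the supremum over all $g \in \mathcal{H}\Delta \mathcal{H}$ that appears in \equationname~\eqref{eqa:def-hdive}; the factor $\tfrac{1}{2}$ is precisely the constant there. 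Substituting this bound together with the reconstructed $\lambda''$ into the chain yields \equationname~\eqref{eqa:da}.

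Since this is essentially a restatement of a known result, I expect no genuine obstacle beyond verifying that the binary-valued assumption makes $\mathbb{E}|h - h^*|$ coincide with the probability of the disagreement set, so that the $\mathcal{H}\Delta \mathcal{H}$-divergence machinery applies verbatim and the order of arguments in $d_{\mathcal{H}\Delta \mathcal{H}}(\mathbb{Q}, \mathbb{P})$ is immaterial by the absolute value; the remaining manipulations are routine applications of the triangle inequality.
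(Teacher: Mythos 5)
Your proof is correct and is the canonical Ben-David et al.\ argument --- pivot on the ideal joint hypothesis via the triangle inequality for disagreements, then bound the transport gap $|\varepsilon_{\mathbb{Q}}(h,h^*)-\varepsilon_{\mathbb{P}}(h,h^*)|$ by noting the disagreement set of two hypotheses in $\mathcal{H}$ is $I_g$ for some $g\in\mathcal{H}\Delta\mathcal{H}$, with the factor $\tfrac{1}{2}$ coming from the constant $2$ in \equationname~\eqref{eqa:def-hdive}. The paper itself gives no proof of this theorem (it is imported by citation from Sicilia et al.\ and Ben-David et al.), and your reconstruction coincides with the proof in those sources; the only cosmetic issue is that you reuse $h^*$ for the ideal joint hypothesis while the paper's \equationname~\eqref{eqa:def-eps} reserves $h^*$ for the ideal labeling function, so one of the two should be renamed to avoid ambiguity.
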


\theoremautorefname~\ref{thm:da,ben} provides an upper bound on the target-error.
$\lambda''$ is a property of the dataset and hypothesis class and is often ignored.
\theoremautorefname~\ref{thm:da,ben} demonstrates the necessity to learn domain invariant features.

\textbf{Why \method can learn better representations and achieve better predictions?}
We present some theoretical insights to show that our approach is well motivated.

\begin{proposition}
\label{prop:hdist}
Let $\mathcal{X}$ be a space and $\mathcal{H}$ be a class of hypotheses corresponding to this space. Let $\mathbb{Q}$ and the collection $\{\mathbb{P}_i \}_{i=1}^K$ be distributions over $\mathcal{X}$ and let $\{\varphi_i \}_{i=1}^K$ be a collection of non-negative coefficients with $\sum_i \varphi_i = 1$. Let $\mathcal{O}$ be a set of distributions s.t. $\forall \mathbb{S}\in \mathcal{O}$, the following holds
\begin{equation}
    \label{eqa:range}
     d_{\mathcal{H}\Delta \mathcal{H}}(\sum_i \varphi_i\mathbb{P}_i, \mathbb{S}) \leq \max_{i,j} d_{\mathcal{H}\Delta \mathcal{H}}(\mathbb{P}_i,\mathbb{P}_j).
\end{equation}
Then, for any $h\in \mathcal{H}$, 
\begin{equation}
    \label{eqa:bound}
    \begin{aligned}
    \varepsilon_{\mathbb{Q}}(h)\leq & \lambda' + \sum_i \varphi_i \varepsilon_{\mathbb{P}_i}(h) + \frac{1}{2}\min_{\mathbb{S}\in\mathcal{O}}  d_{\mathcal{H}\Delta \mathcal{H}}(\mathbb{S}, \mathbb{Q})\\ & + \frac{1}{2}\max_{i,j} d_{\mathcal{H}\Delta \mathcal{H}}(\mathbb{P}_i, \mathbb{P}_j),
    \end{aligned}
\end{equation}
where $\lambda'$ is the error of an ideal joint hypothesis.
$\varepsilon_{\mathbb{P}}(h)$ is the error for a hypothesis $h$ on a distribution $\mathbb{P}$.
$d_{\mathcal{H}\Delta \mathcal{H}} (\mathbb{P}, \mathbb{Q})$ is $\mathcal{H}$-divergence which measures differences in distribution~\cite{ben2010theory}. 
\end{proposition}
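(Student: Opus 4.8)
The plan is to reduce Eq.~\eqref{eqa:bound} to two successive applications of the domain-adaptation inequality in \theoremautorefname~\ref{thm:da,ben}, chained through a well-chosen intermediate distribution drawn from $\mathcal{O}$, with the mixture $\bar{\mathbb{P}} := \sum_i \varphi_i \mathbb{P}_i$ serving as the bridge to the individual latent domains $\{\mathbb{P}_i\}_{i=1}^K$. The set $\mathcal{O}$ is precisely the device that licenses inserting such an intermediate distribution: by its defining property Eq.~\eqref{eqa:range}, every $\mathbb{S}\in\mathcal{O}$ lies within $\mathcal{H}\Delta\mathcal{H}$-distance $\max_{i,j} d_{\mathcal{H}\Delta\mathcal{H}}(\mathbb{P}_i,\mathbb{P}_j)$ of the mixture, so routing the error through any $\mathbb{S}$ costs at most that maximal pairwise divergence.

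Concretely, I would first fix $\mathbb{S}^\star := \arg\min_{\mathbb{S}\in\mathcal{O}} d_{\mathcal{H}\Delta\mathcal{H}}(\mathbb{S},\mathbb{Q})$ and apply \theoremautorefname~\ref{thm:da,ben} to the pair $(\mathbb{Q},\mathbb{S}^\star)$, giving $\varepsilon_{\mathbb{Q}}(h) \le \lambda'' + \varepsilon_{\mathbb{S}^\star}(h) + \tfrac12 d_{\mathcal{H}\Delta\mathcal{H}}(\mathbb{Q},\mathbb{S}^\star)$, where by construction the last term equals $\tfrac12\min_{\mathbb{S}\in\mathcal{O}} d_{\mathcal{H}\Delta\mathcal{H}}(\mathbb{S},\mathbb{Q})$. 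Next I would apply the same theorem to $(\mathbb{S}^\star, \bar{\mathbb{P}})$ to obtain $\varepsilon_{\mathbb{S}^\star}(h) \le \lambda''' + \varepsilon_{\bar{\mathbb{P}}}(h) + \tfrac12 d_{\mathcal{H}\Delta\mathcal{H}}(\mathbb{S}^\star,\bar{\mathbb{P}})$. Two observations then close the argument: the divergence term here is bounded by $\tfrac12\max_{i,j} d_{\mathcal{H}\Delta\mathcal{H}}(\mathbb{P}_i,\mathbb{P}_j)$ via Eq.~\eqref{eqa:range} together with the symmetry of $d_{\mathcal{H}\Delta\mathcal{H}}$, and the mixture error splits linearly, $\varepsilon_{\bar{\mathbb{P}}}(h) = \sum_i \varphi_i \varepsilon_{\mathbb{P}_i}(h)$, directly from the definition Eq.~\eqref{eqa:def-eps} and linearity of expectation over the mixture $\bar{\mathbb{P}}$. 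Substituting the second inequality into the first and collecting the dataset-dependent constants as $\lambda' := \lambda'' + \lambda'''$ yields exactly Eq.~\eqref{eqa:bound}.

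The step I expect to be most delicate is the bookkeeping of the ideal-joint-hypothesis terms: each invocation of \theoremautorefname~\ref{thm:da,ben} introduces its own constant (for $(\mathbb{Q},\mathbb{S}^\star)$ and $(\mathbb{S}^\star,\bar{\mathbb{P}})$ respectively), and these depend on the chosen $\mathbb{S}^\star$; I would absorb their sum into the single $\lambda'$, following the convention that such terms are properties of the data and hypothesis class and are treated as small. A second point needing care is the linear split of the mixture error, which tacitly relies on a single shared ideal labeling function $h^\star$ across all $\mathbb{P}_i$ (and hence on $\bar{\mathbb{P}}$); under the standard assumption of~\cite{ben2010theory} that $h^\star$ is common to the distributions involved, the decomposition $\varepsilon_{\bar{\mathbb{P}}}(h)=\sum_i\varphi_i\varepsilon_{\mathbb{P}_i}(h)$ is immediate. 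Everything else is a direct chaining of inequalities, so no further estimates are required.
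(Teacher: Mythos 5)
Your proposal is correct and follows essentially the same route as the paper's own proof: two chained applications of \theoremautorefname~\ref{thm:da,ben} through an intermediate distribution in $\mathcal{O}$, the defining property \equationautorefname~\ref{eqa:range} to bound $d_{\mathcal{H}\Delta\mathcal{H}}(\sum_i \varphi_i\mathbb{P}_i,\mathbb{S})$, linearity of the mixture error, and absorption of the two ideal-joint-hypothesis constants into $\lambda'$. The only cosmetic difference is that you fix the minimizer $\mathbb{S}^\star$ at the outset (tacitly assuming the minimum is attained), whereas the paper derives the bound for every $\mathbb{S}\in\mathcal{O}$ and only then passes to the minimum, which is marginally cleaner.
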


\begin{proof}
On one hand, with \theoremautorefname~\ref{thm:da,ben}, we have 
\begin{equation}
    \label{eqa:p1s1}
    \begin{aligned}
    \varepsilon_{\mathbb{Q}}(h) \leq \lambda'_1 +\varepsilon_{\mathbb{S}}(h)+ \frac{1}{2}d_{\mathcal{H}\Delta \mathcal{H}}(\mathbb{S}, \mathbb{Q}), \forall h\in \mathcal{H}, \forall\mathbb{S}\in \mathcal{O}.
    \end{aligned}
\end{equation}

On the other hand, with \theoremautorefname~\ref{thm:da,ben}, we have 
\begin{equation}
    \label{eqa:p1s2}
    \begin{aligned}
    \varepsilon_{\mathbb{S}}(h) \leq & \lambda'_2 +\varepsilon_{\sum_i \varphi_i\mathbb{P}_i}(h)+ \\ &\frac{1}{2}d_{\mathcal{H}\Delta \mathcal{H}}(\sum_i \varphi_i\mathbb{P}_i, \mathbb{S}), \forall h\in \mathcal{H}.
    \end{aligned}
\end{equation}

Since $d_{\mathcal{H}\Delta \mathcal{H}}(\sum_i \varphi_i\mathbb{P}_i, \mathbb{S}) \leq \max_{i,j} d_{\mathcal{H}\Delta \mathcal{H}}(\mathbb{P}_i,\mathbb{P}_j)$, and $\varepsilon_{\sum_i \varphi_i\mathbb{P}_i}(h) = \sum_i \varphi_i\varepsilon_{\mathbb{P}_i}(h)$, we have
\begin{equation}
    \label{eqa:p1s3}
    \begin{aligned}
    \varepsilon_{\mathbb{Q}}(h) \leq & \lambda' + \sum_i \varphi_i \varepsilon_{\mathbb{P}_i}(h)+ \frac{1}{2}d_{\mathcal{H}\Delta \mathcal{H}}(\mathbb{S}, \mathbb{Q})\\ &+\frac{1}{2}\max_{i,j}d_{\mathcal{H}\Delta \mathcal{H}}(\sum_i \varphi_i\mathbb{P}_i, \mathbb{S}), \forall h\in \mathcal{H}, \forall\mathbb{S}\in \mathcal{O},
    \end{aligned}
\end{equation}
where $\lambda'=\lambda'_1+\lambda'_2$.
\equationautorefname~\ref{eqa:p1s3} for all $\mathbb{S}\in \mathcal{O}$ holds. Therefore, we complete the proof.
\end{proof}

The first item in \eqref{eqa:bound}, $\lambda'$, is often neglected since it is small in reality.
The second item, $\sum_i \varphi_i \varepsilon_{\mathbb{P}_i}(h)$, exists in almost all methods and can be minimized via supervision from class labels with cross-entropy loss in \eqref{eqa:step3}.
Our main purpose is to minimize the last two items in \eqref{eqa:bound}.
Here $\mathbb{Q}$ corresponds to the unseen out-of-distribution target domain.

The last term $\frac{1}{2} \max_{i,j}d_{\mathcal{H}\Delta \mathcal{H}}(\mathbb{P}_i,\mathbb{P}_j)$ is common in OOD theory which measures the maximum differences among source domains.
This corresponds to step 4 in our approach.

Finally, the third item, $\frac{1}{2}\min_{\mathbb{S} \in \mathcal{O}} d_{\mathcal{H}\Delta \mathcal{H}}(\mathbb{S},\mathbb{Q})$, explains why we exploit sub-domains.
Since our goal is to learn a model which can perform well on an unseen target domain, we cannot obtain $\mathbb{Q}$.
To minimize $\frac{1}{2}\min_{\mathbb{S} \in \mathcal{O}} d_{\mathcal{H}\Delta \mathcal{H}}(\mathbb{S},\mathbb{Q})$, we can only enlarge the range of $\mathcal{O}$.
We have to $\max_{i,j}d_{\mathcal{H}\Delta \mathcal{H}}(\mathbb{P}_i,\mathbb{P}_j)$ according to \eqref{eqa:range}, corresponding to step 3 in our method which tries to segment the time series data into several latent sub-domains by maximizing the segment-wise distribution gap to preserve diversities, i.e. the `worst-case' distribution scenario.
Better representations and predictions bring improvements on OOD detection and generalization.

\section{Experiments on OOD detection}
\label{sec-exp-detect}
We perform evaluations on three diverse time series detection tasks: gesture recognition, wearable stress\&affect detection, and sensor-based activity recognition.
\tablename~\ref{tb-data-harss} shows the statistical information on datasets that we use.

\begin{table}[htbp]
\centering
\caption{Information on datasets.}
\label{tb-data-harss}
\resizebox{0.4\textwidth}{!}{%
\begin{tabular}{crrrr}
\toprule
Dataset& Subjects&Sensors&Classes& Samples\\
\midrule
EMG&36&1&7&33,903,472\\
WESAD&15&8&4&63,000,000\\
DSADS&8&3&19&1,140,000\\
USC-HAD&14&2&12&5,441,000\\
UCI-HAR&30&2&6&1,310,000\\
PAMAP2&9&3&18&3,850,505\\
\bottomrule
\end{tabular}%
}
\end{table}

\subsection{Setup}
We utilize the sliding window technique to split data.
As its name suggests, this technique involves taking a subset of data from a given array or sequence.
Two main parameters of the sliding window technique are the window size, describing a subset length, and the step size, describing moving forward distance each time.

Time series OOD detection algorithms with feature shifts are currently less studied and we combine existing DG methods with MCP and Mahalanobis distance.
We compare with five state-of-the-art methods.
DANN~\cite{ganin2016domain} is a method that utilizes adversarial training to force the discriminator unable to classify domains for better domain-invariant features. It requires domain labels and splits data in advance while ours is a universal method.
CORAL~\cite{sun2016deep} utilizes the covariance alignment in feature layers for better domain-invariant features. It also requires domain labels and splits data in advance.
GroupDRO~\cite{sagawa2019distributionally} is a method that seeks a global distribution with the worst performance within a range of the raw distribution for better generalization. Ours study the internal distribution shift instead of seeking a global distribution close to the original one.
ANDMask~\cite{parascandolo2020learning} is a gradient-based optimization method that belongs to special learning strategies.

For fairness, all methods use a feature net with two blocks and each block has one convolution layer, one pooling layer, and one batch normalization layer, following \cite{wang2019deep}.
All methods are implemented with PyTorch~\cite{paszke2019pytorch}.
The maximum training epoch is set to 150.
The Adam optimizer with weight decay $5\times 10^{-4}$ is used.
The learning rate for the rest methods is $10^{-2}$ or $10^{-3}$.
For the pooling layer, we utilize MaxPool2d.
The kernel size is $(1,2)$ ad the stride is 2.

Some OOD methods require the domain labels known in training while ours does not, which is more challenging and practical.
For these methods that require domain labels, we randomly assign domain labels to them in batches.
We conduct the training-domain-validation strategy and the training data are split by $8:2$ for training and validation.
We tune all methods to report the average best performance of three trials. 
$K$ in \method is treated as a hyperparameter and we tune it to record the best OOD performance.\footnote{There might be no optimal $K$ for a dataset. We perform a grid search in $[2,10]$ to get the best performance.}
We utilize three metrics on the testing datasets for evaluation, including ID accuracy which evaluates generalized classification capability and the Area Under the Receiver Operating Characteristic curve (AUROC) and the Area Under the Precision-Recall curve (AUPR) for evaluating OOD detection capability following \cite{hendrycks2016baseline}.

\subsection{Gesture Recognition}
First, we evaluate \method on EMG for gestures Data Set~\cite{lobov2018latent}.
Electromyography (EMG) is a typical time-series data that is based on bioelectric signals.
EMG for gestures Data Set~\cite{lobov2018latent} contains raw EMG data recorded by MYO Thalmic bracelet.
The bracelet is equipped with eight sensors equally spaced around the forearm that simultaneously acquire myographic signals.
EMG data are scene and device-dependent, which means the same person may generate different data when performing the same activity with the same device at a different time (i.e., distribution shift across time~\cite{wilson2020multi,purushotham2016variational}) or with different devices at the same time.
Data from 36 subjects are collected while they performed a series of static hand gestures and the number of instances is $~40,000-50,000$ recordings in each column.
It contains 7 classes and we select 6 common classes performed by all subjects for our experiments.
Ulnar deviations is selected as OOD class while the rests serve as the ID classes.

For EMG, we set the window size to 200 and the step size to 100, which means there exist $50\%$ overlaps between two adjacent samples.
We normalize each sample with $\tilde{\mathbf{x}} = \frac{\mathbf{x} - \min \mathbf{X} }{\max \mathbf{X} - \min \mathbf{X}}$.
$\mathbf{X}$ contains all $\mathbf{x}$.
The final dimension is $8\times 1 \times 200$.
We randomly divide 36 subjects into four domains (i.e., $0, 1, 2, 3$) without overlapping and each domain contains data of 9 persons.

\begin{table}[t!]
\centering
\caption{ID accuracy, AUROC, and AUPR on EMG. \textbf{Bold} means the best while \underline{underline} means the second-best.}
\label{tab1-emg-detect}
\resizebox{0.5\textwidth}{!}{
\begin{tabular}{llllll|ll|ll}
\toprule
Targets                & 0              & 1              & 2              & 3              & AVG            & \multicolumn{2}{c|}{AVG (+MCP)}    & \multicolumn{2}{c}{AVG (+MAH)}    \\
Metrics                & \multicolumn{5}{c|}{ID ACC}                                                         & AUROC          & AUPR           & AUROC          & AUPR           \\
\midrule
ERM                    & 62.59          & 70.56          & 77.45          & 69.96          & 70.14          & 54.85          & 85.98          & 49.28          & 85.17          \\
CORAL                  & 66.88          & \underline{82.71}    & \underline{82.53}    & 73.88          & \underline{76.50}    & 61.42          & 89.02          & \underline{64.32}    & \underline{90.71}    \\
DANN                   & 69.69          & 76.17          & 80.74          & \underline{75.80}    & 75.60          & 61.54          & 88.31          & 58.79          & 89.08          \\
GroupDRO               & \underline{74.19}    & 77.90          & 81.53          & 69.18          & 75.70          & \underline{66.09}    & \underline{90.33}    & 62.33          & 90.59          \\
ANDMask                & 65.61          & 78.30          & 75.73          & 55.02          & 68.67          & 51.67          & 84.71          & 59.75          & 88.37          \\
\method & \textbf{81.43} & \textbf{90.39} & \textbf{87.26} & \textbf{86.19} & \textbf{86.32} & \textbf{70.76} & \textbf{92.66} & \textbf{78.38} & \textbf{94.65} \\ \bottomrule
\end{tabular}}
\end{table}

The results are shown in \tableautorefname~\ref{tab1-emg-detect} and we gain the following observation.
1) Our method achieves the best performance on each task and has an improvement of about $10\%$ on average compared to the second-best method, which demonstrates that our method has a good capability of ID classification.
2) For methods with MCP, \methodmcp achieves the best AUROC and AUPR on average.
Compared to the second-best method, \methodmcp has improvements of $4.67\%$ and $2.33\%$ respectively.
For methods with Mahalanobis distance \methodmal achieves the best AUROC and AUPR on average.
Compared to the second-best method, \methodmal has improvements of $14.06\%$ and $3.94\%$ respectively.
These results demonstrate that \method has a good ability to detect anomalies.
3) In most situations, ID accuracy has a positive relation to AUROC and AUPR, but there also exist some counterexamples.
CORAL achieves better ID accuracy but worse AUROC(MCP) on average than GroupDRO. 
Therefore, we might need to select different hyperparameters and methods for different purposes.
4) Compared to \methodmcp, \methodmal has another remarkable improvement but some methods, e.g. ERM-MAH and ERM-MCP, perform more terribly.
These results demonstrate that our method can learn better generalized representations.
And for time series OOD detection task, Mahalanobis distance is not influenced by the over-confidence of deep models while MCP might require further tuning.

\subsection{Wearable Stress and Affect Detection}
We further evaluate \method on a larger dataset, Wearable Stress and Affect Detection (WESAD)~\cite{schmidt2018introducing}.
WESAD is a public dataset that contains physiological and motion data of 15 subjects with $63,000,000$ instances.
We utilize sensor modalities of chest-worn devices including electrocardiogram, electrodermal activity, electromyogram, respiration, body temperature, and three-axis acceleration.
We split 15 subjects into four domains.
We utilize the same preprocessing as EMG and select class stress as the OOD class.

\begin{table}[t!]
\centering
\caption{ID accuracy, AUROC, and AUPR on WESAD. \textbf{Bold} means the best while \underline{underline} means the second-best.}
\label{tab1-wesad-detect}
\resizebox{0.5\textwidth}{!}{
\begin{tabular}{llllll|ll|ll}
\toprule
Targets                & 0              & 1              & 2              & 3              & AVG            & \multicolumn{2}{c|}{AVG (+MCP)}    & \multicolumn{2}{c}{AVG (+MAH)}    \\
Metrics                & \multicolumn{5}{c|}{ID ACC}                                                         & AUROC          & AUPR           & AUROC          & AUPR           \\
\midrule
ERM                    & \textbf{52.83} & 60.77          & 59.01          & 55.53          & 57.04          & 49.66          & 80.54          & 59.68          & 83.56          \\
CORAL                  & \underline{51.66}    & 55.22          & 50.19          & 52.49          & 52.39          & \underline{69.98}    & 87.97          & 72.67          & 90.94          \\
DANN                   & 49.22          & 57.12          & 56.47          & \underline{59.96}    & 55.69          & 60.34          & 85.97          & 65.97          & 86.86          \\
GroupDRO               & 51.63          & \underline{67.12}    & \textbf{63.96} & 54.01          & \underline{59.18}    & 66.20          & \underline{88.32}    & \underline{77.92}    & \underline{92.12}    \\
ANDMask                & 50.20          & 54.80          & 57.30          & 56.62          & 54.73          & 53.30          & 81.30          & 66.12          & 84.17          \\
\method & 48.79          & \textbf{73.18} & \underline{60.49}    & \textbf{75.01} & \textbf{64.37} & \textbf{77.24} & \textbf{92.57} & \textbf{88.68} & \textbf{96.47} \\ \bottomrule
\end{tabular}}
\end{table}

The results are shown in \tableautorefname~\ref{tab1-wesad-detect} and we have the following observation.
1) Similar to EMG, our method has the best ID accuracy, AUROC, and AUPR on average, which demonstrates that our method has excellent abilities of classification and OOD detection.
2) For some tasks, our method obtains worse ID accuracy than other methods, which can be caused by two reasons.
On the one hand, our method relies on adapted DANN to exploit subdomains and learn representations. 
When DANN performs terribly, our method can be influenced.
On the other hand, $K$ is a hyperparameter on the current method, and we can miss the best results due to limited searches.
3) No matter how the method performs in accuracy, \methodmal performs the best, which indicates that the features \method extracted are outstanding.
Moreover, compare to \methodmcp, \methodmal still has another remarkable improvement.

\subsection{Sensor-based Human Activity Recognition}
Finally, we construct three diverse OOD settings by leveraging four sensor-based human activity recognition datasets: DSADS~\cite{barshan2014recognizing}, USC-HAD~\cite{zhang2012usc}, UCI-HAR~\cite{anguita2012human}, and PAMAP2~\cite{reiss2012introducing}.
UCI daily and sports dataset (\textbf{DSADS})~\cite{barshan2014recognizing} consists of 19 activities collected from 8 subjects wearing body-worn sensors on 5 body parts. 
USC-SIPI human activity dataset (\textbf{USC-HAD})~\cite{zhang2012usc} is composed of 14 subjects (7 male, 7 female, aged 21 to 49) executing 12 activities with a sensor tied on the front right hip.
\textbf{UCI-HAR}~\cite{anguita2012human} is collected by 30 subjects performing 6 daily living activities with a waist-mounted smartphone. 
\textbf{PAMAP2}~\cite{reiss2012introducing} contains data from 18 activities, performed by 9 subjects wearing 3 sensors.
These datasets are collected from different people and positions using accelerometer and gyroscope, with $11,741,000$ instances in total.
For DSADS, we directly utilize data split by the providers.
The final dimension shape is $45\times 1\times 125$.
$45 = 5\times3\times 3$ where $5$ means five positions, the first $3$ means three sensors, and the second $3$ means each sensor has three axes.
For USC-HAD, the window size is 200 and the step size is 100.
The final dimension shape is $6\times 1\times 200$.
For PAMAP2, the window size is 200 and the step size is 100.
The final dimension shape is $27\times 1\times 200$.
For UCI-HAR, we directly utilize data split by the providers.
The final dimension shape is $6\times 1\times 128$.
(1) \textbf{X-person generalization (Cross-person)} aims to learn generalized models for different persons. 
This setting utilizes DSADS, USC-HAD, and PAMAP2.
Within each dataset, we randomly split the data into four groups. 
For DSADS, we choose running, ascending stairs, descending stairs, rope jumping, and playing basketball as OOD classes.
For USC-HAD, we choose Running Forward and Jumping Up as OOD classes.
For PAMAP2, we choose running, Nordic walking, and rope jumping as OOD classes. 
(2) \textbf{X-position generalization (Cross-position)} aims to learn generalized models for different sensor positions. 
This setting uses DSADS and data from each position denotes a different domain.
Thereby, a sample is split into five samples in the first dimension and the final dimension shape is $9\times 1\times 125$.
We choose running, ascending stairs, descending stairs, rope jumping, and playing basketball as OOD classes.
(3) \textbf{X-dataset generalization (Cross-dataset)} aims to learn generalized models for different datasets.
This setting uses all four datasets, and each dataset corresponds to a different domain.
Six common classes are selected.
Two sensors from each dataset that belong to the same position are selected and data is down-sampled to have the same dimension.
The final dimension shape is $6\times 1\times 50$.
We choose ascending and descending as OOD classes.

\begin{table*}[!t]
\centering
\caption{ID accuracy under X-Person, X-Position, and X-Dataset settings. \textbf{Bold} means the best while \underline{underline} means the second-best.}
\label{tab1-har-inacc}
\resizebox{0.95\textwidth}{!}{
\begin{tabular}{cllllll|lllll|lllll}
\toprule
\multicolumn{1}{l}{Task}    & Datasets               & \multicolumn{5}{c|}{DSADS}                                                          & \multicolumn{5}{c|}{USC-HAD}                                                                                                & \multicolumn{5}{c}{PAMAP2}                                                         \\
\multirow{7}{*}{X-Person}   & Targets                & 0              & 1              & 2              & 3              & AVG            & 0              & 1                                              & 2                      & 3              & AVG            & 0              & 1              & 2              & 3              & AVG            \\
\midrule                            & ERM                    & 86.37          & 72.32          & 80.77          & \underline{76.85}    & 79.08          & \textbf{81.85} & 58.99                                          & 71.61                  & 58.25          & 67.68          & 86.84          & \underline{80.62}    & 49.65          & 85.64          & 75.69          \\
                            & CORAL                  & 85.06          & \underline{81.25}    & \underline{89.82}    & 72.98          & \underline{82.28}    & 78.44          & 57.71                                          & 74.26                  & \underline{60.35}    & 67.69          & 89.82          & 79.72          & 59.58          & 84.56          & 78.42          \\
                            & DANN                   & \underline{87.14}    & 77.44          & 81.49          & 76.13          & 80.55          & 80.69          & 57.50                                          & 72.35                  & 59.77          & 67.58          & 89.92          & 77.50          & 51.83          & 87.85          & 76.78          \\
                            & GroupDRO               & 85.18          & 76.25          & 83.75          & 75.71          & 80.22          & 79.71          & \underline{62.50}                                    & \underline{74.30}            & 58.43          & \underline{68.73}    & \textbf{91.32} & 78.23          & 60.68          & 88.73          & \textbf{79.74} \\
                            & ANDMask                & 85.00          & 75.89          & 84.40          & 71.13          & 79.11          & 76.77          & 59.98                                          & 72.13                  & 56.24          & 66.28          & 89.82          & 77.82          & \underline{50.80}    & \underline{90.16}    & 77.15          \\
                            & \method & \textbf{87.68} & \textbf{82.26} & \textbf{90.30} & \textbf{86.85} & \textbf{86.77} & \underline{81.61}    & \textbf{66.08}                                 & \textbf{75.82}         & \textbf{71.15} & \textbf{73.66} & \underline{91.25}    & \textbf{82.70} & \textbf{62.62} & \textbf{90.93} & \textbf{81.88} \\ \midrule
\multirow{7}{*}{X-Position} & Targets                & 0              & 1              & 2              & 3              & \multicolumn{1}{l}{4}              & \multicolumn{1}{l|}{AVG}            & \multicolumn{1}{c}{\multirow{7}{*}{X-Dataset}} & Targets                & DSADS          & \multicolumn{1}{l}{USC-HAD}        & \multicolumn{1}{l}{HAR}            & PAMAP2         & AVG            &                &                \\ \midrule
                            & ERM                    & 33.38          & 21.62          & \underline{35.43}    & 22.50          & \multicolumn{1}{l}{19.88}          & \multicolumn{1}{l|}{26.56}          & \multicolumn{1}{c}{}                           & ERM                    & 37.11          & \multicolumn{1}{l}{53.05}          & \multicolumn{1}{l}{58.84}          & 34.34          & 45.83          &                &                \\
                            & CORAL                  & \underline{41.79}    & 27.77          & 32.16          & \underline{28.14}    & \multicolumn{1}{l}{\underline{26.07}}    & \multicolumn{1}{l|}{\underline{31.18}}    & \multicolumn{1}{c}{}                           & CORAL                  & 40.00          & \multicolumn{1}{l}{51.47}          & \multicolumn{1}{l}{59.57}          & 62.18          & 53.31          &                &                \\
                            & DANN                   & 40.04          & \underline{29.64}    & 30.74          & 22.59          & \multicolumn{1}{l}{23.93}          & \multicolumn{1}{l|}{29.39}          & \multicolumn{1}{c}{}                           & DANN                   & \underline{40.96}    & \multicolumn{1}{l}{54.49}          & \multicolumn{1}{l}{\underline{61.44}}    & 66.24          & 55.78          &                &                \\
                            & GroupDRO               & 31.64          & 24.02          & 32.74          & 29.17          &\multicolumn{1}{l}{20.94}          & \multicolumn{1}{l|}{27.70}          & \multicolumn{1}{c}{}                           & GroupDRO               & 40.00          & \multicolumn{1}{l}{\underline{55.77}}    & \multicolumn{1}{l}{60.99}          & \underline{72.10}    & \underline{57.22}    &                &                \\
                            & ANDMask                & 31.93          & 24.03          & 33.41          & 23.93          & \multicolumn{1}{l}{23.45}          & \multicolumn{1}{l|}{27.35}          & \multicolumn{1}{c}{}                           & ANDMask                & 39.57          & \multicolumn{1}{l}{48.08}          & \multicolumn{1}{l}{58.39}          & 61.68          & 51.93          &                &                \\
                            & \method & \textbf{43.10} & \textbf{30.57} & \textbf{37.57} & \textbf{31.76} & \multicolumn{1}{l}{\textbf{26.89}} &\multicolumn{1}{l|}{ \textbf{33.98}} & \multicolumn{1}{c}{}                           & \method & \textbf{70.70} & \multicolumn{1}{l}{\textbf{60.48}} & \multicolumn{1}{l}{\textbf{64.00}} & \textbf{74.28} & \textbf{67.36} &                &                           \\ \bottomrule
\end{tabular}}
\end{table*}

\begin{table}[htbp]
\centering
\caption{Average AUROC and AUPR under X-Person, X-Position, and X-Dataset settings. \textbf{Bold} means the best.}
\label{tab1-har-au}
\resizebox{0.5\textwidth}{!}{
\begin{tabular}{clllclll}
\toprule
\multicolumn{1}{l}{}         & Datasets                   & \multicolumn{2}{c}{DSADS}       & \multicolumn{2}{c}{USC-HAD}                         & \multicolumn{2}{c}{PAMAP2}      \\
\multirow{13}{*}{X-Person}   & Metrics                    & AUROC          & AUPR           & \multicolumn{1}{l}{AUROC}          & AUPR           & AUROC          & AUPR           \\ \midrule
                             & ERM-MCP                    & 66.82          & 80.52          & \multicolumn{1}{l}{15.50}          & 78.71          & 45.95          & 91.85          \\
                             & CORAL-MCP                  & 75.54          & 89.33          & \multicolumn{1}{l}{23.00}          & 80.59          & 60.79          & 94.84          \\
                             & DANN-MCP                   & 70.11          & 85.56          & \multicolumn{1}{l}{\underline{28.82}}    & 82.45          & \underline{65.59}    & \underline{96.01}    \\
                             & GroupDRO-MCP               & \underline{78.02}    & \underline{90.42}    & \multicolumn{1}{l}{27.98}          & \underline{82.47}    & 65.50          & 95.59          \\
                             & ANDMask-MCP                & 69.60          & 85.41          & \multicolumn{1}{l}{17.71}          & 79.35          & 51.15          & 93.51          \\
                             & \methodmcp  & \textbf{82.38} & \textbf{92.62} & \multicolumn{1}{l}{\textbf{37.50}} & \textbf{85.55} & \textbf{75.02} & \textbf{97.16} \\
                             & ERM-MAH                    & 65.51          & 84.28          & \multicolumn{1}{l}{98.67}          & 99.80          & 74.59          & 96.52          \\
                             & CORAL-MAH                  & 67.96          & 85.02          & \multicolumn{1}{l}{99.22}          & 99.89          & \underline{88.31}    & \underline{98.89}    \\
                             & DANN-MAH                   & 63.33          & 78.88          & \multicolumn{1}{l}{\underline{99.36}}    & \underline{99.91}    & 86.15          & 98.61          \\
                             & GroupDRO-MAH               & \underline{73.60}    & \underline{87.88}    & \multicolumn{1}{l}{99.25}          & 99.89          & 87.37          & 98.77          \\
                             & ANDMask-MAH                & 51.51          & 75.11          & \multicolumn{1}{l}{98.74}          & 99.82          & 85.71          & 98.57          \\
                             & \method-MAH & \textbf{93.90} & \textbf{97.36} & \multicolumn{1}{l}{\textbf{99.56}} & \textbf{99.94} & \textbf{90.37} & \textbf{99.14} \\ \midrule
\multirow{13}{*}{X-Position} & Metrics                    & AUROC          & AUPR           & \multirow{13}{*}{X-Dataset}        & AUROC          & AUPR           &                \\ \midrule
                             & ERM-MCP                    & 42.23          & 67.65          &                                    & 40.03          & 75.91          &                \\
                             & CORAL-MCP                  & \underline{43.43}    & 68.75          &                                    & 44.60          & 77.80          &                \\
                             & DANN-MCP                   & 43.20          & 68.33          &                                    & 41.21          & 77.97          &                \\
                             & GroupDRO-MCP               & 42.95          & \underline{69.11}    &                                    & \underline{47.91}    & \underline{79.24}    &                \\
                             & ANDMask-MCP                & 39.14          & 66.57          &                                    & 38.97          & 74.38          &                \\
                             & \methodmcp  & \textbf{54.58} & \textbf{76.78} &                                    & \textbf{60.23} & \textbf{85.67} &                \\
                             & ERM-MAH                    & \underline{87.92}    & \underline{95.40}    &                                    & 55.66          & 80.49          &                \\
                             & CORAL-MAH                  & 87.35          & 95.26          &                                    & \underline{75.17}    & \underline{90.77}    &                \\
                             & DANN-MAH                   & 87.04          & 95.01          &                                    & 65.07          & 85.46          &                \\
                             & GroupDRO-MAH               & 87.51          & 95.24          &                                    & 73.66          & 89.70          &                \\
                             & ANDMask-MAH                & 86.33          & 94.37          &                                    & 53.83          & 77.03          &                \\
                             & \method-MAH & \textbf{91.85} & \textbf{97.04} &                                    & \textbf{85.50} & \textbf{94.86} &                     \\ \bottomrule
\end{tabular}}
\end{table}

The ID classification results are shown in \tableautorefname~\ref{tab1-har-inacc} while OOD detection results are shown in \tableautorefname~\ref{tab1-har-au}.
We have the following observation from these results.
1) Similar to EMG, our method has the best ID accuracy, AUROC, and AUPR on average, which demonstrates that our method has excellent abilities of classification and OOD detection.
2) When the task is difficult: in the X-Person setting, USC-HAD may be the most difficult task.
Although it has more samples, it contains 14 subjects with only two sensors in one position, which may bring more difficulty in learning.
The results prove the above argument that all methods perform terribly on this benchmark while ours has the largest improvement.
3) When datasets are significantly more diverse: compared to X-Person and X-Position settings, X-Dataset may be more difficult since all datasets are totally different and samples are influenced by subjects, devices, sensor positions, and some other factors.
In this setting, our method is substantially better than others.
4) Limited data: for tasks on DSADS, the number of training data is limited,
In this case, enhancing diversity can still bring a remarkable improvement and our method can boost the performance.
5) For time series OOD detection, it is better to utilize Mahalanobis distance since it is less influenced by over-confidence in deep learning models.
For USC-HAD, \methodmal even achieves an improvement of over $62\%$ on average AUROC compared to \methodmcp.

\section{Experiments on OOD generalization}
\label{sec-exp-cls}
We perform evaluations on four diverse time series classification tasks: gesture recognition, speech commands recognition, wearable stress\&affect detection, and sensor-based activity recognition.
Settings and implementations are similar to OOD detection.
Besides the comparison methods mentioned above, we add more latest methods designed for classification.
Mixup~\cite{zhang2018mixup} is a method that utilizes interpolation to generate more data for better generalization. Ours mainly focuses on generalized representation learning.
RSC~\cite{huang2020self} is a self-challenging training algorithm that forces the network to activate features as much as possible by manipulating gradients. It belongs to gradient operation-based DG while ours is to learn generalized features.
GILE~\cite{qian2021latent} is a disentanglement method designed for cross-person human activity recognition. It is based on VAEs and requires domain labels.
AdaRNN~\cite{du2021adarnn} is a method with a two-stage that is non-differential and is tailored for RNN. A specific algorithm is designed for splitting. Ours is universal and is differential with better performance.
Per-segment accuracy is the evaluation metric.

\subsection{Gesture Recognition}
First, we evaluate \method on EMG for gestures Data Set~\cite{lobov2018latent}.
We randomly divide 36 subjects into four domains (i.e., $0, 1, 2, 3$).
\figurename~\ref{fig:emg} shows that with the same backbone, our method achieves the best average performance and is $\mathbf{4.3}\%$ better than the second-best method.
\method even outperforms AdaRNN which has a stronger backbone.

\begin{table*}[t!]
\centering
\caption{Accuracy on cross-person generalization. ``Target'' $0 \sim 4$ denotes the unseen test set.}
\resizebox{.8\textwidth}{!}{%
\begin{tabular}{l|ccccc|ccccc|ccccc|c}
\toprule
                       & \multicolumn{5}{c|}{DSADS}                                                          & \multicolumn{5}{c|}{USC-HAD}                                                       & \multicolumn{5}{c|}{PAMAP2}                                                      &   ALL             \\
                       
Target                 & 0              & 1              & 2              & 3              & AVG              & 0              & 1             & 2              & 3              & AVG              & 0              & 1             & 2              & 3              & AVG           & AVG              \\ \midrule
ERM                    & 83.1          & 79.3           & 87.8          & 71.0          & 80.3           & 81.0          & 57.7         & 74.0          & 65.9          & 69.7          & \underline{90.0}    & 78.1         & 55.8          & 84.4          & 77.1       & 75.7          \\
DANN                   & 89.1          & 84.2          & 85.9          & 83.4          & 85.6          & 81.2          & 57.9         & \underline{76.7}    & 70.7    & \underline{71.6} & 82.2          & 78.1         & 55.4          & 87.3          & 75.7       & 77.7          \\
CORAL                  & \underline{91.0}          & 85.8          & 86.6          & 78.2          & 85.4          & 78.8          & 58.9         & 75.0          & 53.7          & 66.6          & 86.2          & 77.8         & 49.0             & \underline{87.8}    & 75.2        & 75.7          \\
Mixup                  & 89.6          & 82.2          &89.2    & \textbf{86.9} & \underline{87.0}    & 80.0          & \textbf{64.1}   & 74.3          & 61.3          & 69.9          & 89.4          & \underline{80.3}    & 58.4          & 87.7          & \underline{79.0} & \underline{78.6}    \\
GroupDRO               & \textbf{91.7}    & \underline{85.9}    & 87.6          & 78.3          & 85.9          & 80.1          & 55.5         & 74.7          & 60.0          & 67.6          & 85.2          & 77.7         & 56.2          & 85.0          & 76.0       & 76.5          \\
RSC                    & 84.9          & 82.3          & 86.7          & 77.7          & 82.9          & \underline{81.9}    & 57.9         & 73.4          & 65.1          & 69.6          & 87.1          & 76.9         & \underline{60.3} & \textbf{87.8} & 78.0       & 76.9          \\
ANDMask                & 85.0          & 75.8          & 87.0          & 77.6          & 81.4          & 79.9          & 55.3         & 74.5          & 65.0          & 68.7          & 86.7          & 76.4         & 43.6          & 85.6          & 73.1       & 74.4          \\
GILE                   & 81.0             & 75.0             & 77.0             & 66.0             & 74.7          & 78,0             & 62.0            & 77.0             & 63.0             & 70.0             & 83.0             & 68.0            & 42.0             & 76.0             & 67.5        & 70.7          \\
AdaRNN & 80.9 &	75.5 &	\textbf{90.2} &	75.5 &	80.5 &78.6 &	55.3 &	66.9 &	\textbf{73.7}&	68.6 &81.6 &	71.8 &	45.4 &	82.7	 &70.4 &73.2\\
\method & 90.4	&\textbf{86.5}&	\underline{90.0}&	\underline{86.1}&	\textbf{88.2} &\textbf{82.6}	&\underline{63.5}&	\textbf{78.7}&	\underline{71.3}&	\textbf{74.0}&	\textbf{91.0}&	\textbf{84.3}&	\textbf{60.5}	&87.7	&\textbf{80.8}&	\textbf{81.0}
\\
\bottomrule
\end{tabular}}
\label{tab:my-table-crosspeople}
\end{table*}
\begin{table*}[t!]
\centering
\caption{Classification accuracy on cross-position, cross-dataset, and one-to-another generalization.}
\label{tab:my-table-otherhar}
\resizebox{.9\textwidth}{!}{%
\begin{tabular}{l|cccccc|ccccc|cccc}
\toprule
                       & \multicolumn{6}{c|}{Cross-position generalization}                                                                      & \multicolumn{5}{c|}{Cross-dataset generalization}                                  & \multicolumn{4}{c}{One-Person-To-Another}                                          \\

Target                 & 0              & 1              & 2              & 3              & 4              & AVG             & 0              & 1              & 2              & 3              & AVG &DSADS&USC-HAD&PAMAP2&AVG              \\ \midrule
ERM                    & 41.5          & 26.7          & 35.8          & 21.4          & 27.3          & 30.6          & 26.4          & 29.6          & 44.4          & 32.9          & 33.3   &51.3&	46.2	&53.1&50.2
           \\
DANN                   & 45.4          & 25.3          & 38.1          & 28.9          & 25.1          & 32.6          & 29.7          & 45.3          & \underline{46.1} & 43.8          & \underline{41.2}     &-&-&- &-    \\
CORAL                  & 33.2          & 25.2          & 25.8          & 22.3          & 20.6          & 25.4          & 39.5          & 41.8          & 39.1           & 36.6          & 39.2      &-&-&-&-               \\
Mixup                  & \textbf{48.8}    & \textbf{34.2} & 37.5          & \underline{29.5}     & \underline{29.9}    & \underline{36.0}    & 37.3          & \textbf{47.4}    & 40.2          & 23.1          & 37.0   &\underline{62.7}&	46.3	&58.6&55.8
       \\
GroupDRO               & 27.1          & 26.7          & 24.3          & 18.4          & 24.8          & 24.3          & \textbf{51.4} & 36.7          & 33.2           & 33.8           & 38.8    &51.3&	48.0	&53.1    &50.8           \\
RSC                    & 46.6          & 27.4          & 35.9          & 27.0          & 29.8          & 33.3          & 33.1           & 39.7           & 45.3   & \underline{45.9}    & 41.0& 59.1&	\underline{49.0}&	\underline{59.7}&\underline{55.9}
                \\
ANDMask                & 47.5          & 31.1          & \underline{39.2}    & 30.2          & 29.9           & 35.6          & 41.7          & 33.8          & 43.2          & 40.2          & 39.7  &57.2&	45.9&	54.3&52.5
            \\
\method & \underline{47.7} &	\underline{32.9}	&\textbf{44.5}&	\textbf{31.6}	&\textbf{30.4}&	\textbf{37.4}&	\underline{48.7}&	\underline{46.9}	&\textbf{49.0}&	\textbf{59.9}	&\textbf{51.1}&\textbf{67.6}&	\textbf{55.0}&	\textbf{62.5}&\textbf{61.7}

\\
\bottomrule
\end{tabular}}
\end{table*}

\subsection{Speech Commands}

\begin{figure}[t!]
    \centering
    \subfigure[EMG]{
        \includegraphics[height=0.11\textwidth]{./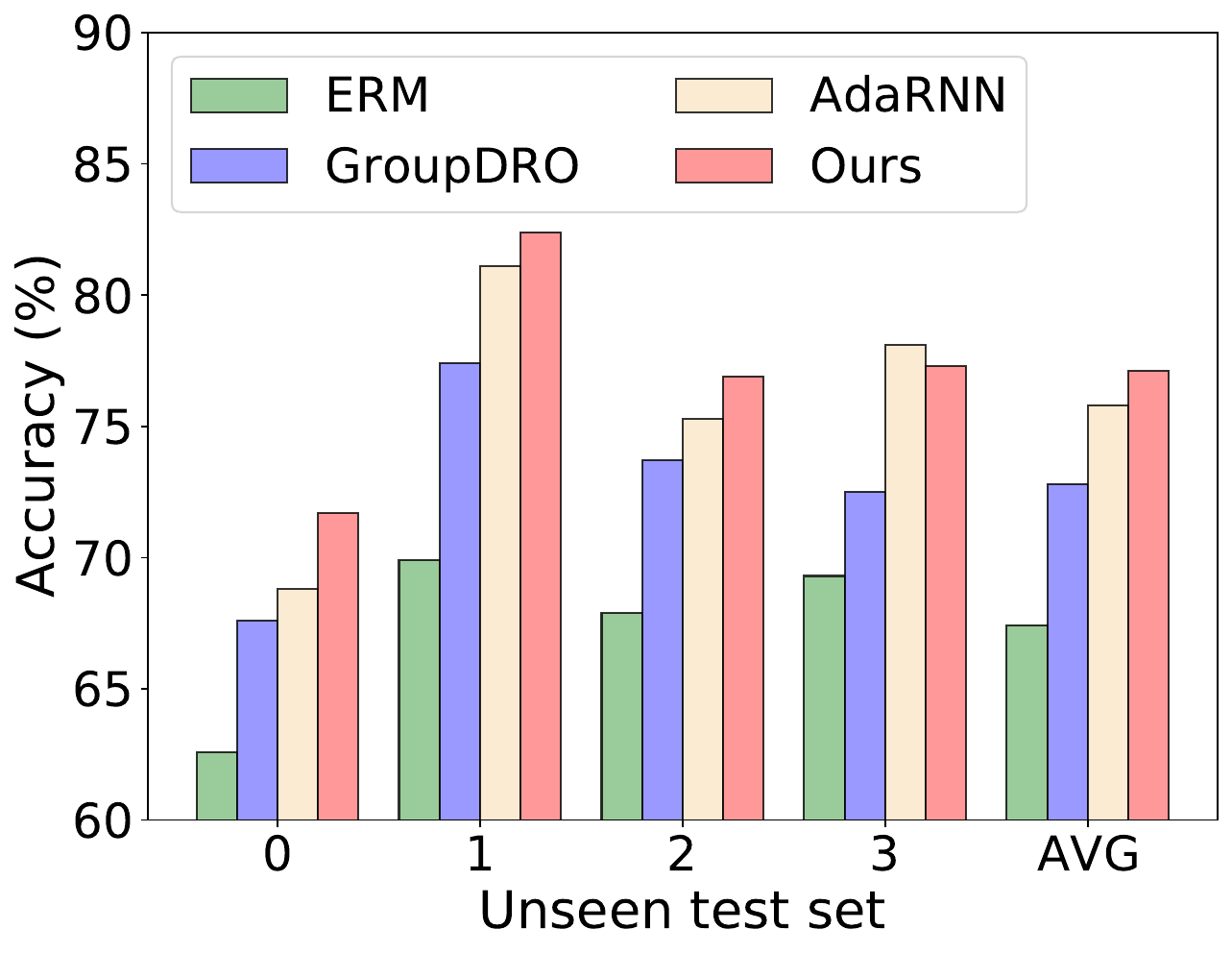}
        \label{fig:emg}
    }
    \subfigure[Speech commands]{
        \includegraphics[height=0.11\textwidth]{./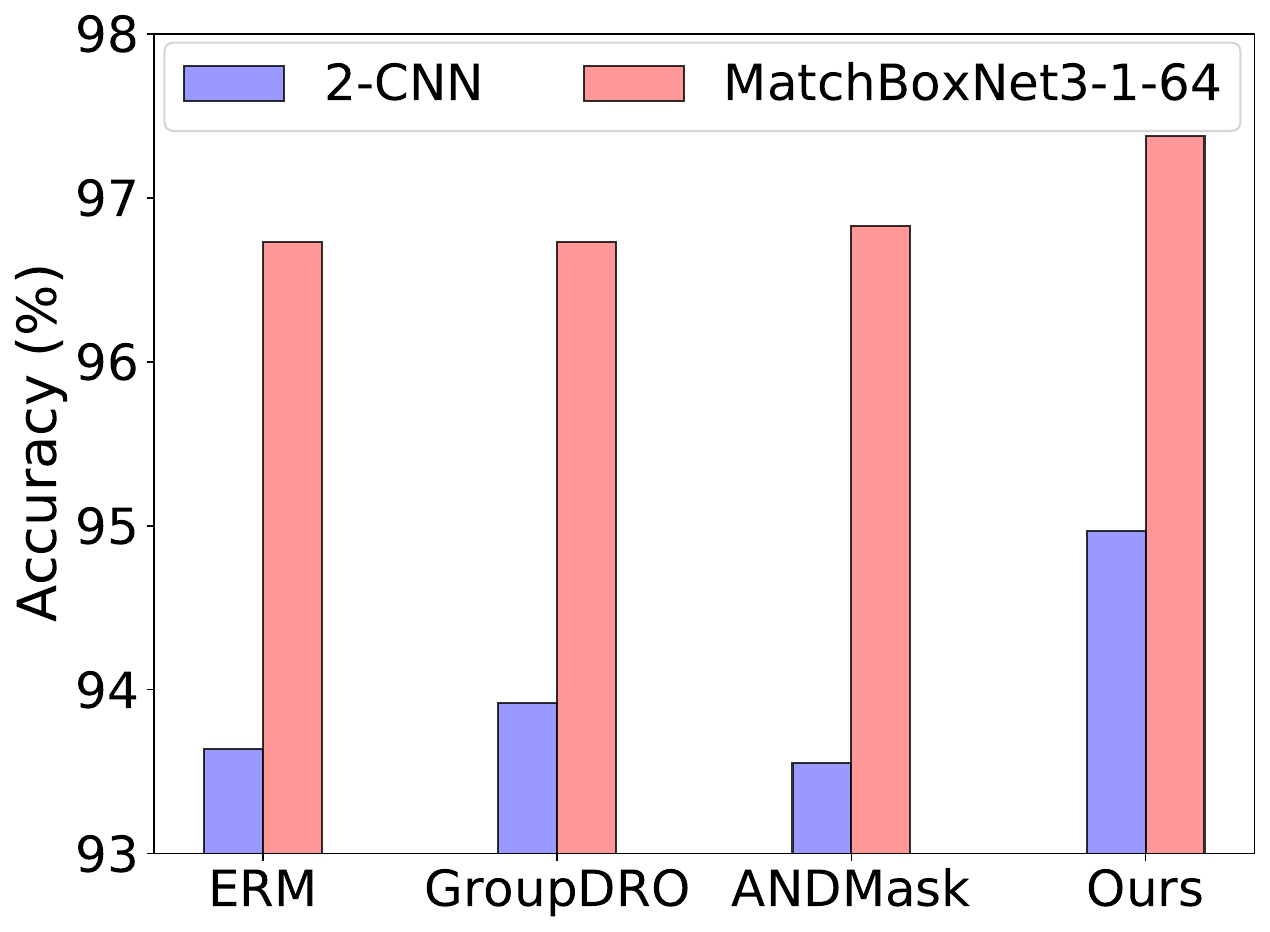}
        \label{fig:speech}
    }
    \subfigure[WESAD]{
        \includegraphics[height=0.11\textwidth]{./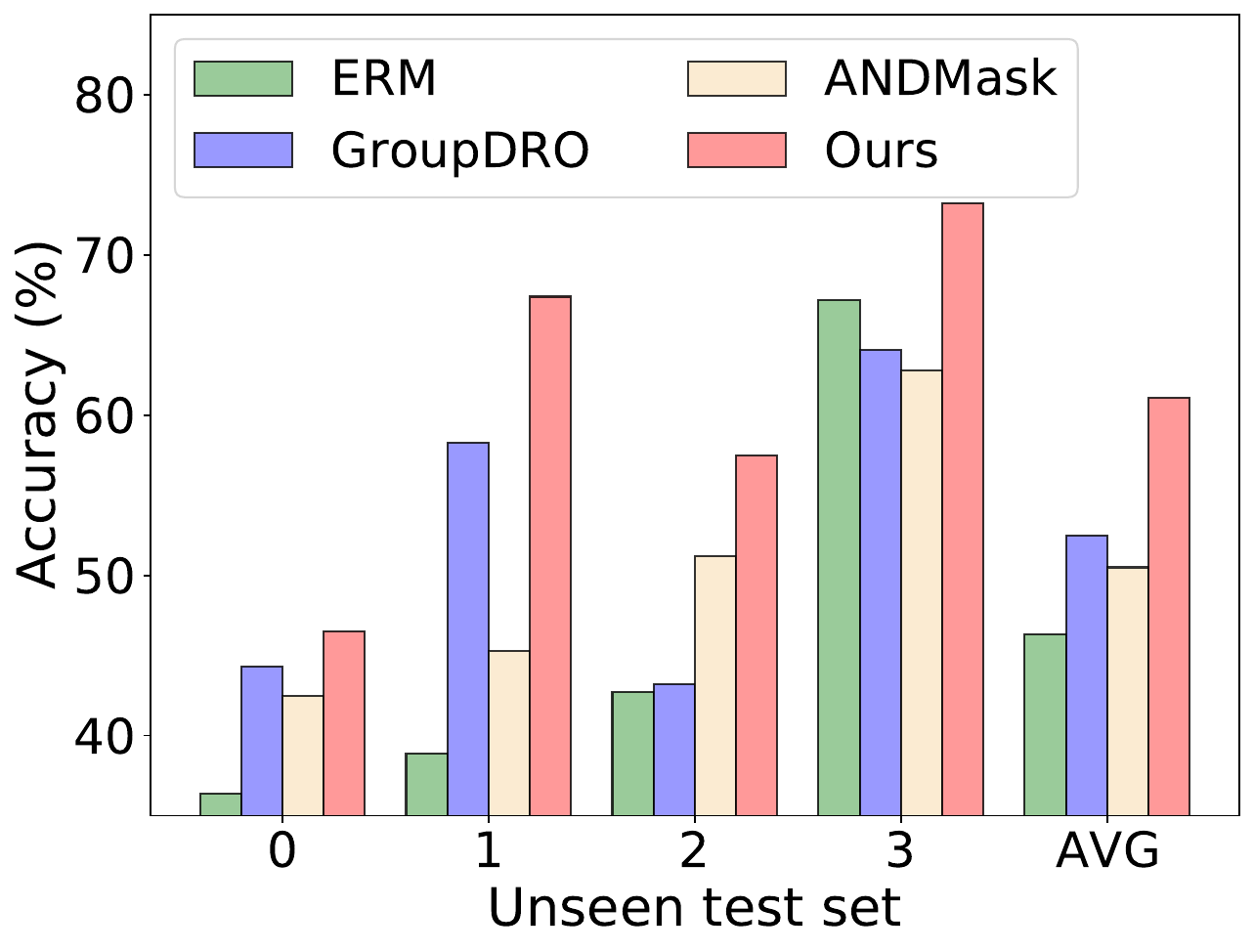}
        \label{fig:wesad}
    }
    
    \caption{Results on EMG, Speech commands and WESAD.}
    \label{fig-cls-res-two}
\end{figure}

Then, we adopt a regular speech recognition task, the Speech Commands dataset~\cite{warden2018speech}.
It consists of one-second audio recordings of both background noise and spoken words such as `left' and `right'.
It is collected from more than 2,000 persons, thus is more complicated.
Following \cite{kidger2020neuralcde}, we use 34,975 time series corresponding to ten spoken words to produce a balanced classification problem.
Since this dataset is collected from multiple persons, the training and test distributions are different, which is also an OOD problem with one training domain.
We do not split samples due to too many subjects and few audios per subject.
\figurename~\ref{fig:speech} shows the results on two different backbones.
Compared with GroupDRO, \method has over $\mathbf{1}\%$ improvement with a basic CNN backbone and over $\mathbf{0.6}\%$ improvement with a strong backbone MatchBoxNet3-1-64~\cite{majumdar2020matchboxnet}.
It demonstrates the superiority of our method on a regular time-series benchmark containing massive distributions.

\subsection{Wearable Stress and Affect Detection}

    

We further evaluate \method on WESAD.
We split 15 subjects into four domains.
Results \figurename~\ref{fig:wesad} showed that our method achieves the best performance compared to other state-of-the-art methods with an improvement of over $\textbf{8}\%$ on this larger dataset.

\subsection{Sensor-based Human Activity Recognition}
Finally, we construct \emph{four} diverse OOD settings on Sensor-based Human Activity Recognition.
Besides the settings mentioned above, we add another difficult setting, One-Person-To-Another.
\textbf{One-Person-To-Another} aims to learn generalized models for different persons from data of a single person.
This setting adopts DSADS, USC-HAD, and PAMAP2.
In each dataset, we randomly select four pairs of persons where one is the training and the other is the test.
Four tasks are $1\rightarrow 0, 3\rightarrow2, 5\rightarrow4,$ and $7\rightarrow6$.
Each number corresponds to one subject.
And the final dimension shape is $45\times 1 \times 125, 6\times 1\times 200,$ and $27\times 1\times 200$ for DSADS, USC-HAD, and PAMAP2 respectively.
\footnote{In the One-Person-To-Another setting, we only report the average accuracy of four tasks on each dataset.}

\tablename~\ref{tab:my-table-crosspeople} and \ref{tab:my-table-otherhar} 
show the results on four settings for HAR, where our method significantly outperforms the second-best baseline by $\mathbf{2.4}\%$, $\mathbf{1.4}\%$, $\mathbf{9.9}\%$, and $\mathbf{5.8}\%$ respectively.
All results show the superiority of \method.

We observe more insightful conclusions similar to OOD detection.
(1) \emph{When the task is difficult:} In the Cross-Person setting, USC-HAD may be the most difficult task.
The results prove the above argument that all methods perform terribly on this benchmark while ours has the largest improvement.
(2) \emph{When datasets are significantly more diverse:} Compared to Cross-Person and Cross-Position settings, Cross-Dataset may be more difficult. 
In this setting, our method is substantially better than others.
(3) \emph{Limited data:} Compared with the Cross-Person setting, One-Person-To-Another is more difficult since it has fewer data samples.
In this case, enhancing diversity can bring a remarkable improvement and our method can boost the performance.

\begin{figure*}[t!]
    \centering
    \subfigure[Class-invariant feat.]{
        \includegraphics[width=0.23\textwidth]{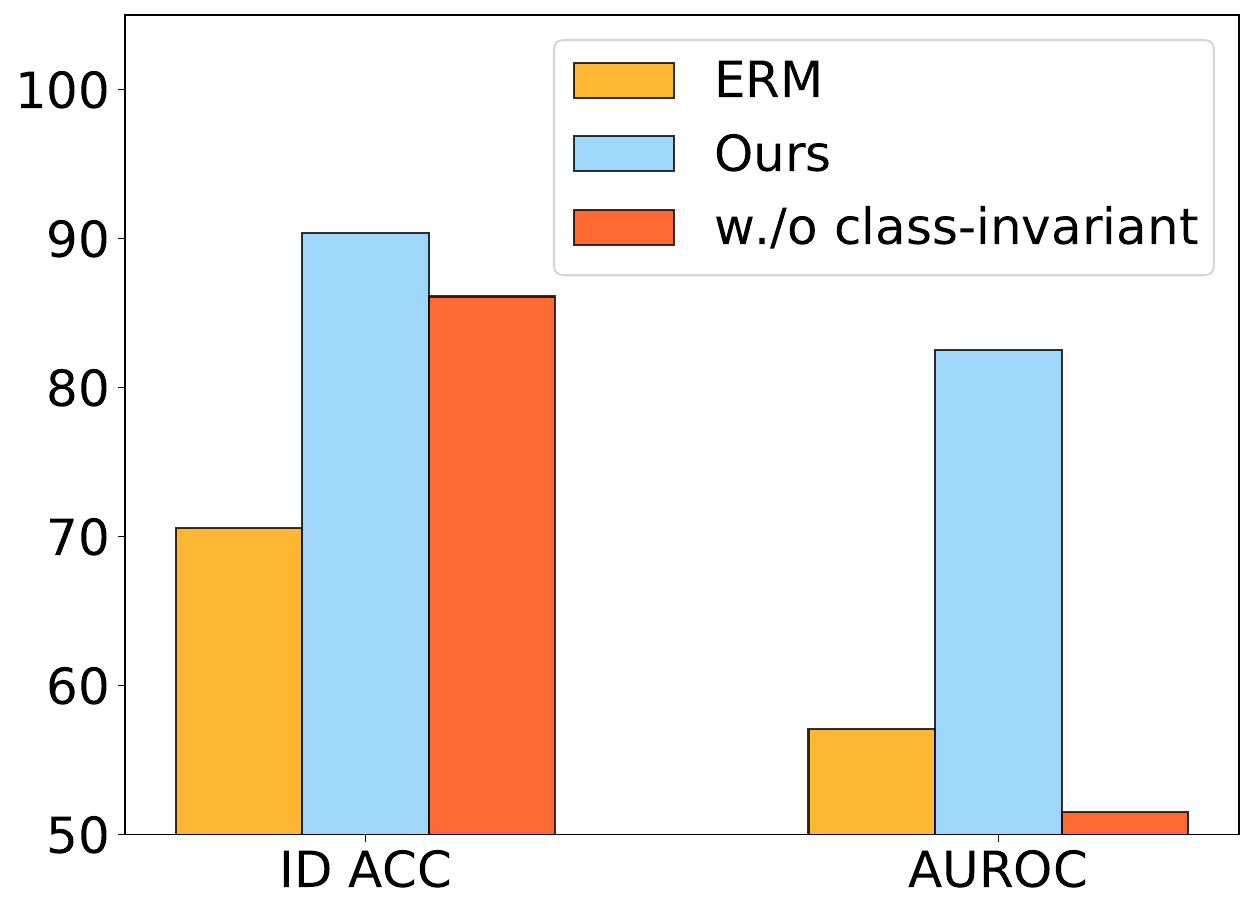}
        \label{fig1-abla-adv}
    }
    \subfigure[Update feat. with $d'$]{
        \includegraphics[width=0.23\textwidth]{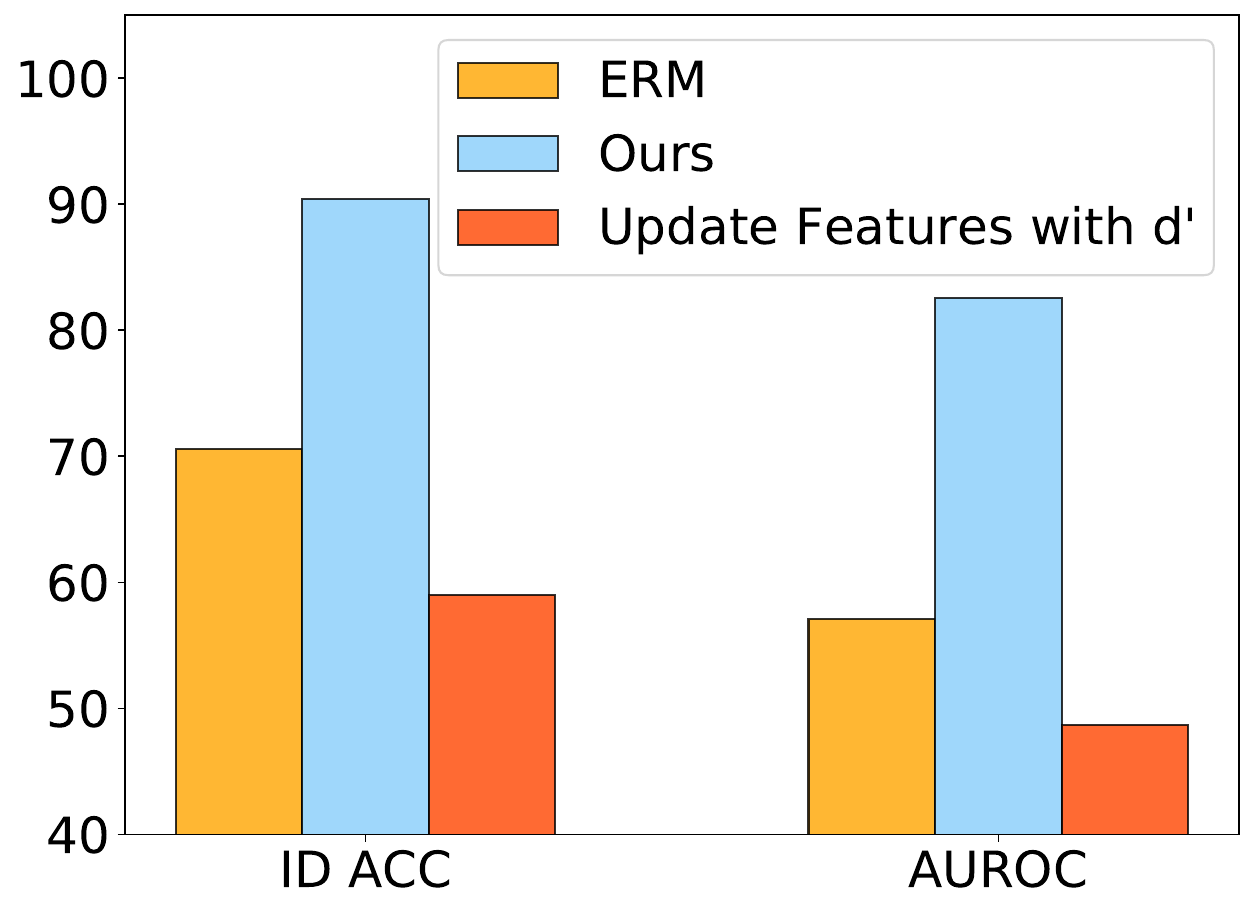}
        \label{fig1-abla-d}
    }  
    \subfigure[Update feature with $y$]{
        \includegraphics[width=0.23\textwidth]{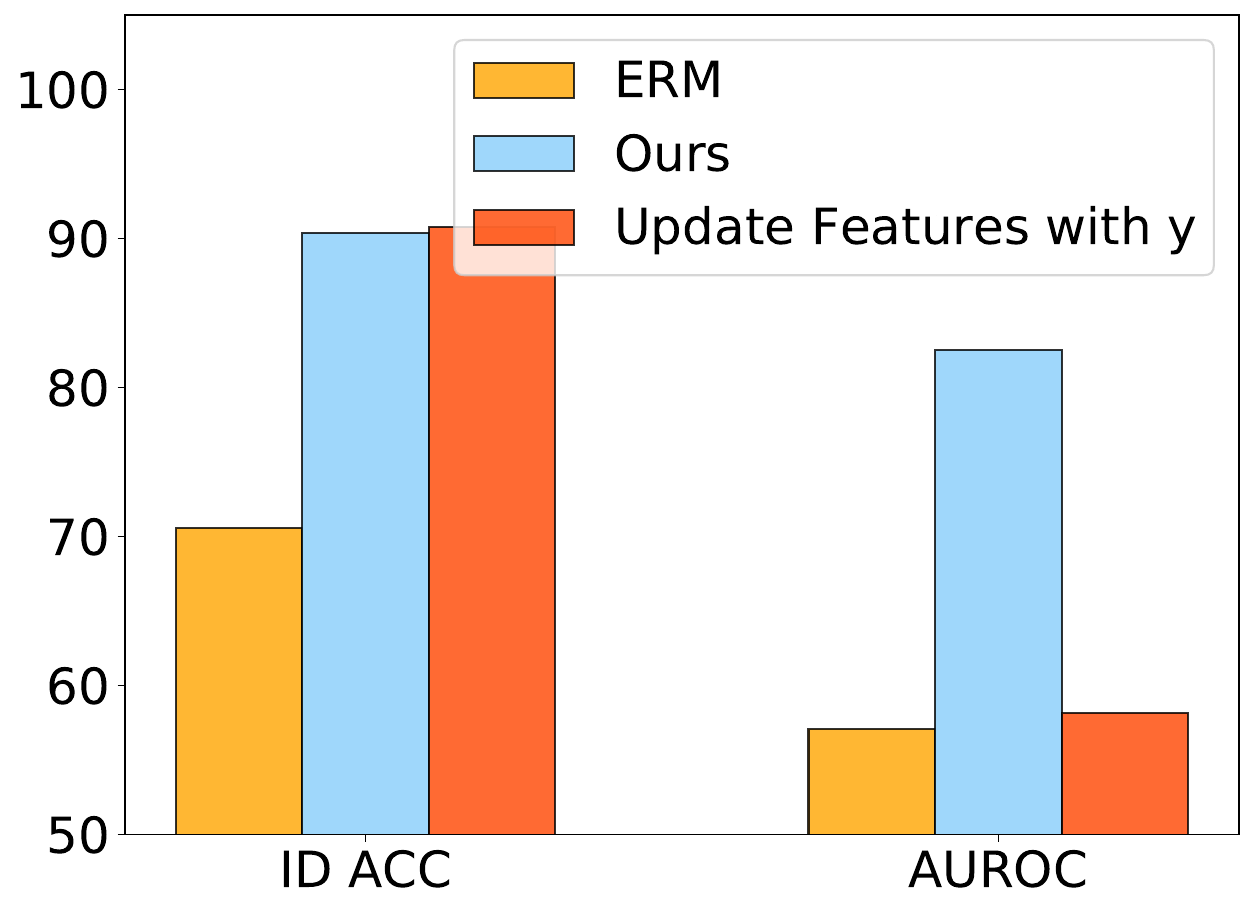}
        \label{fig1-abla-y}
    }
    \subfigure[\#Domain $K$ (EMG)]{
        \includegraphics[width=0.23\textwidth]{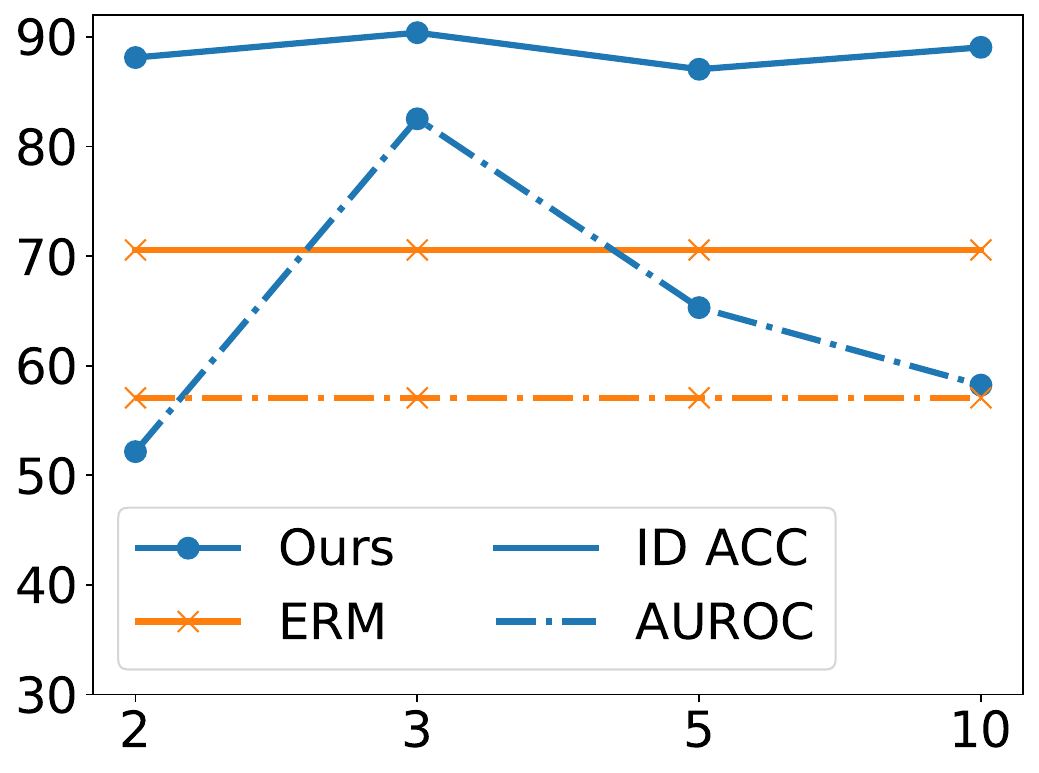}
        \label{fig1-ablat-k}
    }
    \caption{Ablation study of \method for detection.}
    \label{fig1-ablat}
\end{figure*}

\begin{figure*}[t!]
    \centering
    \subfigure[Class-invariant feat.]{
        \includegraphics[width=0.23\textwidth]{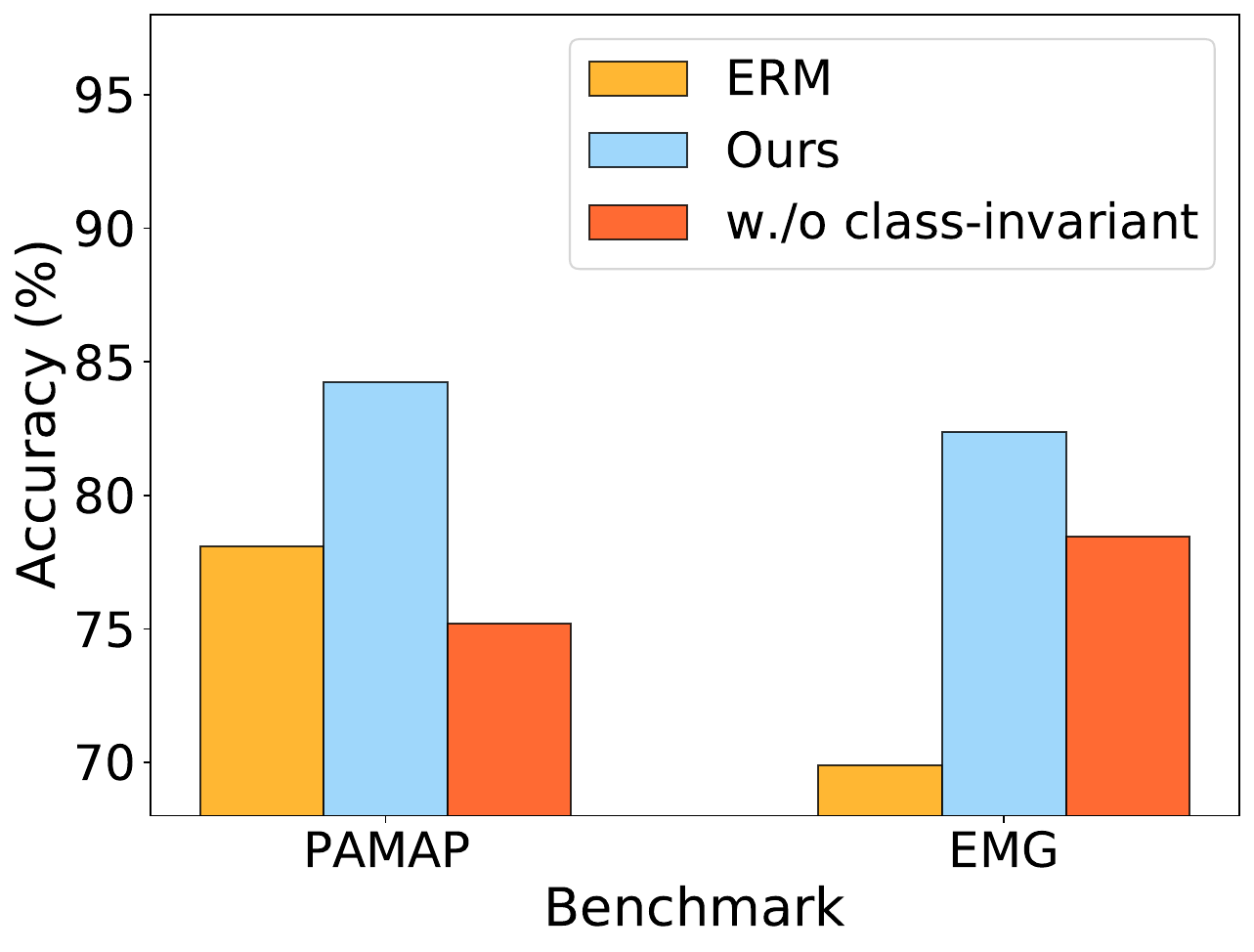}
        \label{fig-abla-adv}
    }
    \subfigure[Update feat. with $d'$]{
        \includegraphics[width=0.23\textwidth]{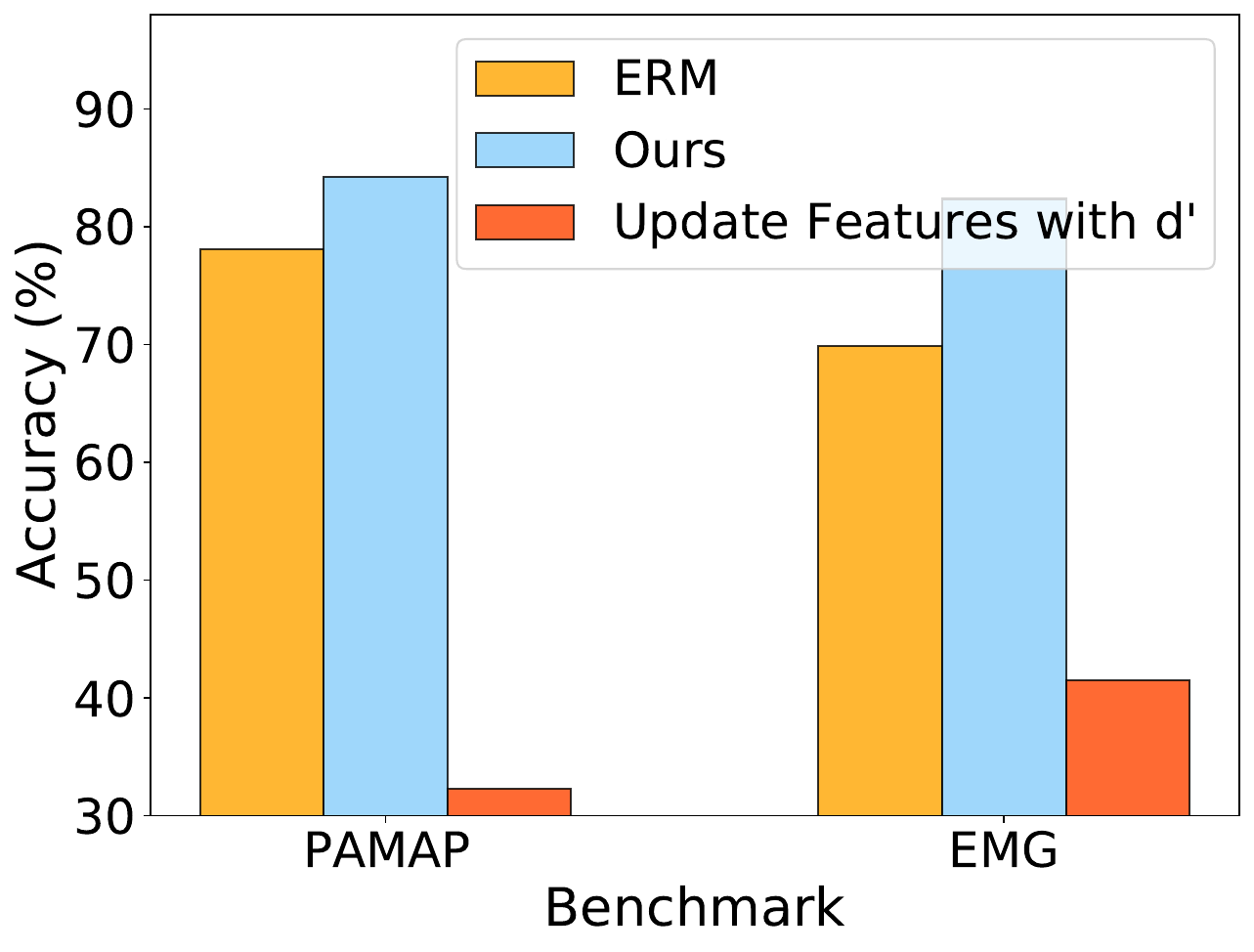}
        \label{fig-abla-d}
    }  
    \subfigure[Update feature with $y$]{
        \includegraphics[width=0.23\textwidth]{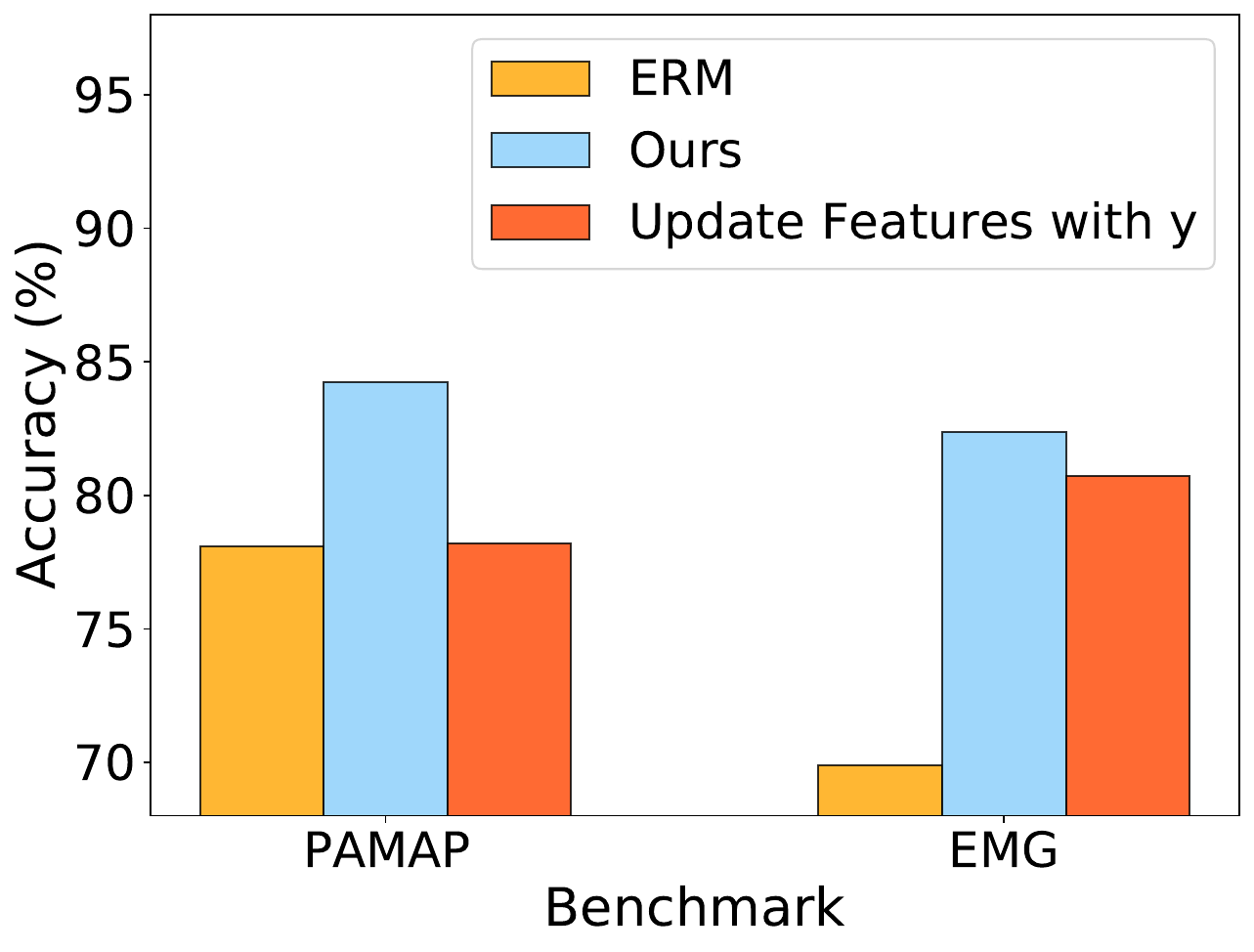}
        \label{fig-abla-y}
    }
    \subfigure[\#Domain $K$ (EMG)]{
        \includegraphics[width=0.23\textwidth]{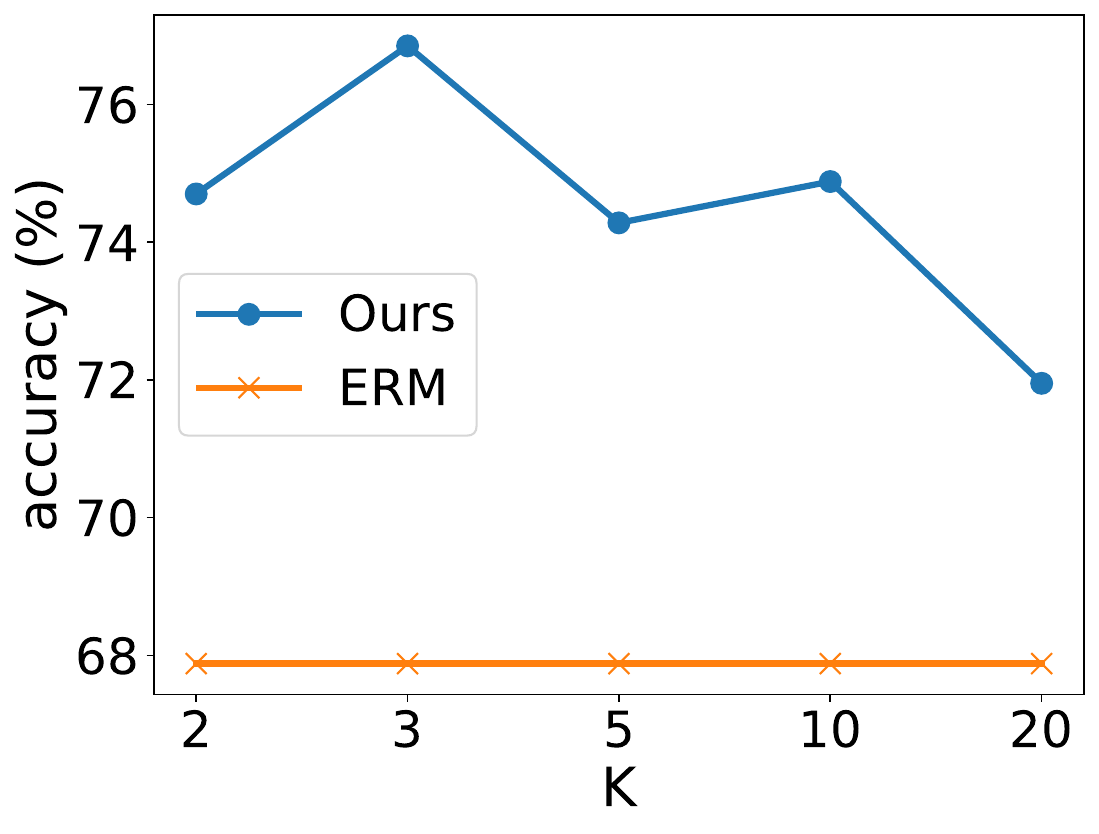}
        \label{fig-ablat-k}
    }
    \caption{Ablation study of \method for generalization.}
    \label{fig-ablat}
\end{figure*}

\section{Analysis}
\label{sec-analay-exp}
\subsection{Ablation study}
We present ablation study to answer the following three questions.
(1) \emph{Why obtaining pseudo domain labels with class-invariant features in step 3?}
If we obtain pseudo domain labels with common features, domain labels may have correlations with class labels, which may introduce contradictions when learning domain-invariant representations and lead to common performance. 
This is certified by the results in \figurename~\ref{fig1-abla-adv}.
(2) \emph{Why using fine-grained domain-class labels in step 2?}
If we utilize pseudo domain labels to update the feature net, it may make the representations seriously biased towards domain-related features and thereby leads to terrible performance on classification, which is proved in \figurename~\ref{fig1-abla-d}.
If we only utilize class labels to update the feature net, it may make representations biased to class-related features, thus \method is unable to obtain true latent sub-domains, as shown in \figurename~\ref{fig1-abla-y}.
Hence, we should employ fine-grained domain-class labels to obtain representations with both domain and class information.
(3) \emph{The more latent domains, the better?}
More latent domains may not bring better results (\figurename~\ref{fig1-ablat-k}) since a dataset may only have a few latent domains and introducing more may contradict its intrinsic data property.
Plus, more latent domains also make it harder to obtain pseudo domain labels and learn domain-invariant features.
For generalization, we can obtain similar observations to OOD detection from \figurename~\ref{fig-ablat}.

\begin{figure*}[htbp]
    \centering
    \subfigure[\#Latent domains $K$]{
        \includegraphics[width=0.23\textwidth]{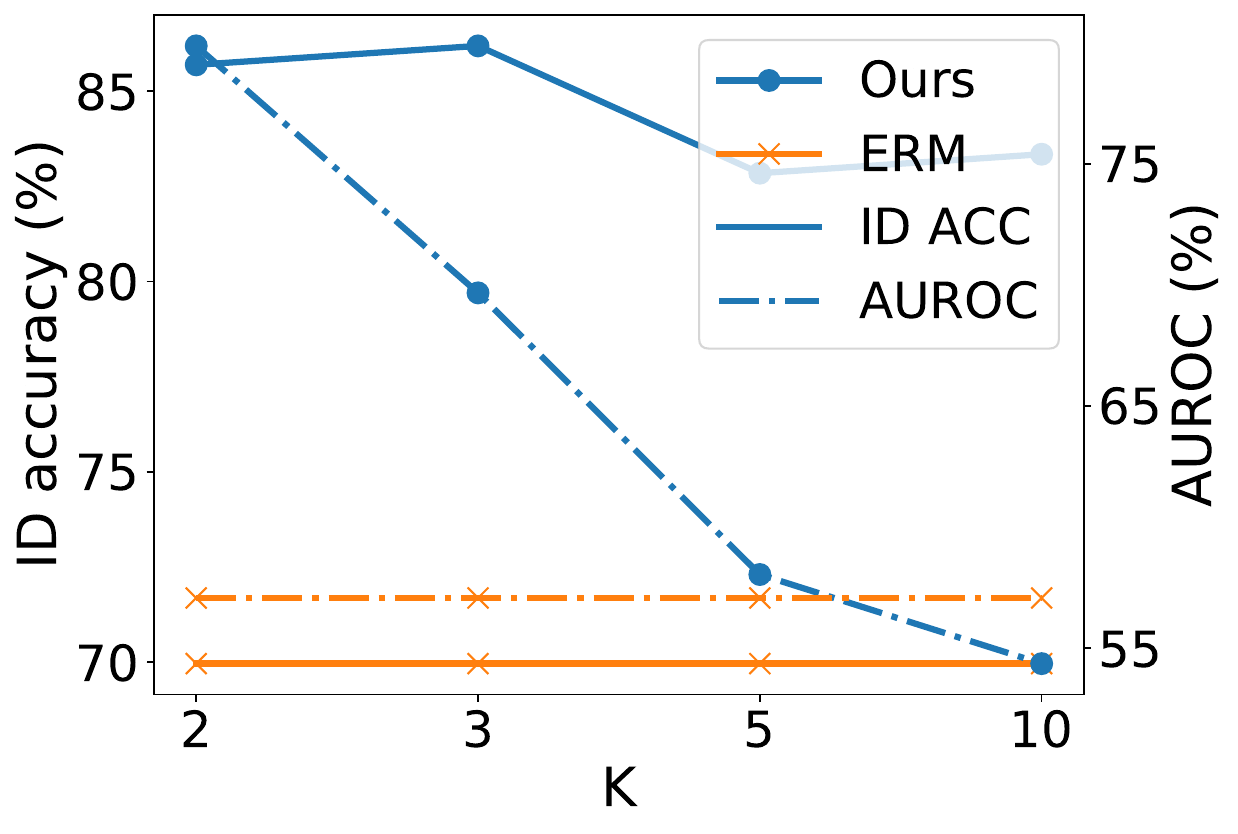}
        \label{fig1-sens-k}
    }
    \subfigure[$\lambda_1$]{
        \includegraphics[width=0.23\textwidth]{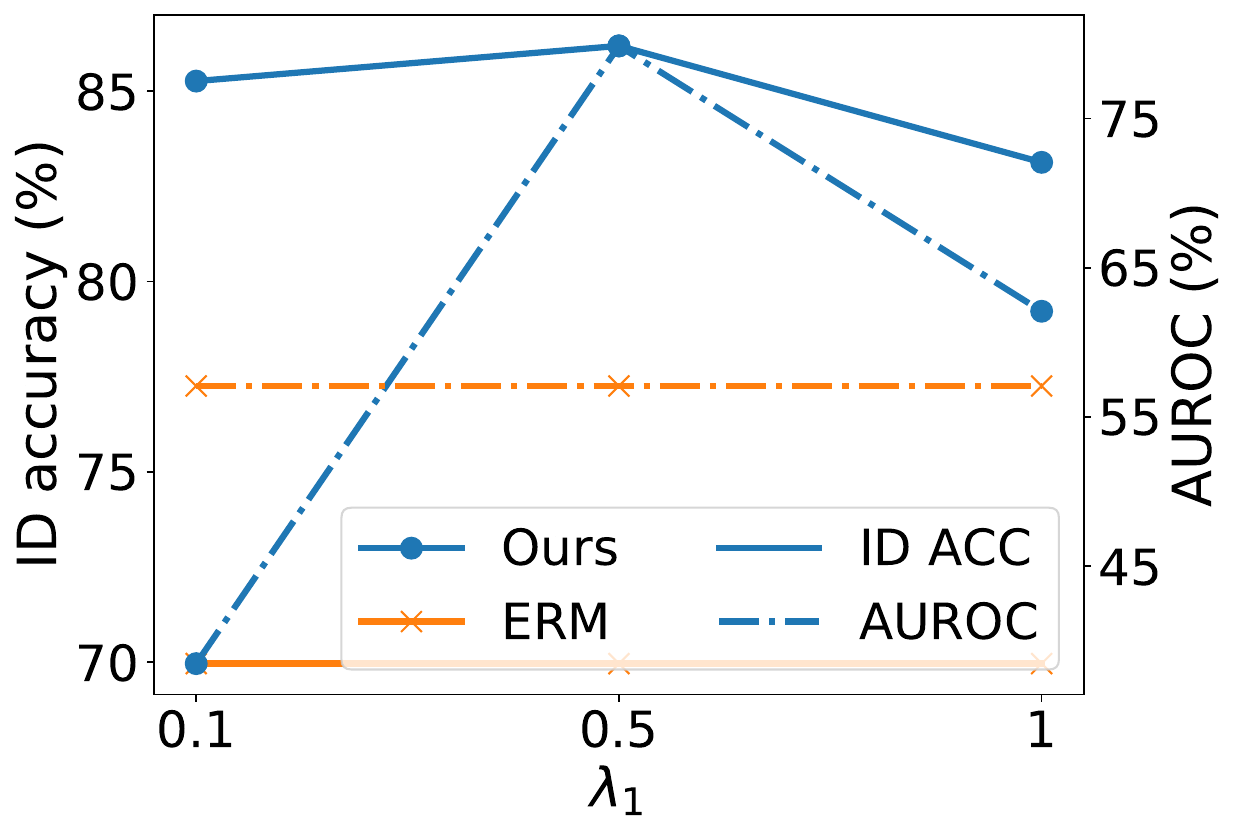}
        \label{fig1-sens-l1}
    }  
    \subfigure[$\lambda_2$]{
        \includegraphics[width=0.23\textwidth]{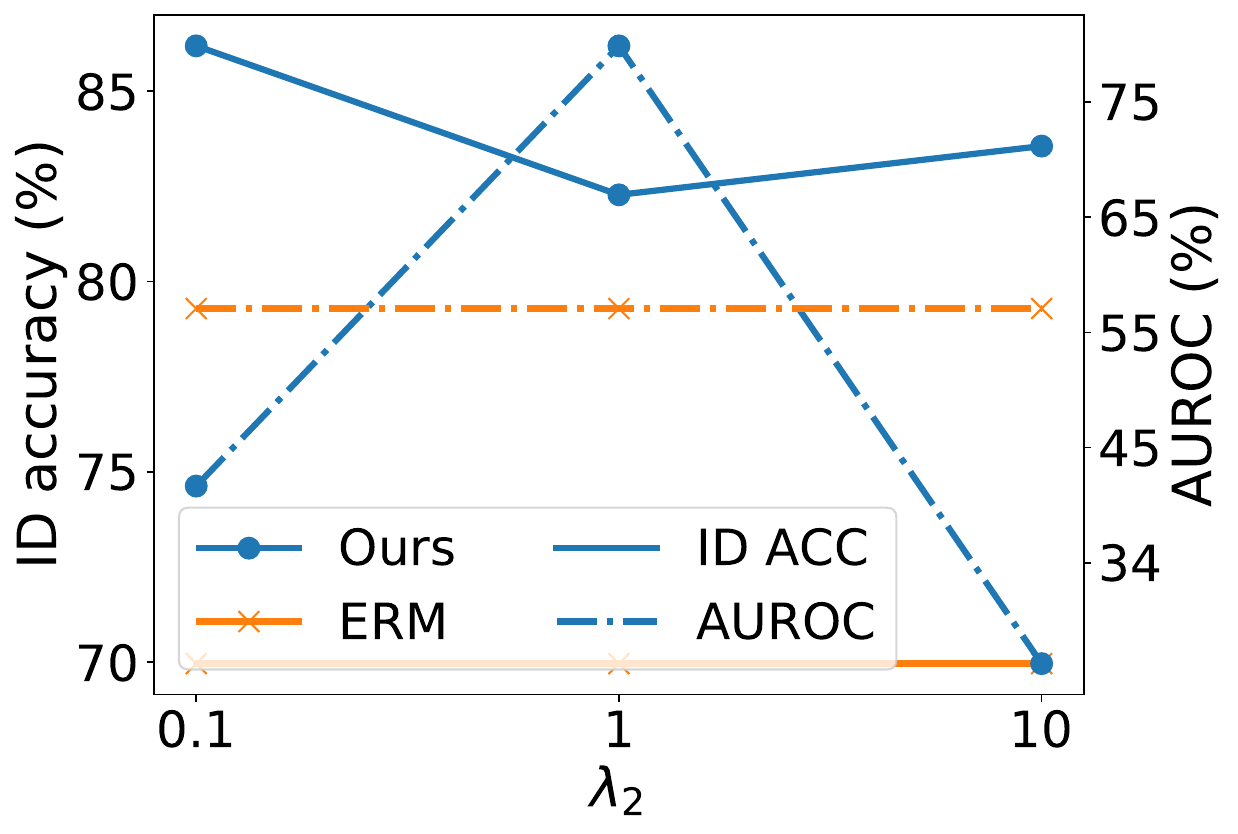}
        \label{fig1-sens-l2}
    }
    \subfigure[Local epoch and Round]{
        \includegraphics[width=0.23\textwidth]{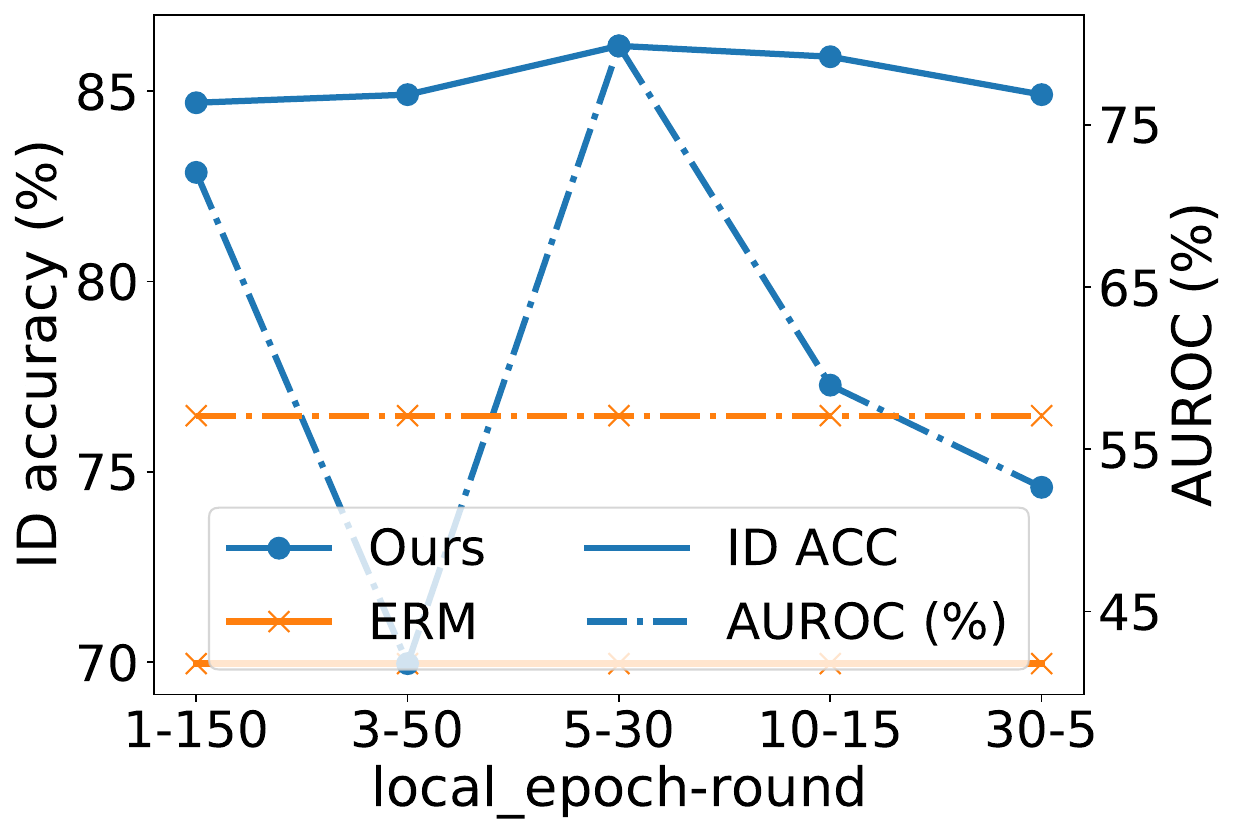}
        \label{fig1-sens-epoch}
    }
    \caption{Parameter sensitivity analysis (EMG) for detection.}
    \label{fig1-sens}
\end{figure*}

\begin{figure*}[htbp]
    \centering
    \subfigure[\#Latent domains $K$]{
        \includegraphics[width=0.22\textwidth]{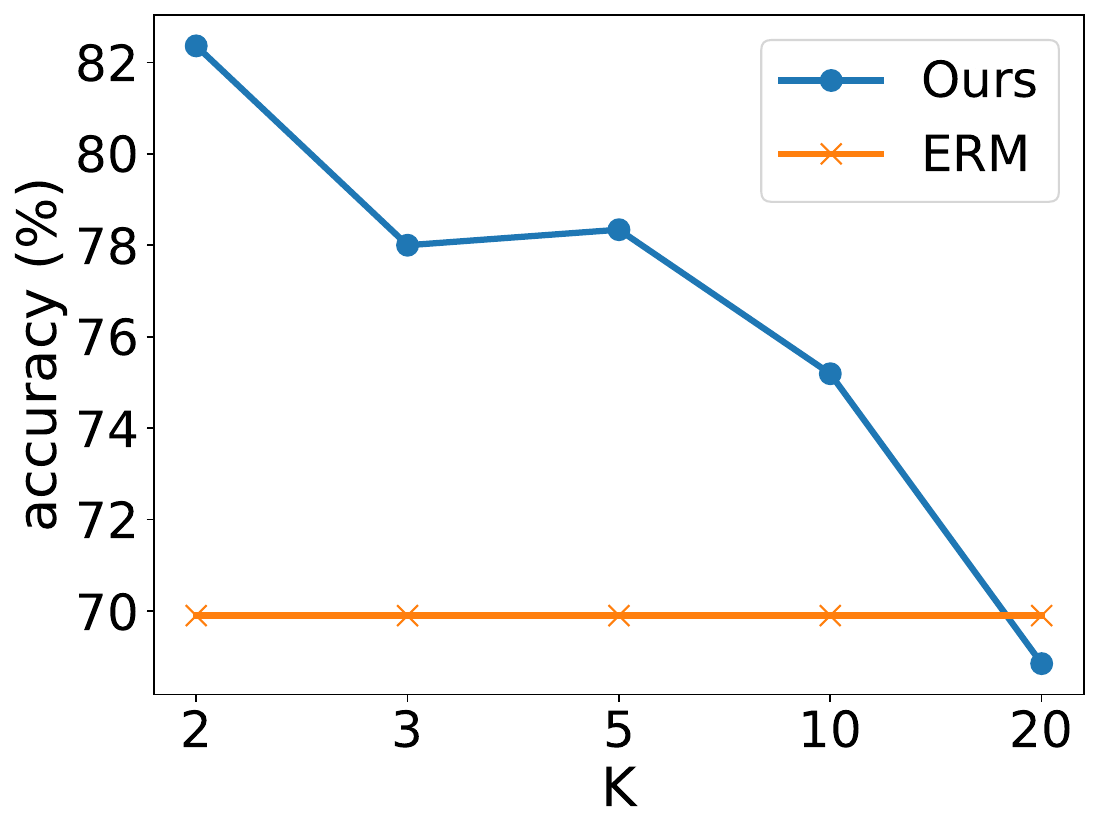}
        \label{fig-sens-k}
    }
    \subfigure[$\lambda_1$]{
        \includegraphics[width=0.22\textwidth]{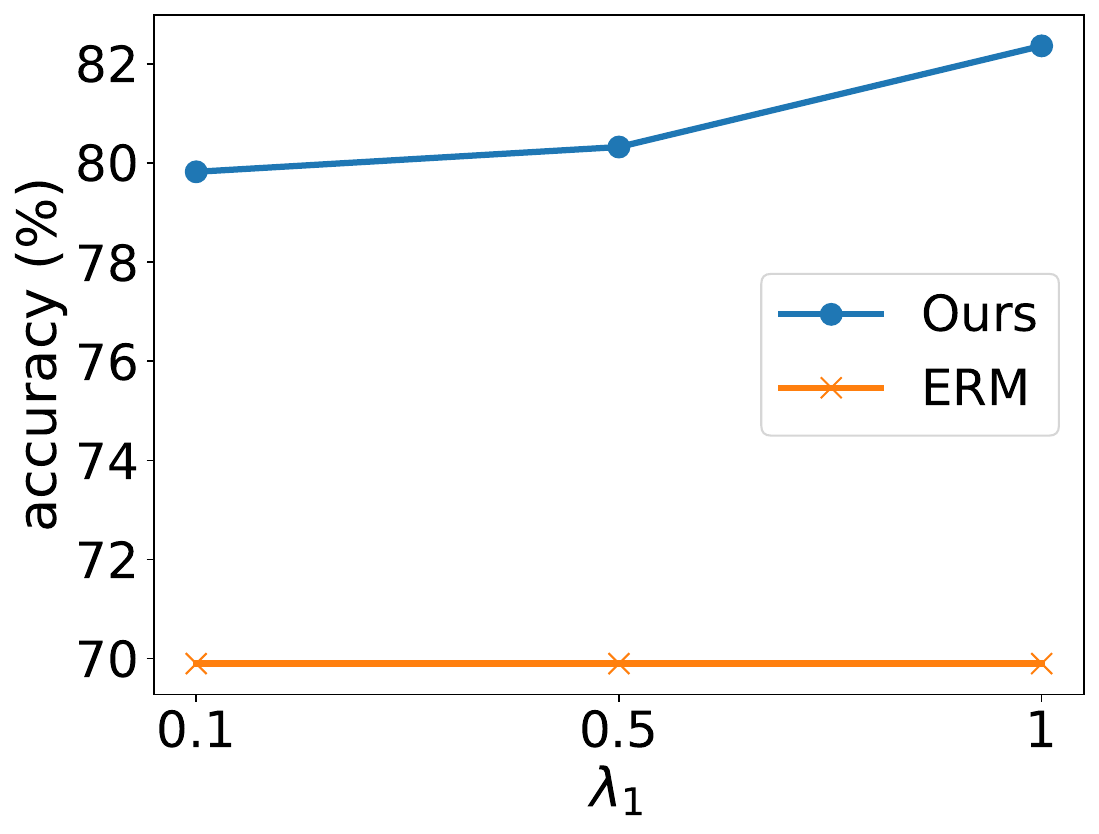}
        \label{fig-sens-l1}
    }  
    \subfigure[$\lambda_2$]{
        \includegraphics[width=0.22\textwidth]{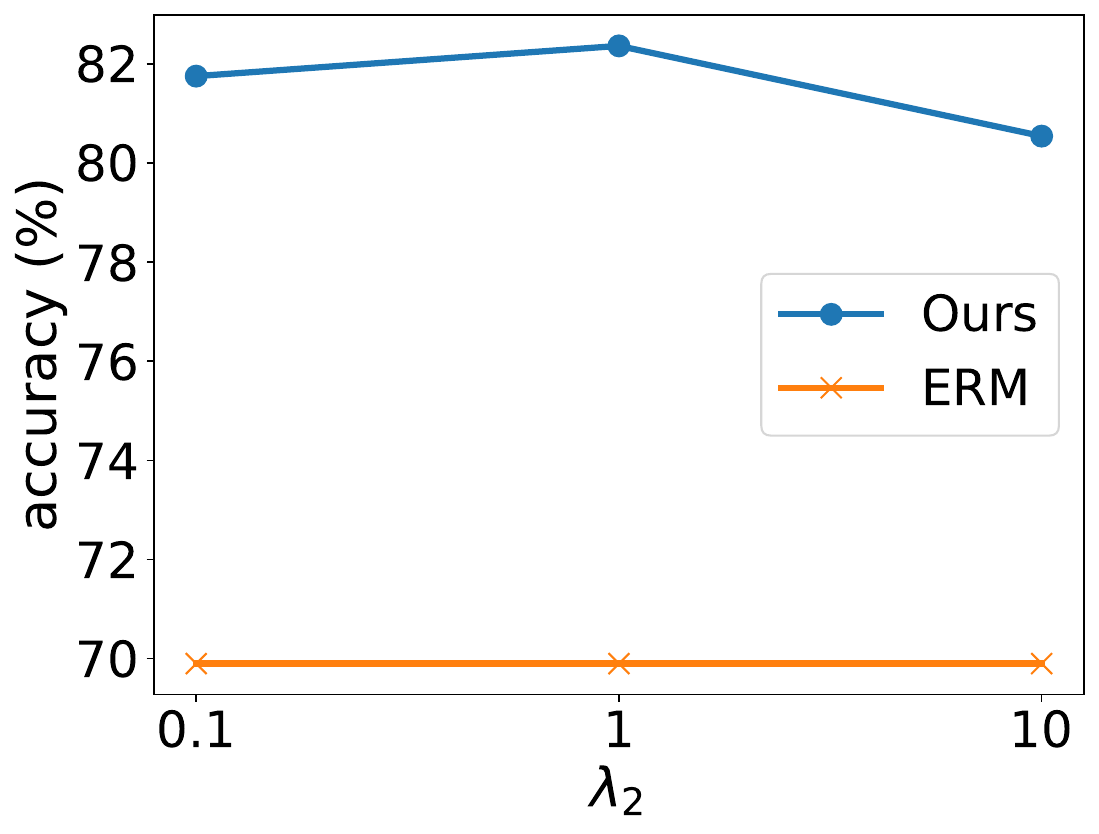}
        \label{fig-sens-l2}
    }
    \subfigure[Local epoch and Round]{
        \includegraphics[width=0.22\textwidth]{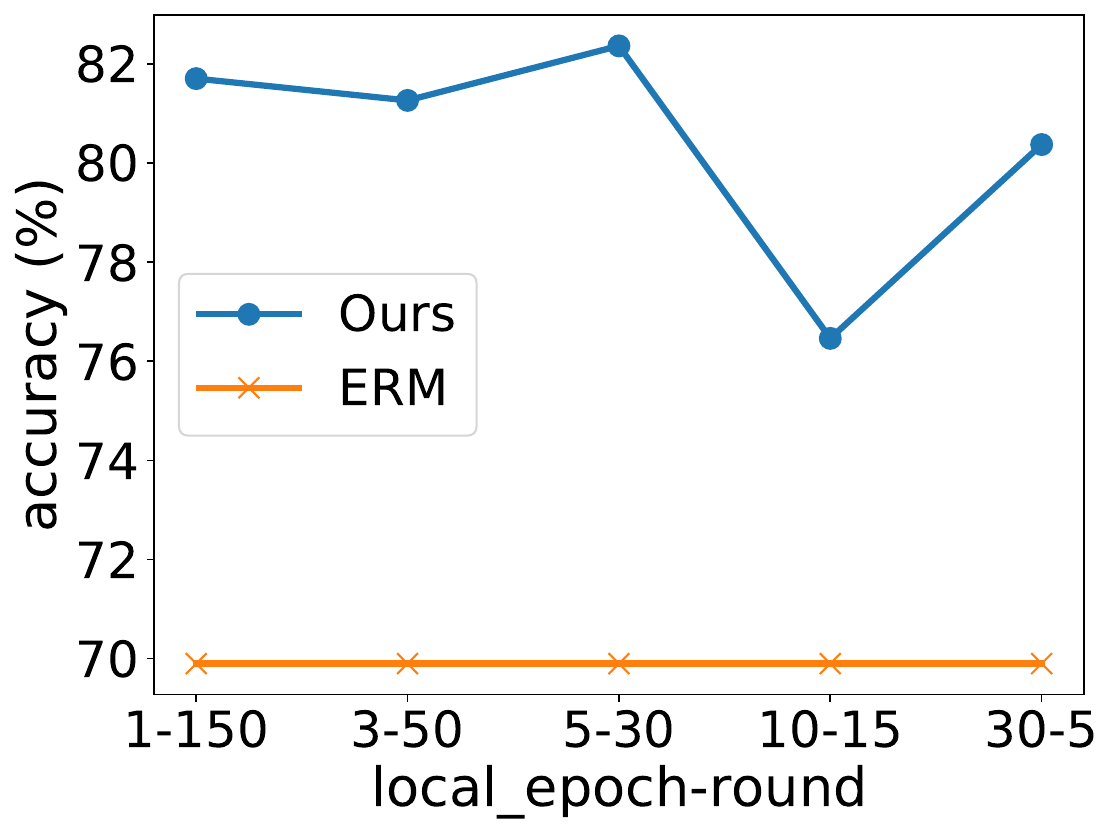}
        \label{fig-sens-epoch}
    }
    \caption{Parameter sensitivity analysis (EMG) for generalization.}
    \label{fig-sens}
\end{figure*}

\begin{figure*}[htbp]
    \centering
    \subfigure[ERM]{
        \includegraphics[width=0.14\textwidth]{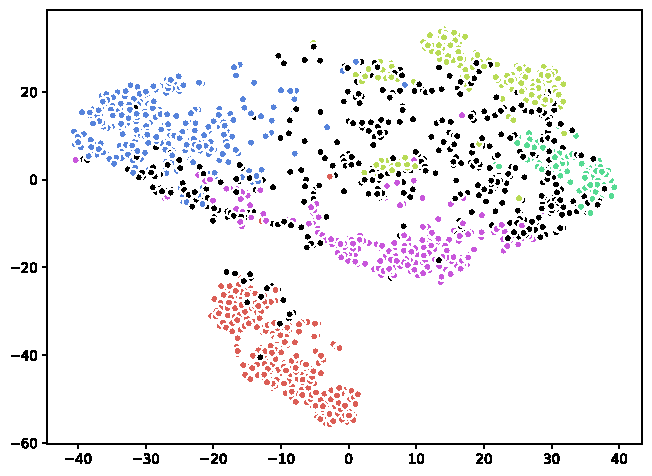}
        \label{fig1-erm-acc}
    }  
    \subfigure[ANDMask]{
        \includegraphics[width=0.14\textwidth]{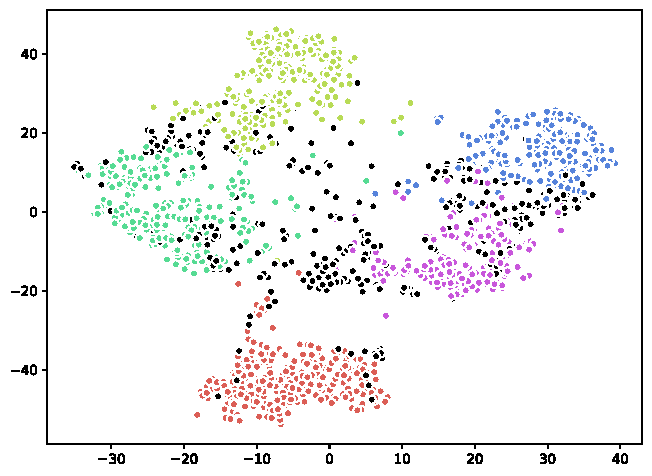}
        \label{fig1-andmask-acc}
    }
    \subfigure[Ours]{
        \includegraphics[width=0.14\textwidth]{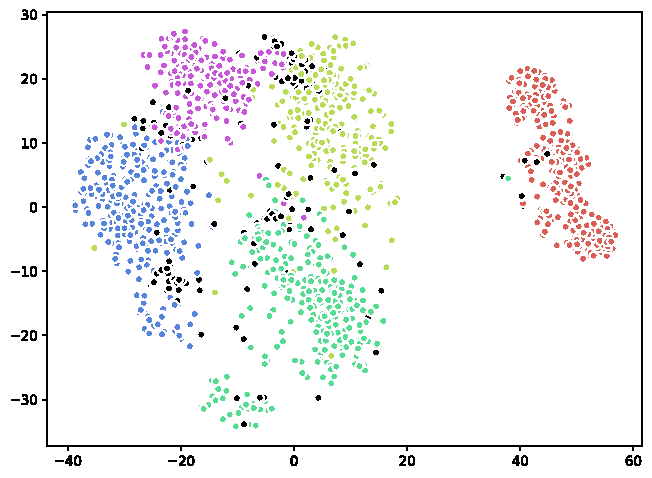}
        \label{fig1-ours-acc}
    }
    \rulesep
    \subfigure[ERM]{
        \includegraphics[width=0.14\textwidth]{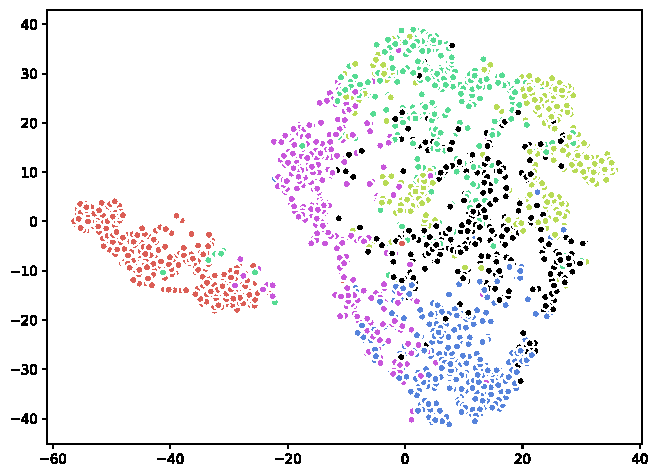}
        \label{fig1-erm-ruc}
    }  
    \subfigure[ANDMask]{
        \includegraphics[width=0.14\textwidth]{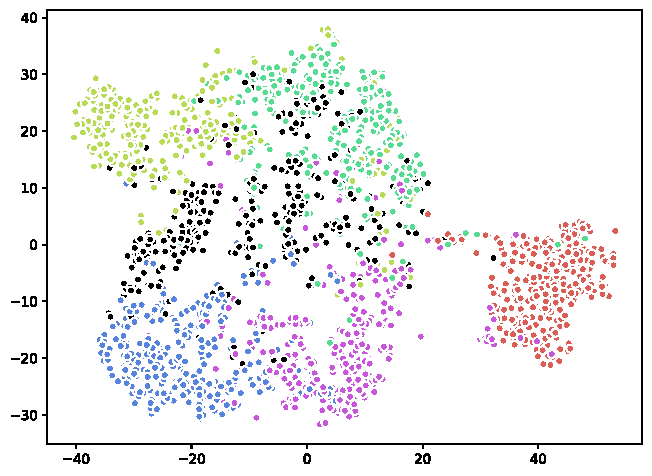}
        \label{fig1-andmask-ruc}
    }
    \subfigure[Ours]{
        \includegraphics[width=0.14\textwidth]{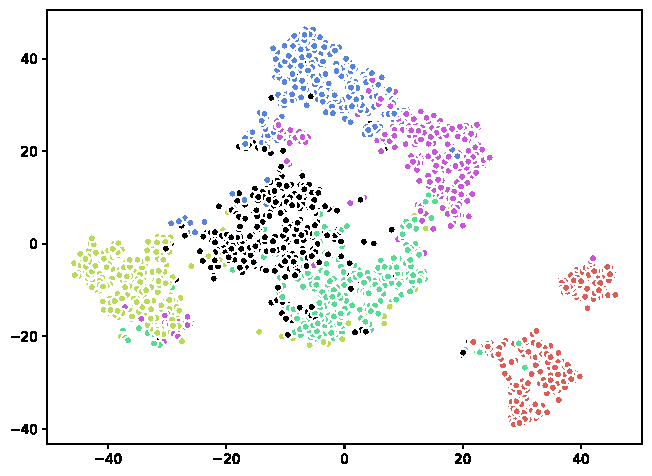}
        \label{fig1-ours-ruc}
    }
    \caption{t-SNE visualization (EMG) for detection. For (a)-(c) black points are misclassified samples. For (d)-(e), black points mean OOD samples.}
    \label{fig1-vis}
\end{figure*}

\begin{figure*}[htbp]
    \centering
    \vspace{-.1in}
    \subfigure[Initial domain split]{
        \includegraphics[width=0.22\textwidth]{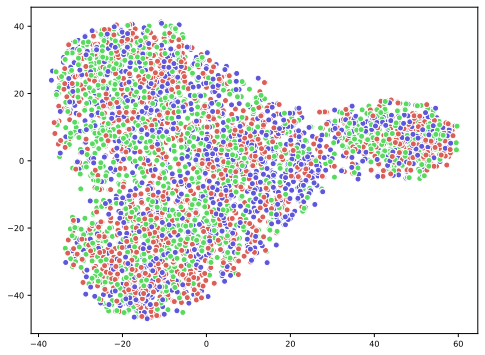}
        \label{fig-vis-emg-i}
    }
    \subfigure[Our domain split]{
        \includegraphics[width=0.22\textwidth]{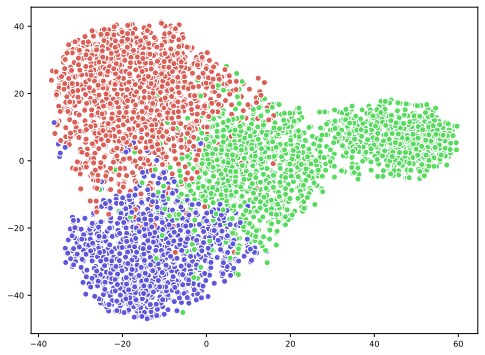}
        \label{fig-vis-emg-o}
    }
    \rulesep
    \subfigure[ANDMask]{
        \includegraphics[width=0.22\textwidth]{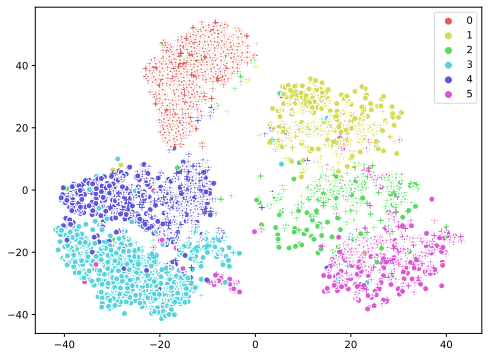}
        \label{fig-vis-andmask}
    }
    \subfigure[Our classification]{
        \includegraphics[width=0.22\textwidth]{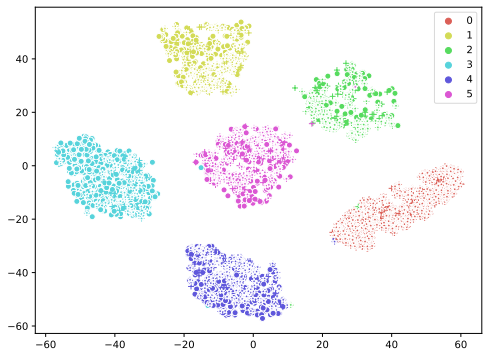}
        \label{fig-vis-ours}
    }
    \caption{t-SNE visualizations for domain splits ((a) (b)) and classification ((c) (d)) on EMG data.}
    \label{fig-visual}
\end{figure*}

\subsection{Parameter sensitivity}
There are mainly four hyperparameters in our method: $K$ which is the number of latent sub-domains, $\lambda_1$ for the adversarial part in step 3, $\lambda_2$ for the adversarial part in step 4, and local epochs and total rounds. 
For fairness, the product of local epochs and total rounds is the same value. We evaluate the parameter sensitivity of our method for detection in \figurename~\ref{fig1-sens} where we change one parameter and fix the other to record the results. From these results, we can obtain the following observations.
1) For ID accuracy, our method achieves better performance in a wide range, demonstrating that our method is robust.
2) For AUROC, our method achieves better performance for a wide range in most situations, demonstrating that our method is robust.
Sometimes, we need to tune hyperparameters for better AUROC carefully.
We also evaluate the parameter sensitivity of \method on OOD generalization and the results are shown in \figurename~\ref{fig-sens}.
From these results, we can see that our method achieves better performance in a wide range, demonstrating that our method is robust.

\subsection{Visualization study}
We present some visualizations to show the rationales of \method.
For detection, \figurename~\ref{fig1-erm-acc} and \ref{fig1-ours-acc} show that \method can learn better margins and generate few misclassifications while \figurename~\ref{fig1-erm-ruc} and \ref{fig1-ours-ruc} show that \method can compact and discriminate OOD samples.
For generalization, data points with different initial domain labels are mixed together in \figurename~\ref{fig-vis-emg-i} while \method can characterize different latent distributions and separate them well in \figurename~\ref{fig-vis-emg-o}.
\figurename~\ref{fig-vis-ours} and \ref{fig-vis-andmask} show that \method can learn better domain-invariant representations compared to the latest method ANDMask.
To sum up, \method can find better representations to enhance generalization.

\begin{figure*}[t!]
    \centering
    \subfigure[Temporal shift]{
\includegraphics[height=0.16\textwidth]{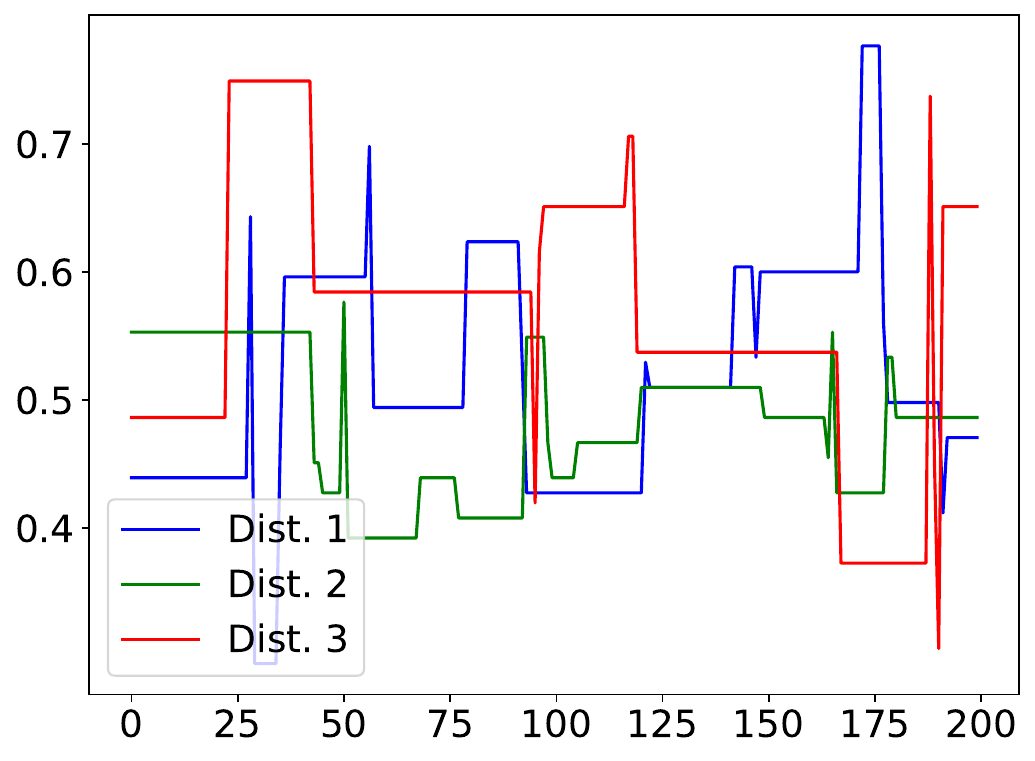}
        \label{fig1-s-one}
    }
    \subfigure[Spatial shift]{
\includegraphics[height=0.16\textwidth]{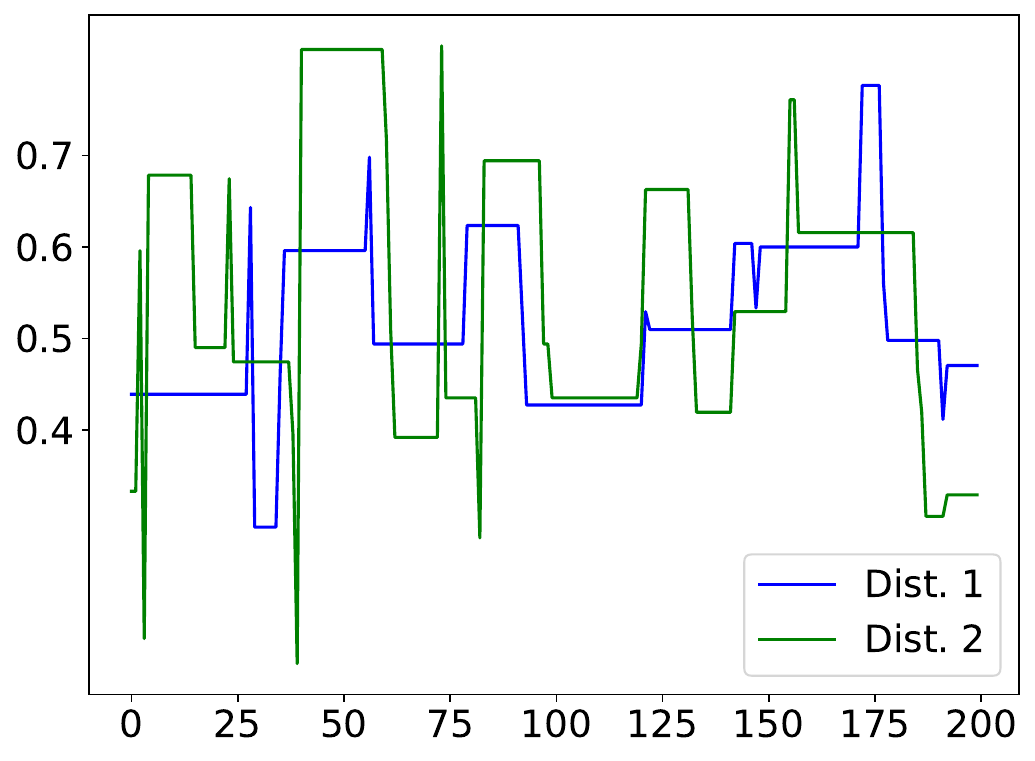}
        \label{fig1-s-mult}
    }  
    \rulesep
    \subfigure[Temporal shift]{
        \includegraphics[height=0.16\textwidth]{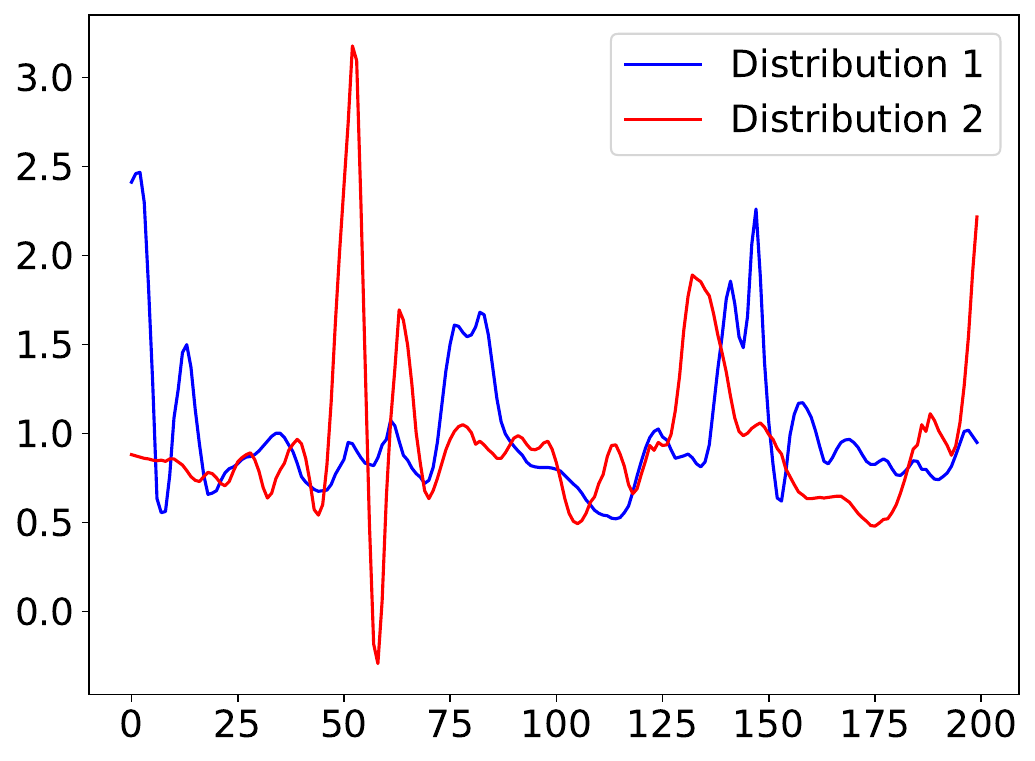}
        \label{fig-s-usc}
    }
    \subfigure[Spatial shift]{
        \includegraphics[height=0.16\textwidth]{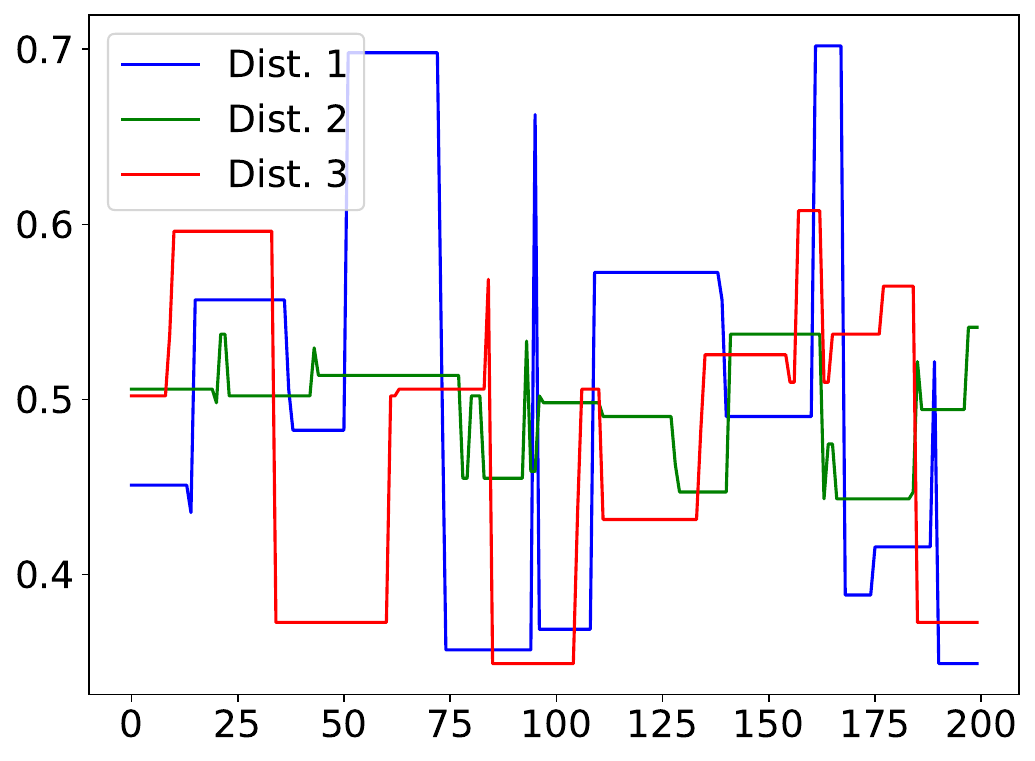}
        \label{fig-s-emg}
    }
    \caption{Latent distributions obtained by our method on two datasets. X-axis is data numbers while Y-axis is its values. (a)(b) are for detection while (c)(d) are for generalization.}
    \label{fig-split-sampe}
    \vspace{-.1in}
\end{figure*}

\subsection{Existence of latent subdomains}
What exactly can our \method learn?
In \figurename~\ref{fig1-s-one}, for the first subject in EMG, there is more than one latent distribution for class wrist extension, showing the existence of temporal distribution shifts: the distribution of the same activity could change.
For spatial distribution shift, \figurename~\ref{fig1-s-mult} on EMG dataset shows that our algorithm found two latent distributions from the EMG data of the first person and the third person.
These results indicate the existence of latent distributions with both temporal and spatial distribution shifts.
For generalization, there exist similar phenomena.

\begin{figure}[t!]
    \centering
    \subfigure[Initial splits]{
\includegraphics[height=0.18\textwidth]{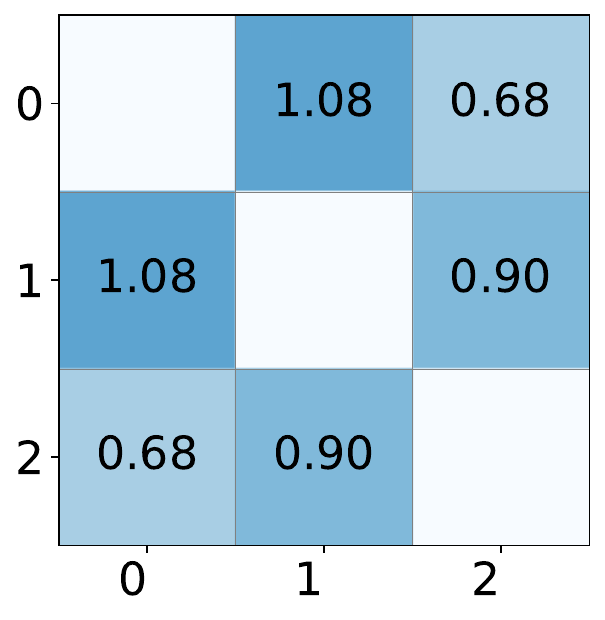}
        \label{fig-p-h}
    }
    \subfigure[Our splits]{
\includegraphics[height=0.18\textwidth]{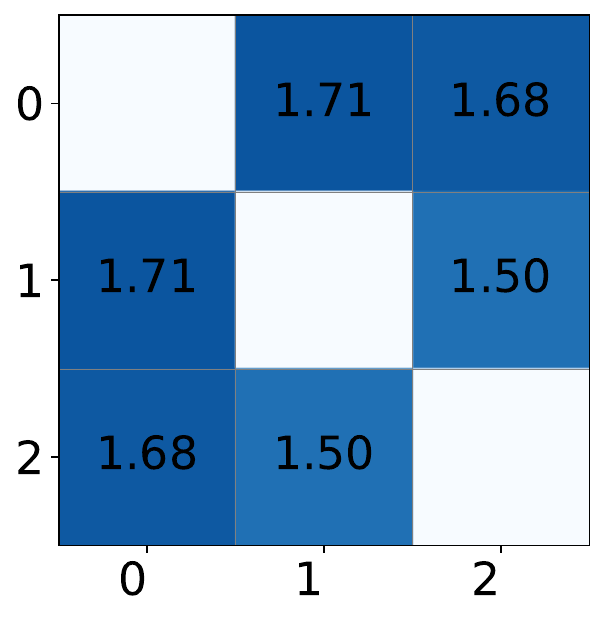}
        \label{fig-p-o}
    }  
    \caption{$\mathcal{H}$-divergence among domains with initial splits and our splits on PAMAP2. Axes are domain numbers.}
    \label{fig-split-div}
    \vspace{-.1in}
\end{figure}

\subsection{Quantitative analysis for `worst-case' distributions}
We present quantitative analysis by computing the $\mathcal{H}$-divergence~\cite{ben2010theory} to show the effectiveness of our `worst-case distribution'.
In \figurename~\ref{fig-p-h} and \ref{fig-p-o}, compared to initial domain splits, latent sub-domains generated by our method have larger $\mathcal{H}$-divergence among each other.
According to \propositionname~\ref{prop:hdist}, larger $\mathcal{H}$-divergence among domains brings better generalization.
This again shows the efficacy of \method in computing the 'worst-case' distribution scenario.

\begin{figure*}[t!]
    \centering
    \subfigure[Extensibility]{
        \includegraphics[height=0.14\textwidth]{./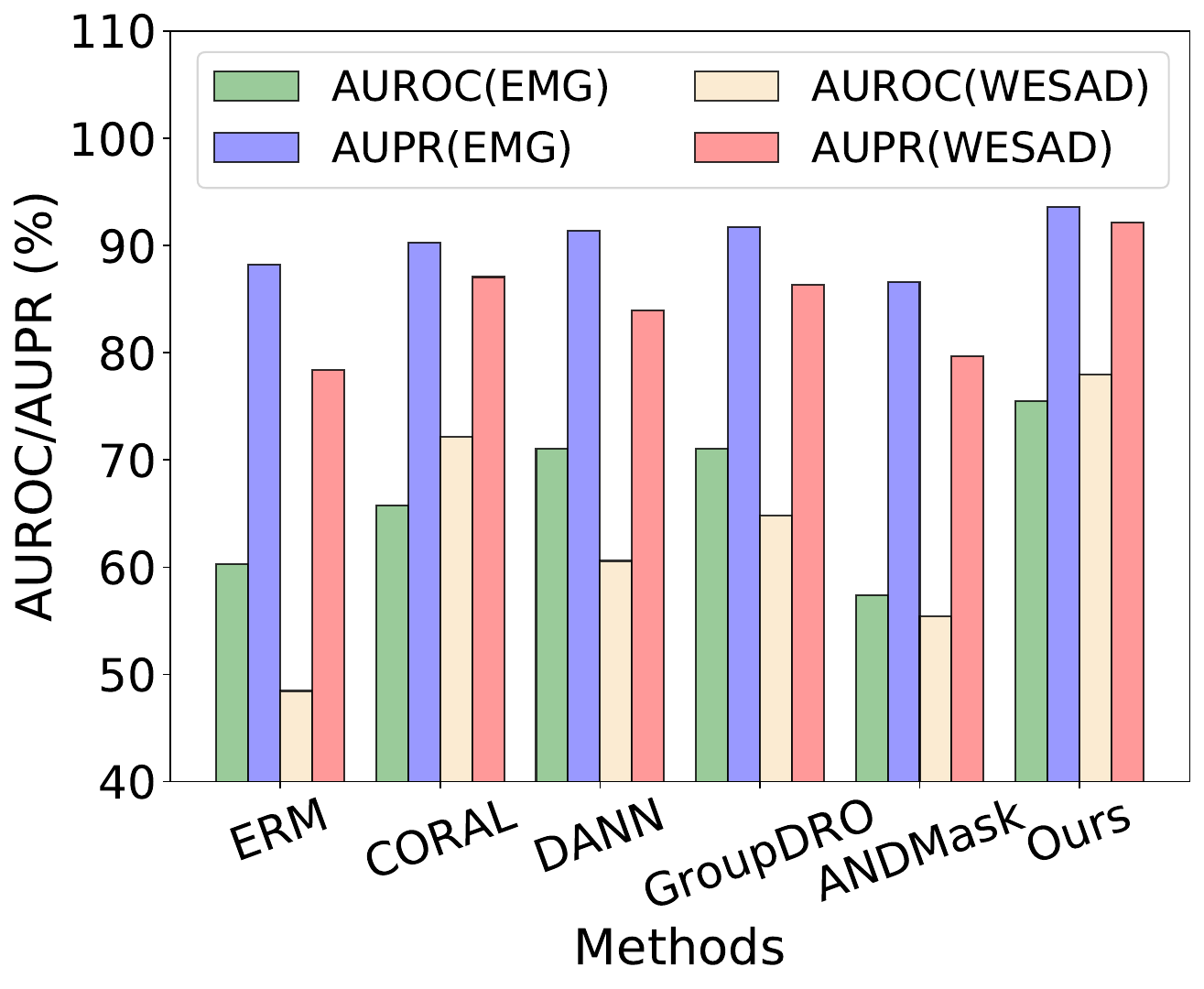}
        \label{fig:odin-new}
    }
    \rulesep
    \subfigure[Different CNNs]{
        \includegraphics[height=0.14\textwidth]{./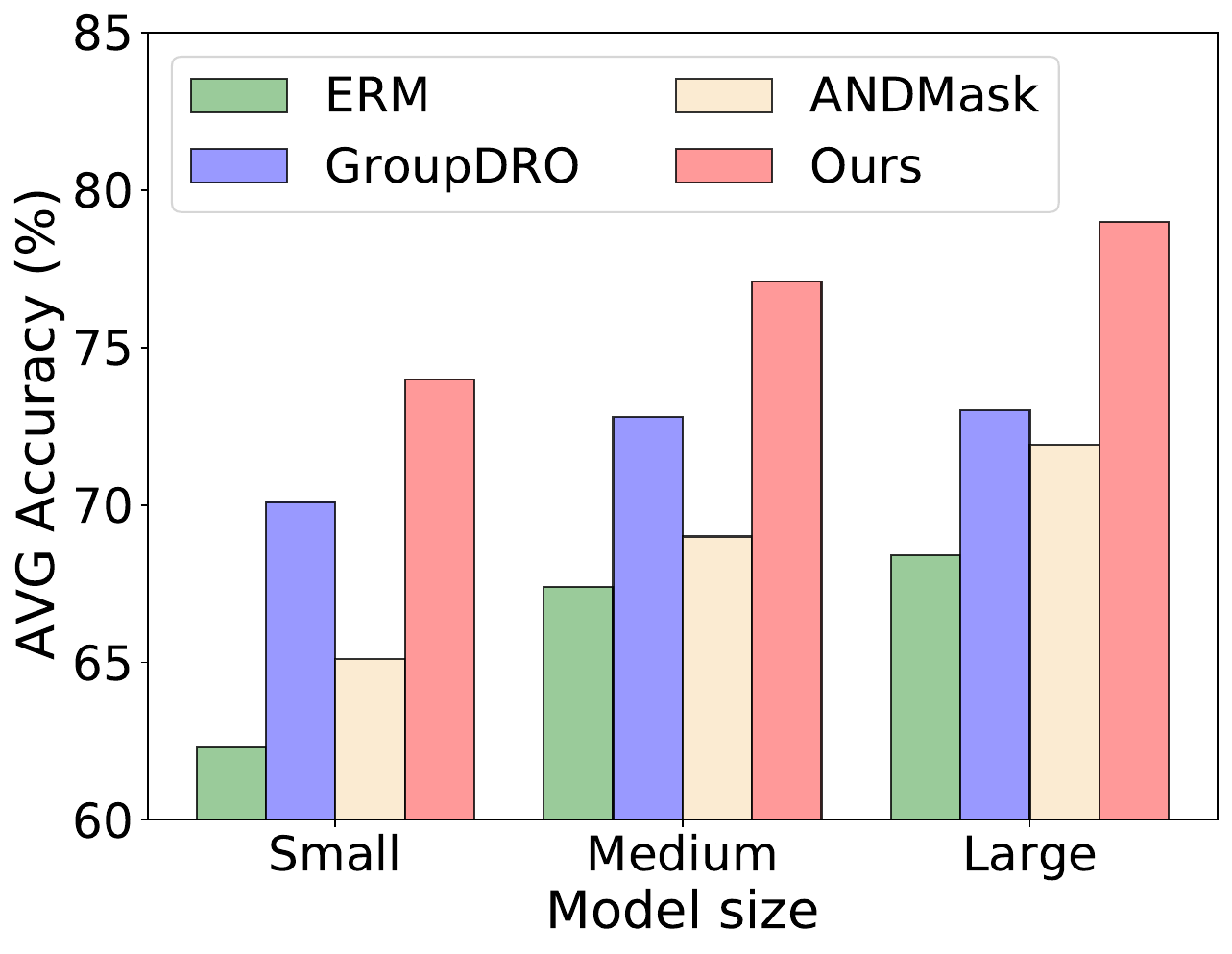}
        \label{fig:msize}
    }
    \subfigure[Transformers]{
        \includegraphics[height=0.14\textwidth]{./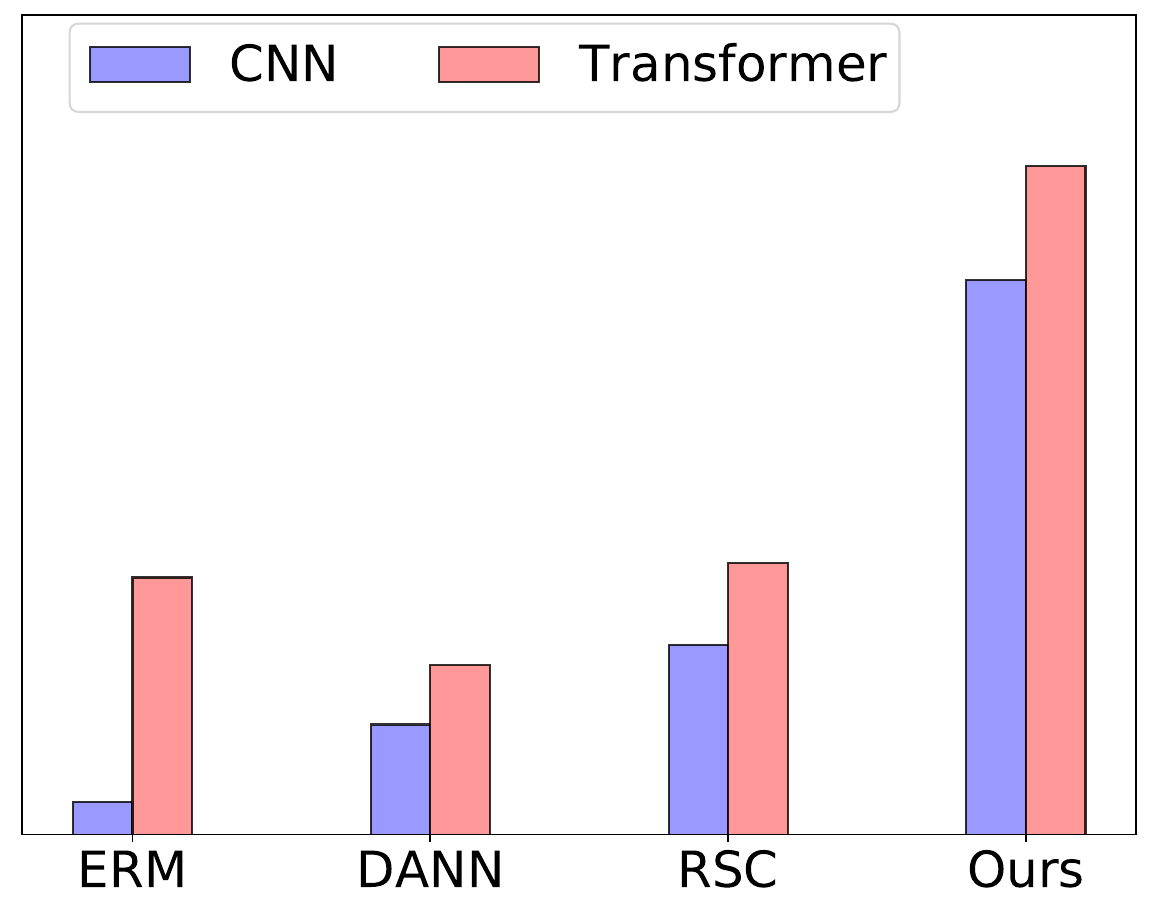}
        \label{fig-transform-xdata}
    }
    \rulesep
    \subfigure[Time complexity]{
    \includegraphics[height=0.14\textwidth]{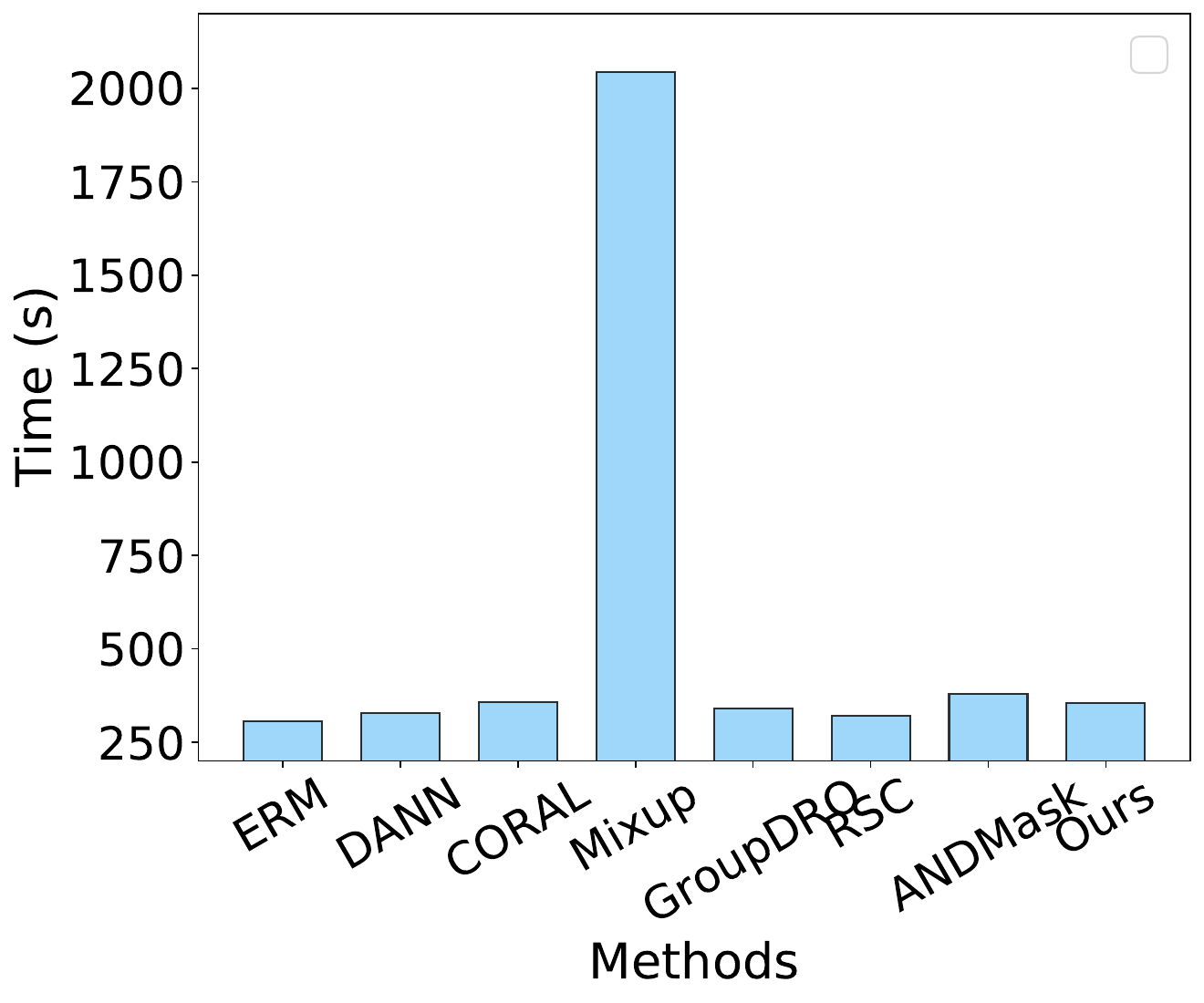}
    \label{fig-time}
    }
    \subfigure[Convergence]{
    \includegraphics[height=0.14\textwidth]{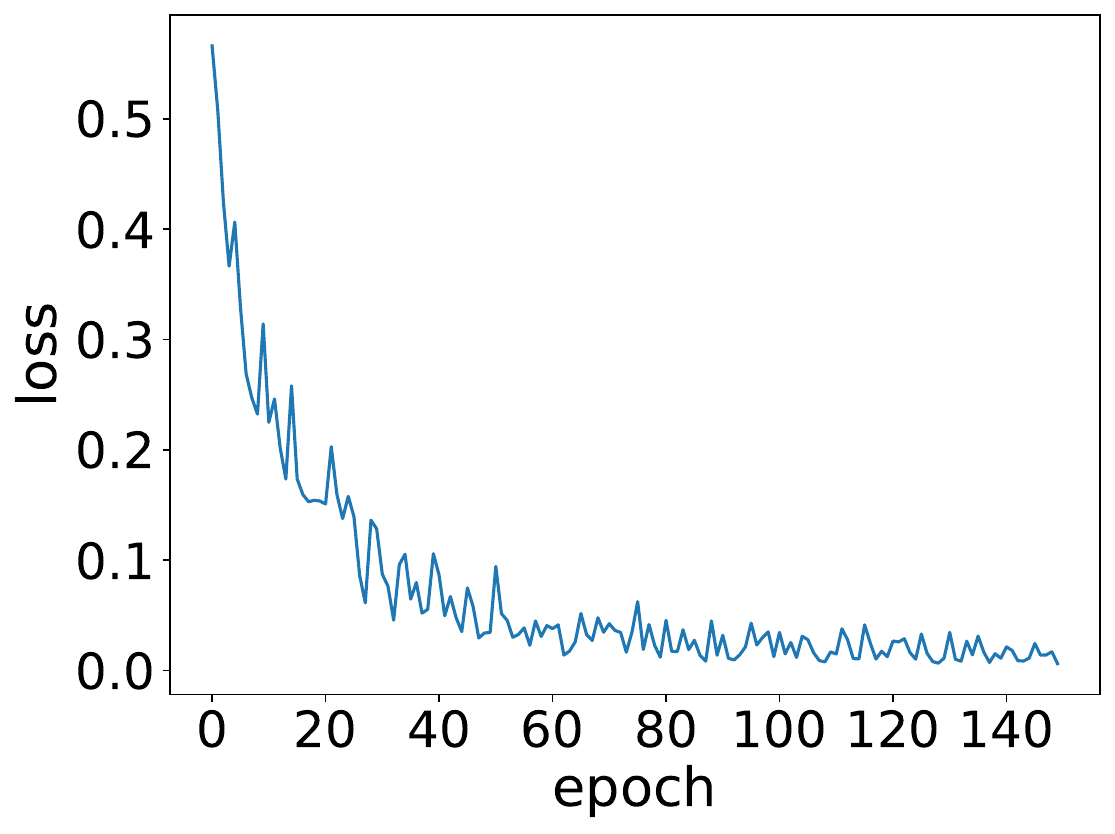}
    \label{fig-conver}
    }
    \label{fig-time-conv}
    \caption{Experimental analysis. (a) Extensibility on EMG and WESAD. (b)-(c) Results with different backbones on EMG and X-dataset. (d)-(e) Time complexity and convergence.  }
    \label{fig-analy-rest}
\end{figure*}

\subsection{Extensibility}
To demonstrate that our method is extensible, we also provide implementations with ODIN~\cite{liang2018enhancing}.
The OOD detection results are shown in \figurename~\ref{fig:odin-new}. 
We have the following observations.
1) Our method still achieves the best average AUROC and AUPR on both EMG and WESAD, which demonstrates the superiority of \method.  
2) Implementations with ODIN perform similarly to implementations with MCP, since both ODIN and MCP are dependent on models' predictions.
For some datasets, e.g. EMG, implementations with ODIN bring improvements compared to MCP but for some datasets, e.g. WESAD, implementations with ODIN perform worse.
Therefore, for specific applications, we need to select the best detection techniques.

\subsection{Varying backbones} 
For generalization, we attempt to demonstrate that \method is robust to varying backbones.
\figurename~\ref{fig:msize} shows the results using small, medium, and large backbones, respectively (we implement them with different numbers of layers.).
Results indicate that larger models tend to achieve better OOD generalization performance.
Our method outperforms others in all backbones, showing that \method presents consistently strong OOD performance in different architectures. 
We also try Transformer~\cite{vaswani2017attention} as the backbone for comparisons.
As shown in \cite{zhang2022delving}, Transformer often has a better generalization ability compared to CNN, which implies improving with Transformer is more difficult.
From \figurename~\ref{fig-transform-xdata}, we can see that each method with Transformer has a remarkable improvement on the first task of cross-dataset.
Compared to ERM, DANN and RSC have no improvements but ours still has further improvements and achieves the best performance.
DANN even performs worse than ERM, which demonstrates the importance of more accurate sub-domain labels.
Overall, for all architectures, our method achieves the best performance.

\subsection{Time Complexity and Convergence Analysis}
\label{ssec-app-timecon}
We also provide some analysis on time complexity and convergence.
Since we only optimize the feature extractor in Step 2, our method does not cost too much time. And the results in \figurename~\ref{fig-time} prove this argument empirically.
The convergence results are shown in \figurename~\ref{fig-conver}.
Our method is convergent.
Although there are some little fluctuations, these fluctuations exist widely in all domain generalization methods due to different distributions of different samples.

\section{Limitation and Discussion}
\label{sec-limit}

\method could be more perfect by pursuing the following avenues.
1) Estimate the number of latent distributions $K$ automatically: we currently treat it as a hyperparameter.
2) Seek the semantics behind latent distributions: can adding more human knowledge obtain better latent distributions?
3) Extend \method beyond detection and generalization but for forecasting problems.

Moreover, we argue that dynamic distributions not only exist in time series but also in general machine learning data such as images and text~\cite{deecke2022visual,xu2023domain}.
Thus, it is of great interest to apply our approach to these domains to further improve their performance.

\section{Conclusion}
\label{sec-conl}
We proposed \method, a universe framework, to learn generalized representation for time series detection and generalization.
\method employs an adversarial game that maximizes the `worst-case' distribution scenario while minimizing their distribution divergence. 
We provide \methodmal and \methodmcp via representations and predictions respectively for detection while we directly utilize \method for generalization.
We demonstrated its effectiveness in different applications.
We are surprised that one dataset can contain several latent distributions.
Characterizing such latent distributions will greatly improve the generalization performance on unseen datasets.

\bibliographystyle{IEEEtran}
\bibliography{IEEEabrv,refs}

\end{document}